\newtheorem{proposition}{Proposition}
\newtheorem{theorem}{Theorem}
\newtheorem{definition}{Definition}
\newtheorem*{proposition1}{Proposition~\ref{proposition1}}
\newtheorem*{theorem1}{Theorem~\ref{theorem1}}
\newtheorem*{theorem2}{Theorem~\ref{theorem2}}
\newcommand{\norm}[1]{\left\lVert#1\right\rVert}
\def\BibTeX{{\rm B\kern-.05em{\sc i\kern-.025em b}\kern-.08em
    T\kern-.1667em\lower.7ex\hbox{E}\kern-.125emX}}
\begin{document}
\doi{10.1109/ACCESS.2022.3171741}

\title{Fisher Task Distance and Its Application in Neural Architecture Search}
\author{\uppercase{Cat P. Le},
\uppercase{Mohammadreza Soltani, Juncheng Dong, Vahid Tarokh}.}
\address{Department of Electrical and Computer Engineering, Duke University, Durham, NC 27708 USA }
\tfootnote{This work was supported in part by the Army Research Office grant No. W911NF-15-1-0479.}

\markboth
{Cat P. Le \headeretal: Fisher Task Distance and Its Application in Neural Architecture Search}
{Cat P. Le \headeretal: Fisher Task Distance and Its Application in Neural Architecture Search}

\corresp{Corresponding author: Cat P. Le (e-mail: cat.le@duke.edu).}

\begin{abstract}
We formulate an asymmetric (or non-commutative) distance between tasks based on Fisher Information Matrices, called Fisher task distance. This distance represents the complexity of transferring the knowledge from one task to another. We provide a proof of consistency for our distance through theorems and experiments on various classification tasks from MNIST, CIFAR-10, CIFAR-100, ImageNet, and Taskonomy datasets. Next, we construct an online neural architecture search framework using the Fisher task distance, in which we have access to the past learned tasks. By using the Fisher task distance, we can identify the closest learned tasks to the target task, and utilize the knowledge learned from these related tasks for the target task. Here, we show how the proposed distance between a target task and a set of learned tasks can be used to reduce the neural architecture search space for the target task. The complexity reduction in search space for task-specific architectures is achieved by building on the optimized architectures for similar tasks instead of doing a full search and without using this side information. Experimental results for tasks in MNIST, CIFAR-10, CIFAR-100, ImageNet datasets demonstrate the efficacy of the proposed approach and its improvements, in terms of the performance and the number of parameters, over other gradient-based search methods, such as ENAS, DARTS, PC-DARTS. 
\end{abstract}

\begin{keywords}
Task Affinity, Fisher Information Matrix, Neural Architecture Search
\end{keywords}

\titlepgskip=-15pt

\maketitle

\section{Introduction}
\label{sec:introduction}
\PARstart{T}{his} paper is motivated by a common assumption made in transfer and lifelong learning~\cite{silver2008guest,  finn2016deep, mihalkova2007mapping, niculescu2007inductive, luolabel, Razavian:2014:CFO:2679599.2679731, 5288526, DBLP:journals/corr/abs-1801-06519, DBLP:journals/corr/FernandoBBZHRPW17, DBLP:journals/corr/RusuRDSKKPH16, zamir2018taskonomy}: similar tasks usually have similar neural architectures and shares common knowledge. In multi-task learning ~\cite{standley2020tasks}, it also found that utilizing the shared knowledge of related tasks can boost the performance of training. However, if the non-related tasks were trained together, the overall performance degrades significantly. Up until now, in order to identify the closeness of tasks, people often depend on the domain knowledge, or brute-force approach with transfer learning. Building on this observation, we propose a novel task metric that is easy and efficient to compute, and represents the complexity of transferring the knowledge of one task to another, called \emph{Fisher Task} distance (FTD). This distance is a non-commutative measure by design, since transferring the knowledge of a comprehensive task to a simple one is much easier than the other way around. FTD is defined in terms of the Fisher Information matrix defined as the second-derivative of the loss function with respect to the parameters of the model under consideration. By definition, FTD is always greater or equal to zero, where the equality holds if and only if it is the distance from a task to itself. To show that our task distance is mathematically well-defined, we provide several theoretical analysis. Moreover, we empirically verify that the FTD is statistically a consistent distance through experiments on numerous tasks and datasets, such as MNIST, CIFAR-10, CIFAR-100, ImageNet, and Taskonomy~\cite{zamir2018taskonomy}. In particular, Taskonomy dataset indicate our computational efficiency while achieving similar results in term of distance between tasks as the brute-force approach proposed by~\cite{zamir2018taskonomy}. Next, we instantiate our proposed task distance on Neural Architecture Search (NAS). In the traditional NAS~\cite{liu2018darts,liu2018progressive, liu2017hierarchical, zoph2016neural, 9091879}, the search for the architecture of the target task often starts from scratch, with some prior knowledge about the dataset to initialize the search space. The past learned tasks are not considered as the useful knowledge for the target task. Being motivated by the advantages of transfer learning, we would like to apply the knowledge from previous learned tasks to the target task to remove the dependency on the domain knowledge. Here, we construct a continuous NAS framework using our proposed task distance, that is capable of learning an appropriate architecture for a target task based on its similarity to other learned tasks. For a target task, the closest task in a given set of baseline tasks is identified and its corresponding architecture is used to construct a neural search space for the target task without requiring prior domain knowledge. Subsequently, our gradient-based search algorithm called FUSE~\cite{le2021task} is applied to discover an appropriate architecture for the target task. Briefly, the utilization of the related tasks' architectures helps reduce the dependency on prior domain knowledge, consequently reducing the search time for the final architecture and improving the robustness of the search algorithm. Extensive experimental results for the classification tasks on MNIST~\cite{lecun2010mnist}, CIFAR-10, CIFAR-100~\cite{krizhevsky2009learning}, and ImageNet~\cite{ILSVRC15} datasets demonstrate the efficacy and superiority of our proposed approach compared to the state-of-the-art approaches. 

In this paper, we provide the relevant works in transfer learning and neural architecture search literature in Section~\ref{related}. Next, we introduce the definition of tasks, the Fisher task distance, and theoretical analysis in Section~\ref{approach}. The continuous NAS framework, called Task-aware Neural Architecture Search, is proposed in Section~\ref{sec:nas}. Lastly, we provide the experimental studies of our distance and its application in NAS in Section~\ref{sec:experiment}.


\section{Related Works} \label{related}

The task similarity has been mainly considered in the transfer learning (TL) literature. Similar tasks are expected to have similar architectures as manifested by the success of applying transfer learning in many applications~\cite{silver2008guest,  finn2016deep, mihalkova2007mapping, niculescu2007inductive, luolabel, Razavian:2014:CFO:2679599.2679731, 5288526, DBLP:journals/corr/abs-1801-06519, DBLP:journals/corr/FernandoBBZHRPW17, DBLP:journals/corr/RusuRDSKKPH16, zamir2018taskonomy}. 
However, the main goal in TL is to transfer trained weights from a related task to a target task. Recently, a measure of closeness of tasks based on the Fisher Information matrix has been used as a regularization technique in transfer learning~\cite{chen2018coupled} and continual learning~\cite{kirkpatrick2017overcoming} to prevent catastrophic forgetting. Additionally, the task similarity has also been investigated between visual tasks in~\cite{zamir2018taskonomy, pal2019zeroshot, dwivedi2019, achille2019task2vec, wang2019neural, pmlr-v119-standley20a}. These works only focus on weight-transfer and do not utilize task similarities for discovering the high-performing architectures. Moreover, the introduced measures of similarity from these works are often assumed to be symmetric which is not typically a realistic assumption. For instance, consider learning a binary classification between cat and dog images in the CIFAR-10 dataset. It is easier to learn this binary task from a pre-trained model on the entire CIFAR-10 images with 10 classes than learning the 10-class classification using the knowledge of the binary classification in CIFAR-10. 

In the context of the neural architecture search (NAS), the similarity between tasks has not been explicitly considered. Most NAS methods focus on reducing the complexity of search by using an explicit architecture search domain and specific properties of the given task at hand. NAS techniques have been shown to offer competitive or even better performance to those of hand-crafted architectures. In general, these techniques include approaches based on evolutionary algorithms~\cite{real2018regularized, 9336721}, reinforcement learning~\cite{zoph2016neural, 9091879},  and optimization-based approaches~\cite{liu2018progressive}. However, most of these NAS methods are computationally intense and require thousands of GPU-days operations. To overcome the computational issues and to accelerate the search procedure, recently, differentiable search methods~\cite{cai2018proxylessnas,liu2018darts,noy2019asap,luo2018neural,xie2018snas, wan2020fbnetv2, awad2020differential, he2020milenas,8681706} have been proposed. These methods, together with random search methods~\cite{li2018massively,li2020random,sciuto2019evaluating} and sampling sub-networks from one-shot super-networks~\cite{zoph2018learning, bender2018understanding, li2020random,cho2019one, yang2020cars}, can significantly speed up the search time in the neural architecture space. Additionally, some reinforcement learning methods with weight-sharing~\cite{pham2018efficient, DBLP:journals/corr/abs-2101-07415, bender2020can}, similarity architecture search~\cite{le2021task, Nguyen2020_cbc}, neural tangent kernel~\cite{chen2021neural}, network transformations~\cite{cai2018efficient,elsken2018efficient, jin2018efficient, hu2019efficient}, and few-shot approaches~\cite{cai2019once, zhou2020econas, zhang2020overcoming, zhao2020few} have yielded time-efficient NAS methods. None of these approaches consider the task similarities in their search space.
In contrast, our approach exploits asymmetric relation between tasks to reduce the search space and accelerate the search procedure.

\section{Fisher Task Distance} \label{approach}

Before discussing the task distance, we recall the definition of the Fisher Information matrix for a neural network. 

\begin{definition}[Fisher Information Matrix]
Let $N$ be a neural network with data $X$, weights $\theta$, and the negative log-likelihood loss function $L(\theta) := L(\theta,X)$. The Fisher Information Matrix is defined as follows:
\begin{align}\label{Fihermatrix}
    F(\theta) =\mathbb{E}\Big[\nabla_{\theta} L(\theta)\nabla_{\theta} L(\theta)^T\Big] = -\mathbb{E}\Big[\mathbf{H}\big(L(\theta)\big)\Big],
\end{align}
where $\mathbf{H}$ is the Hessian matrix, i.e., $\mathbf{H}\big(L(\theta)\big)= \nabla_{\theta}^2L(\theta)$, and expectation is taken w.r.t the distribution of data.
\end{definition}

We use the empirical Fisher Information Matrix computed as follows:
\begin{align}\label{emprical_fisher}
    \hat{F}(\theta) = \frac{1}{|X|}\sum_{i\in X} \nabla_{\theta} L^i(\theta)\nabla_{\theta} L^i(\theta)^T,
\end{align}
where $L^i(\theta)$ is the loss value for the $i$\textsuperscript{th} data point in $X$~\footnote{We use $F$ instead of $\hat{F}$ onward for the notation simplicity.}.

Consider a dataset $X=X^{(1)}\cup X^{(2)}$ with $X^{(1)}$ and $X^{(2)}$ denote the training and the test data, respectively. Let's denote a task $T$ and its corresponding dataset $X$ jointly by a pair $(T, X)$. Also, let $\mathcal{P}_{N}((T,X^{(2)}))\in [0,1]$ be a function that measures the performance of a given architecture $N$ on a task $T$ using the test dataset $X^{(2)}$. 

\begin{definition}[$\varepsilon$-approximation Network for Task $T$]
An architecture $N$ is called an $\varepsilon$-approximation network for task $T$ and the corresponding data $X$ if it is trained using training data $X^{(1)}$ such that $\mathcal{P}_{N}(T,X^{(2)}) \geq 1 - \varepsilon$, for a given $0 < \varepsilon < 1$.
\end{definition}

In practice, architectures for $\varepsilon$-approximation networks for a given task $T$ are selected from a pool of well-known hand-designed architectures. 

\begin{algorithm}[t]
\SetKwInput{KwInput}{Input}         
\SetKwInput{KwOutput}{Output}       
\SetKwFunction{TaskDistance}{TaskDistance}
\DontPrintSemicolon

\KwData{$X_a = \{X_a^{(1)} \cup X_a^{(2)}\}, X_b = \{X_b^{(1)} \cup X_b^{(2)}\}$}
\KwInput{$\varepsilon$-approx. network $N$}
\KwOutput{Distance from task $a$ to task $b$}

    \SetKwProg{Fn}{Function}{:}{}
    \Fn{\TaskDistance{$X_a^{(1)},X_b, N$}}{
        Initialize $N_a, N_b$ from N\;
        Train $N_a$ using $X_a^{(1)}$, $N_b$ using $X_b^{(1)}$\;
        Compute $F_{a,b}$ (equation~\ref{emprical_fisher}) using $X_b^{(2)}$ on $N_a$\;
        Compute $F_{b,b}$ (equation~\ref{emprical_fisher}) using $X_b^{(2)}$ on $N_b$\;
        
        \KwRet $\displaystyle d[a, b] = \frac{1}{\sqrt{2}} \norm{F_{a,b}^{1/2} - F_{b,b}^{1/2}}_F$\;
    }
    
\caption{Fisher Task Distance}
\label{alg1}
\end{algorithm}

\begin{definition}[Fisher Task Distance]\label{frechdist}
Let $a$ and $b$ be two tasks with $N_a$ and $N_b$ denote their corresponding $\varepsilon$-approximation networks, respectively. Let $F_{a, b}$ be the Fisher Information Matrix of $N_a$ with the dataset $X_b^{(2)}$ from the task $b$, and $F_{b, b}$ be the Fisher Information Matrix of $N_b$ with the dataset $X_b^{(2)}$ from the task $b$. We define the FTD from the task $a$ to the task $b$ based on Fr\'echet distance as follows:
\begin{align}\label{frechet_org}
    d[a, b] = \frac{1}{\sqrt{2}} \textbf{Tr} \Big(F_{a,b} + F_{b,b} - 2(F_{a,b}F_{b,b})^{1/2} \Big)^{{1/2}},
\end{align}
where $\textbf{Tr}$ denotes the trace of a matrix.
\end{definition}
In this paper, we use the diagonal approximation of the Fisher Information matrix since computing the full Fisher matrix is prohibitive in the huge space of neural network parameters. We also normalize them to have a unit trace. As a result, the FTD in~\eqref{frechet_org} can be simplified as follows:
\begin{align}\label{distance}
    d[a, b] &= \frac{1}{\sqrt{2}} \norm{F_{a,b}^{1/2} - F_{b,b}^{1/2}}_F\nonumber \\
    & =\frac{1}{\sqrt{2}} \bigg[ \sum_i \Big( (F^{ii}_{a,b})^{1/2} - (F^{ii}_{b,b})^{1/2} \Big)^2 \bigg]^{1/2},
\end{align}
where $F^{ii}$ is the $i$\textsuperscript{th} diagonal entry of the Fisher Information matrix. The procedure to compute the FTD is given by Algorithm~\ref{alg1}. The FTD ranges from $0$ to $1$, with the distance $d=0$ denotes a perfect similarity and the distance $d=1$ indicates a perfect dissimilarity. As equation (\ref{distance}) shows, the FTD is \emph{asymmetric}. This aligns with human's common sense that it is often easier to transfer knowledge of a complex task to a simple task than vice versa. Note that the FTD depends on the choice of the $\varepsilon$-approximation networks. That is, by using different network architectures to represent the tasks, the computed task distance can be different. This is similar to the human being's perception: Two people can provide different values of the distance between two tasks. However, it is not likely that their perceptions are different.
For instance, people can give different values on the similarity between cat and tiger, but they agree that both cat and tiger are much different from car or plane.
In the first part of our experiments, we empirically show this intuition and illustrate that although the computed distances may be different due to the choice of $\varepsilon$-approximation networks, the trend of these distances remains consistent across different architectures. 
\begin{definition}[Structurally-Similar $\varepsilon$-approximation Networks w.r.t. $(T,X)$]\label{SSappNetTX}
Two $\varepsilon$-approximation networks $N_1$ and $N_2$ are called structurally-similar  w.r.t. $(T,X)$ if they have exact architecture (the same number of units, the same number of layers, etc), and they are trained on task $T$ using the training dataset $X^{(1)}$.
\end{definition}

Next, we present some theoretical justification for our measure of task similarity. All the proofs are provided in the appendix. Firstly, if we train any pair of structurally-similar $\varepsilon$-approximation networks w.r.t some target $(T,X)$ with the same conditions (e.g., initialization, batch order), the FTD between this pair of networks using the test dataset $X^{(2)}$ is zero. Formally, we have the following proposition:

\begin{proposition}\label{proposition1}
Let $X$ be the dataset for the target task $T$. For any pair of structurally-similar $\varepsilon$-approximation networks w.r.t $(T,X)$ using the full or stochastic gradient descent algorithm with the same initialization settings, learning rate, and the same order of data batches in each epoch for the SGD algorithm, the Fisher task distance between the above pair of $\varepsilon$-approximation networks is always zero.
\end{proposition}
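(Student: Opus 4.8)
The plan is to reduce the claim to the elementary observation that gradient descent---whether full-batch or mini-batch---is a \emph{deterministic} procedure, so that two networks run under identical conditions produce the same weights, hence the same Fisher Information Matrix, hence distance zero. First I would fix notation: let $\theta^{(t)}_{N_1}$ and $\theta^{(t)}_{N_2}$ denote the weight vectors of the two $\varepsilon$-approximation networks after $t$ optimization steps, and recall from Definition~\ref{SSappNetTX} that ``structurally similar w.r.t.\ $(T,X)$'' already forces the two networks to share an architecture and to be trained on the same training set $X^{(1)}$.

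The core of the argument is an induction on the step index $t$ establishing $\theta^{(t)}_{N_1} = \theta^{(t)}_{N_2}$ for all $t$. The base case $t=0$ is exactly the hypothesis of identical initialization. For the inductive step I would write the update as $\theta^{(t+1)} = \theta^{(t)} - \eta\,\nabla_\theta \mathcal{L}(\theta^{(t)}; B_t)$, where $\eta$ is the shared learning rate and $B_t$ is the batch processed at step $t$ (the full training set in the full-gradient case, or the $t$\textsuperscript{th} mini-batch in the SGD case). Since the batch orders are assumed identical, $B_t$ is the same for both networks; since the architectures and loss functions coincide, $\nabla_\theta \mathcal{L}(\,\cdot\,; B_t)$ is the same map for both; and by the inductive hypothesis the argument $\theta^{(t)}$ is the same. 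Hence $\theta^{(t+1)}_{N_1} = \theta^{(t+1)}_{N_2}$, and in particular the trained weights agree, $\theta^{\star}_{N_1} = \theta^{\star}_{N_2} =: \theta^{\star}$.

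It then remains to note that the empirical Fisher Information Matrix~\eqref{emprical_fisher} evaluated on $X^{(2)}$ depends only on the weights and that data set, so $F_{N_1} = F(\theta^{\star}; X^{(2)}) = F_{N_2}$; substituting equal matrices into the diagonal form~\eqref{distance} gives $d = \tfrac{1}{\sqrt{2}}\norm{F^{1/2} - F^{1/2}}_F = 0$. For completeness I would also verify the un-approximated form~\eqref{frechet_org}: with $F_{a,b} = F_{b,b} = F$ and $F \succeq 0$ one has $(FF)^{1/2} = F$, so the matrix inside the trace is $2F - 2F = 0$ and the distance is again $0$.

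I expect the only delicate point to be making precise what ``same conditions'' means---specifically, that the forward and backward passes are themselves deterministic functions of the weights and inputs (any stochastic layers such as dropout must share the same random seed, and one abstracts away nondeterminism from nonassociative floating-point reductions on parallel hardware). Once determinism is granted, the induction is routine and there is no genuine analytic obstacle; the proposition is essentially a sanity check that the proposed distance behaves sensibly on identical training runs.
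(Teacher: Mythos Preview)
Your proposal is correct and follows essentially the same approach as the paper: both argue that identical architecture, initialization, learning rate, and batch order force the trained weights---and hence the Fisher Information Matrices---to coincide, making the Fr\'echet distance zero. The paper's proof is a one-line appeal to Definition~\ref{SSappNetTX} and the hypotheses, whereas you spell out the underlying induction on optimization steps and verify both forms~\eqref{frechet_org} and~\eqref{distance}; this added rigor is fine but not required.
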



In this proposition, all the training settings were assumed to be the same for two structurally-similar $\varepsilon$-approximation networks w.r.t $(T,X)$. However, an important question is whether the FTD is still a \textit{well-defined} measure regardless of the initial settings, learning rate, and the order of data batches. That is, if we train two structurally-similar $\varepsilon$-approximation networks w.r.t $(T,X)$ using SGD with different settings, will the FTD between $N_1$ and $N_2$, as defined in Equation~\eqref{distance}, be (close) zero?  We answer this question affirmatively assuming a strongly convex loss function. To this end, we invoke Polyak theorem ~\cite{Polyak1992AccelerationOS} on the convergence of the average sequence of estimation in different epochs from the SGD algorithm. While the loss function in a deep neural network is not a strongly convex function, establishing the fact that the FTD is mathematically well-defined even for this case is an important step towards the more general case in a deep neural network and a justification for the success of our empirical study. In addition, there are some recent works that try to establish Polyak theorem ~\cite{Polyak1992AccelerationOS} for the convex or even some non-convex functions in an (non)-asymptotic way~\cite{gadat2017optimal}. Here, we rely only on the asymptotic version of the theorem proposed originally by ~\cite{Polyak1992AccelerationOS}. We first recall the definition of the strongly convex function. 


\begin{definition}[Strongly Convex Function]
A differentiable function $f:\mathbb{R}^n\rightarrow\mathbb{R}$ is strongly convex if for all $x,y\in\mathbb{R}^n$ and some $\mu>0$, $f$ satisfies the following inequality:
\begin{equation}
    f(y) \geq f(x) +\nabla (f)^T(y-x)+\mu||y-x||_2^2.
\end{equation}
\end{definition}

Through this paper, we denote $\ell_{\infty}$-norm of a matrix $B$ as $||B||_{\infty}= \max_{i,j}|B_{ij}|$. Also, $|S|$ means the size of a set $S$. 

\begin{theorem}\label{theorem1}
Let $X$ be the dataset for the target task $T$. Consider $N_1$ and $N_2$ as two structurally-similar $\varepsilon$-approximation networks w.r.t. $(T,X)$ respectively with the set of weights $\theta_1$ and $\theta_2$ trained using the SGD algorithm where a diminishing learning rate is used for updating weights. Assume that the loss function $L$ for the task $T$ is strongly convex, and its 3rd-order continuous derivative exists and bounded. Let the noisy gradient function in training $N_1$ and $N_2$ networks using SGD algorithm be given by:
\begin{equation}
    g({\theta_i}_t, {\epsilon_i}_t) = \nabla L({\theta_i}_t) + {\epsilon_i}_t, \ \ for\ \ i=1,2,
\end{equation}
where ${\theta_i}_t$ is the estimation of the weights for network $N_i$ at time $t$, and $\nabla L({\theta_i}_{t})$ is the true gradient at ${\theta_i}_t$. Assume that ${\epsilon_i}_t$  satisfies $\mathbb{E}[{\epsilon_i}_t|{\epsilon_i}_0,...,{\epsilon_i}_{t-1}] = 0$, and satisfies $\displaystyle s = \lim_{t\xrightarrow[]{}\infty} \big|\big|[{\epsilon_i}_t {{\epsilon_i}_t}^T | {\epsilon_i}_0,\dots,{\epsilon_i}_{t-1}]\big|\big|_{\infty}<\infty$ almost surely (a.s.). Then the Fisher task distance between $N_1$ and $N_2$ computed on the average of estimated weights up to the current time $t$ converges to zero as $t \rightarrow \infty$. That is,
\begin{align}
    d_t = \frac{1}{\sqrt{2}}\Big|\Big|\Bar{F_1}_t^{1/2} - \Bar{F_2}_t^{1/2}\Big|\Big|_F \xrightarrow[]{\mathcal{D}}0,
\end{align}
where $\Bar{F}_{i_{t}} = F(\bar{\theta}_{i_{t}})$ with $\bar{\theta}_{i_{t}} = \frac{1}{t}\sum_t \theta_{i_{t}}$, for $i=1,2$.
\end{theorem}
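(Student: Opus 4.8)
The plan is to reduce the statement to a continuity argument sitting on top of the Polyak--Juditsky averaging theorem~\cite{Polyak1992AccelerationOS}. The key observation is that, since $N_1$ and $N_2$ are structurally similar with respect to the \emph{same} pair $(T,X)$, they are trained with the \emph{same} strongly convex loss $L$, which therefore has a unique global minimizer $\theta^\star$. I would show that both Polyak--Ruppert averages $\bar\theta_{1_t}$ and $\bar\theta_{2_t}$ converge to this common $\theta^\star$, after which $d_t$ collapses to $0$ because $\theta\mapsto F(\theta)^{1/2}$ is continuous at $\theta^\star$.

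First I would check that the hypotheses of the theorem are precisely those needed to invoke Polyak--Juditsky: $L$ strongly convex with bounded, continuous third derivative (hence $\nabla L$ Lipschitz and $\nabla^2 L(\theta^\star)\succeq\mu I\succ 0$), a diminishing step size of the admissible Robbins--Monro order, and $\{\epsilon_{i_t}\}$ a martingale difference sequence, $\mathbb{E}[\epsilon_{i_t}\mid \epsilon_{i_0},\dots,\epsilon_{i_{t-1}}]=0$, with conditional second moment bounded in $\ell_\infty$-norm almost surely. The theorem then gives $\sqrt t\,(\bar\theta_{i_t}-\theta^\star)\xrightarrow{\mathcal{D}}\mathcal{N}(0,\Sigma)$ for each $i=1,2$. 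By Slutsky's lemma, $\bar\theta_{i_t}-\theta^\star=\tfrac{1}{\sqrt t}\bigl(\sqrt t\,(\bar\theta_{i_t}-\theta^\star)\bigr)\xrightarrow{\mathcal{D}}0$, so $\bar\theta_{i_t}\to\theta^\star$ in probability; in particular $\bar\theta_{1_t}-\bar\theta_{2_t}\to 0$ in probability irrespective of any coupling between the two training runs.

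Next I would transfer this to the Fisher matrices. Boundedness and continuity of the second and third derivatives of $L$ make $F(\theta)$ continuous near $\theta^\star$, and the trace normalization is legitimate there since strong convexity forces $\mathrm{tr}\,F(\theta^\star)\geq n\mu>0$; the entrywise square root is continuous on $[0,\infty)$, so $\theta\mapsto F(\theta)^{1/2}$ is continuous at $\theta^\star$. The continuous mapping theorem then gives $\bar F_{i_t}^{1/2}=F(\bar\theta_{i_t})^{1/2}\xrightarrow{P}F(\theta^\star)^{1/2}$ for $i=1,2$, whence
\[
 d_t=\frac{1}{\sqrt 2}\bigl\|\bar F_{1_t}^{1/2}-\bar F_{2_t}^{1/2}\bigr\|_F\xrightarrow{P}\frac{1}{\sqrt 2}\bigl\|F(\theta^\star)^{1/2}-F(\theta^\star)^{1/2}\bigr\|_F=0,
\]
and convergence in probability to the constant $0$ is exactly $d_t\xrightarrow{\mathcal{D}}0$.

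The continuity and continuous-mapping bookkeeping in the last step is routine; the main obstacle is the first step, namely verifying that the stated regularity and noise conditions genuinely meet the hypotheses of Polyak--Juditsky (in particular that the almost-sure bound on the conditional covariance together with the martingale-difference property is enough, in place of the stronger moment conditions sometimes assumed, and that the prescribed diminishing learning rate is of admissible order), and then being careful to downgrade the CLT statement correctly to convergence in probability before applying the continuous mapping theorem. It is also worth making explicit that the argument uses only that the two loss landscapes coincide: the shared minimizer $\theta^\star$ is precisely what makes the limiting distance exactly zero rather than merely small.
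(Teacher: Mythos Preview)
Your proposal is correct and rests on the same backbone as the paper's proof: both invoke Polyak--Juditsky~\cite{Polyak1992AccelerationOS} to control the Polyak--Ruppert averages $\bar\theta_{i_t}$ around the common minimizer $\theta^\star$, and both then push this through the map $\theta\mapsto F(\theta)^{1/2}$. The execution differs slightly. The paper stays at the $\sqrt t$ scale, applying the Delta method to obtain $\sqrt t\,(\bar F_{i_t}^{1/2}-{F^\star}^{1/2})\xrightarrow{\mathcal D}\mathcal N(0,\Sigma_i^\star)$, then subtracts the two limits and notes that the resulting covariance $\tfrac1t(\Sigma_1^\star+\Sigma_2^\star)$ vanishes. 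You instead downgrade the CLT immediately to $\bar\theta_{i_t}\xrightarrow{P}\theta^\star$ via Slutsky and apply the continuous mapping theorem directly. Your route is a touch more elementary (it needs only continuity of $F(\cdot)^{1/2}$, not differentiability) and sidesteps the paper's informal step of subtracting two marginal limits without a joint-convergence argument; the paper's route, on the other hand, records the sharper $1/\sqrt t$ rate implicit in the Delta-method display, which your argument does not extract.
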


In our experiments, we found out that the weights averages $\bar{\theta}_{i_{t}}, \, i=1,2$ can be replaced with only their best estimates for $\theta_{i_{t}},\, i=1,2$, respectively. Next, we show that the FTD is also a \textit{well-defined} measure between two task-data set pairs. In other words, the (asymmetric) FTD from the task $(T_A,X_A)$ to the task $(T_B,X_B)$ approaches a constant value regardless of the initialization, learning rate, and the order of data batches in the SGD algorithm provided that $X_A$ and $X_B$ have the same distribution. 

\begin{theorem}\label{theorem2}
Let $X_A$ be the dataset for the task $T_A$ with the objective function $L_A$, and $X_B$ be the dataset for the task $T_B$ with the objective function $L_B$. Assume $X_A$ and $X_B$ have the same distribution. Consider an $\varepsilon$-approximation network $N$ trained using both datasets $X_A^{(1)}$ and $X_B^{(1)}$ respectively with the objective functions $L_A$ and $L_B$ to result weights $\theta_{A_{t}}$ and $\theta_{B_{t}}$ at time $t$. Under the same assumptions on the moment of gradient noise in SGD algorithm and the loss function stated in Theorem~\ref{theorem1}, the FTD from the task $A$ to the task $B$ computed from the Fisher Information matrices of the average of estimated weights up to the current time $t$ converges to a constant as $t \rightarrow \infty$. That is,
\begin{align}
    d_t = \frac{1}{\sqrt{2}}\norm{\Bar{F_A}_t^{1/2} - \Bar{F_B}_t^{1/2}}_F \xrightarrow[]{\mathcal{D}} \frac{1}{\sqrt{2}}\norm{{F_A^*}^{1/2} - {F_B^*}^{1/2}}_F,
\end{align}
where $\Bar{F}_{A_{t}}$ is given by $\Bar{F}_{A_{t}} = F(\bar{\theta}_{A_{t}})$ with $\bar{\theta}_{A_{t}} = \frac{1}{t}\sum_t \theta_{A_{t}}$, and $\Bar{F}_{B_{t}}$ is defined in a similar way.
\end{theorem}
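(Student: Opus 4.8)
Theorem~\ref{theorem2} extends Theorem~\ref{theorem1}: there the two runs optimize the same objective and therefore converge to a common minimizer, forcing the limiting distance to be zero, whereas here the objectives $L_A$ and $L_B$ differ, the averaged iterates converge to \emph{different} minimizers $\theta_A^*$ and $\theta_B^*$, and the limit is the Fr\'echet distance between the Fisher matrices at those two points. The plan is to isolate two ingredients: the Polyak--Juditsky asymptotics for averaged SGD, which pins down where each averaged weight sequence goes, and the continuous mapping theorem applied to $\theta \mapsto F(\theta)^{1/2}$ followed by the Frobenius distance.

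First I would apply the Polyak--Juditsky theorem~\cite{Polyak1992AccelerationOS} to each training run separately. Strong convexity of $L_A$ together with the bounded continuous third derivative guarantees a unique minimizer $\theta_A^*$, and under the diminishing step-size schedule and the stated conditions on the gradient noise $\epsilon_{A_t}$ (zero conditional mean, a.s.-bounded conditional covariance in $\ell_\infty$) the averaged iterate obeys a central limit theorem $\sqrt{t}\,(\bar\theta_{A_t} - \theta_A^*) \xrightarrow[]{\mathcal{D}} \mathcal{N}(0,\Sigma_A)$ with finite $\Sigma_A$; in particular $\bar\theta_{A_t} \to \theta_A^*$ in probability. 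The same statement for $L_B$ gives $\bar\theta_{B_t} \to \theta_B^*$ in probability. Since convergence in probability to a constant is closed under stacking regardless of the dependence between the two runs, the pair $(\bar\theta_{A_t},\bar\theta_{B_t})$ converges in probability, hence in distribution, to the constant $(\theta_A^*,\theta_B^*)$.

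Next I would verify that the functional $g(\theta,\theta') := \tfrac{1}{\sqrt 2}\norm{F(\theta)^{1/2} - F(\theta')^{1/2}}_F$ is continuous at $(\theta_A^*,\theta_B^*)$, so that the continuous mapping theorem applies to $d_t = g(\bar\theta_{A_t},\bar\theta_{B_t})$. The empirical Fisher of~\eqref{emprical_fisher} is a finite sum of gradient outer products, hence $\theta \mapsto F(\theta)$ is continuous; the matrix square root is continuous on the cone of symmetric positive semidefinite matrices (and in the diagonal, unit-trace approximation actually used it is simply the entrywise map $x \mapsto \sqrt{x}$ on $[0,\infty)$); and $\norm{\cdot}_F$ is continuous. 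Composition gives continuity of $g$, and therefore $d_t \xrightarrow[]{\mathcal{D}} g(\theta_A^*,\theta_B^*) = \tfrac{1}{\sqrt 2}\norm{{F_A^*}^{1/2} - {F_B^*}^{1/2}}_F$ with $F_A^* := F(\theta_A^*)$ and $F_B^* := F(\theta_B^*)$. The hypothesis that $X_A$ and $X_B$ share a distribution enters precisely here: it ensures that the Fisher matrices evaluated along either trajectory are estimating the same population object, so the limiting constant --- and hence the asymmetric task distance --- is intrinsic to the pair of tasks and does not depend on which test set is used to form $F$.

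The main obstacle I expect is bookkeeping rather than conceptual: there are two distinct sources of fluctuation to keep separate, the $O(1/\sqrt t)$ deviation of the averaged iterate from its minimizer (controlled by Polyak--Juditsky) and the finite-sample error of the empirical Fisher relative to the population Fisher on a fixed test set. If $F_A^*$ is taken to be the population Fisher at $\theta_A^*$ one needs an additional law-of-large-numbers step over the test data, for which the shared-distribution hypothesis is essential; if $F_A^*$ is read as the empirical Fisher on the fixed $X_b^{(2)}$ at $\theta_A^*$, the continuity argument above closes the proof directly. A minor technical point is ensuring the Fisher iterates remain in a region where $M \mapsto M^{1/2}$ is well behaved --- immediate for the nonnegative diagonal approximation, and otherwise guaranteed near the strong minimum where the Fisher is positive definite and the square root is locally Lipschitz.
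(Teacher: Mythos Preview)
Your argument is correct and takes a somewhat different route from the paper's. Both start from Polyak--Juditsky applied separately to the two training runs, but the paper then invokes the Delta method on $\theta\mapsto F(\theta)^{1/2}$ to obtain asymptotic normality of each $\bar F_{A_t}^{1/2}$ and $\bar F_{B_t}^{1/2}$, subtracts the two to get an asymptotically normal vector centered at $F_A^{*1/2}-F_B^{*1/2}$ with covariance of order $1/t$, and reads off the limit of the Frobenius norm from that. You instead extract only the weaker consequence $\bar\theta_{A_t}\to\theta_A^*$, $\bar\theta_{B_t}\to\theta_B^*$ in probability and apply the continuous mapping theorem to the composite functional $g(\theta,\theta')=\tfrac{1}{\sqrt2}\|F(\theta)^{1/2}-F(\theta')^{1/2}\|_F$. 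Your version is more elementary---it needs only continuity of $F(\cdot)^{1/2}$ rather than differentiability---and it cleanly handles the joint limit: because each marginal converges in probability to a constant, the pair $(\bar\theta_{A_t},\bar\theta_{B_t})$ converges jointly regardless of dependence, whereas the paper's step of subtracting two marginal Delta-method limits and adding the covariances tacitly assumes independent runs. What the paper's approach buys in return is explicit asymptotic covariance information, which in principle quantifies the $O(1/\sqrt t)$ rate at which $d_t$ approaches its limit. Your closing remarks on the role of the shared-distribution hypothesis and on the two interpretations of $F_A^*$ go beyond what the paper's proof addresses.
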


\section{Neural Architecture Search}\label{sec:nas}
In this section, we apply Fisher task distance to the task-aware neural architecture search  (TA-NAS) framework~\cite{le2021task}, which finds the suitable architecture for a target task, based on a set of baseline tasks. Consider a set $A$ consisting of $K$ baseline tasks $T_i$ and its corresponding data set $X_i,$ denoted jointly by pairs $(T_i, X_i)$ for $i=1,2,\ldots,K$. Below, TA-NAS framework with the FTD is presented for finding a well-performing architecture for a target task $b,$ denoted by the pair $(T_b, X_b),$ based on the knowledge of architectures of these $K$ learned baseline tasks. We assume that $X_1, X_2, ..., X_K$ and $X_b$ are known, and their data points are of the same dimension. The pipeline of the TA-NAS, whose pseudo-code is given by Algorithm~\ref{alg2}, is summarized below:
\begin{enumerate}
    \item \textbf{Fisher Task Distance.} First, the FTD of each learned baseline task $a \in A$ to the target task $b$ is computed. The closest baseline task $a^*$, based on the computed distances, is returned.
    \item \textbf{Search Space.} Next, a suitable search space for the target task $b$ is determined based on the closest task architecture.
    \item \textbf{Search Algorithm.} Finally, the FUSE  algorithm performs a search within this space to find a well-performing architecture for the target task $b$.
\end{enumerate}

\begin{algorithm}[t]
\SetKwInput{KwInput}{Input}         
\SetKwInput{KwOutput}{Output}       
\SetKwFunction{FUSE}{FUSE}
\SetKwFunction{FMain}{Main}
\DontPrintSemicolon

\KwData{$A = \{(T_{1},X_{1}),\ldots,(T_{K},X_{K})\}$, $b = (T_b, X_b)$}
\KwInput{$\varepsilon$-approx. network $N$, \# of candidates $C$, $\alpha = 1/|C|$, baseline spaces $\{S_1,\ldots, S_K\}$}
\KwOutput{Best architecture for $b$}

    \SetKwProg{Fn}{Function}{:}{}
    \Fn{\FUSE{candidates $C$, data $X$}}{
        Relax the output of C (using Softmax function): $\displaystyle \Bar{c}(X) = \underset{c\in C}{\sum} \frac{\exp{(\alpha_c)}}{\underset{c'\in C}{\sum} \exp{(\alpha_{c'})}} c(X)$\;
        \While{$\alpha$ has not converged}{
            Update $C$ by descending $\nabla_{w} \mathcal{L}_{train}(w; \alpha, \Bar{c})$\;
            Update $\alpha$ by descending $\nabla_{\alpha} \mathcal{L}_{val}(\alpha; w, \Bar{c})$\;
        }
        \KwRet $c^* = \underset{c \in C}{\mathrm{argmin}}\ \alpha_c$\;
    }
    
    \SetKwProg{Fn}{Function}{:}{\KwRet}
    \Fn{\FMain}{
        \For{$a \in A$}{
            Compute Fisher Task Distance from $a$ to $b$:\;
            $d[a,b] = \TaskDistance(X_a^{(1)},X_b, N)$
        }
        Select closest task: $a^* = \underset{a \in A}{\mathrm{argmin}}\ d[a,b]$\;
    
        Define search space $ S = S_{a^*} $\;
        \While{criteria not met}{
            Sample $C$ candidates $\in$ S\;
            $c^* = \FUSE \big( (C \cup c^*), X_b \big)$\;
        }
        \KwRet best architecture $c^*$\;
    }
\caption{TA-NAS framework}
\label{alg2}
\end{algorithm}

\subsection{Search Space}
\label{ssec:search-space}
In an analogous manner to recent NAS techniques~\cite{liu2018darts,dong2020bench}, our architecture search space is defined by cells and skeletons. A cell, illustrated in Figure~\ref{fig-cell}, is a densely connected directed-acyclic graph (DAG) of nodes, where nodes are connected by operations. Each node has $2$ inputs and $1$ output. The operations (e.g., identity, zero, convolution, pooling) are normally set so that the dimension of the output is the same as that of the input. Additionally, a skeleton, illustrated in Figure~\ref{fig-skeleton}, is a structure consisting of multiple cells  and other operations stacked together, forming a complete architecture. In our framework, the search space for the task is defined in terms of the cells and operations.

Given the computed task similarity obtained by FTD, we can identify the closest baseline task(s) to the target task. Build upon this knowledge, we construct the search space for the target task based on the search space of the closest baseline task(s). Since the architecture search space of TA-NAS for a target task is restricted only to the space of the most related task from the baseline tasks, the search algorithm performs efficiently and requires fewer computational resources as it is demonstrated in our experiments on classification tasks in MNIST, CIFAR-10, CIFAR-100, ImageNet datasets. 

\begin{figure}[t]
    \centering
    \begin{subfigure}{0.45\textwidth}
        \centering
        \includegraphics[width=7cm]{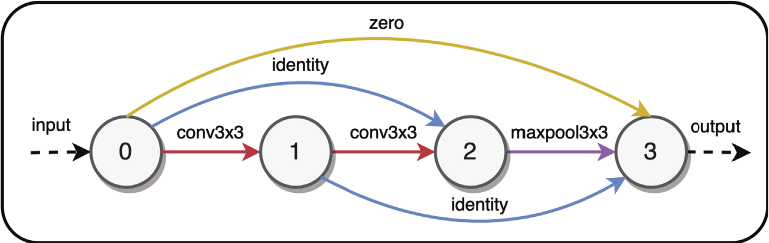}
        \captionof{figure}{A cell structure}
        \label{fig-cell}
        \vspace{0.2cm}
    \end{subfigure}
    \begin{subfigure}{0.45\textwidth}
        \centering
        \includegraphics[width=7cm]{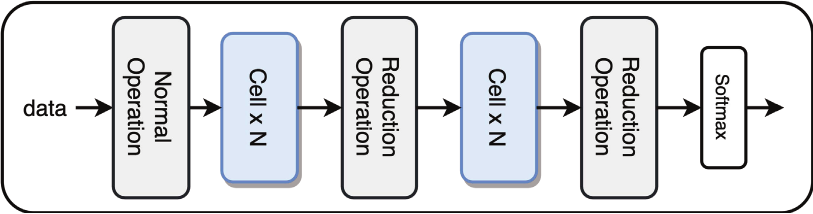}
        \captionof{figure}{A skeleton structure}
        \label{fig-skeleton}
    \end{subfigure}
    \caption{An example of the cell and the skeleton.}
\end{figure}

\subsection{Search Algorithm}
\label{ssec:IDARTS}
The Fusion Search (FUSE) is a architecture search algorithm that considers the network candidates as a whole with weighted-sum output, and performs the optimization using gradient descent. Let $C$ be the set of candidate networks which are drawn from the search space. Given $c \in C$ and training data $X$, denote by $c(X)$ the output of the network candidate $c$. The FUSE algorithm, is based on the continuous relaxation of the weighted outputs of the network candidates. It is capable of searching through all candidates in the relaxation space, and identifying the most promising network without fully training them. We defined the relaxed output $\Bar{c}$ for all of the candidates in $C$ by convex combination, where each weight in the combination given by exponential weights:
\begin{equation}
    \Bar{c}(X) = \sum_{c\in C}\frac{\exp{(\alpha_c)}}{\sum_{c'\in C}\exp{(\alpha_{c'})}} c(X),
\end{equation}
where $\Bar{c}$ is the weighted output of network candidates in $C$, and $\alpha_c$ is a continuous variable that assigned to candidate $c$'s output. Next, we conduct our search for the most promising candidate in $C$ by jointly training the network candidates, where the output is given by $\Bar{c}$, and optimizing their $\alpha$ coefficients. Let $X_{train}$, $X_{val}$ be the training and validation data set. The training procedure is based on alternative minimization and can be divided into: (i) freeze $\alpha$ coefficients, jointly train network candidates, (ii) freeze network candidates, update $\alpha$ coefficients. Initially, $\alpha$ coefficients are set to $1/|C|$ for all candidates. While freezing $\alpha$, we update the weights in network candidates by jointly train the relaxed output $\Bar{c}$ with cross-validation loss on training data:
\begin{equation}{\label{eq3}}
    \min_w \mathcal{L}_{train}(w; \alpha, \Bar{c}, X_{train}),
\end{equation}

where $w$ are weights of network candidates in $C$. Next, the weights in those candidates are fixed while we update the $\alpha$ coefficients on validation data:
\begin{equation}{\label{eq4}}
    \min_\alpha \mathcal{L}_{val}(\alpha; w, \Bar{c}, X_{val}).
\end{equation}
These steps are repeated until $\alpha$ converges or certain criteria (e.g., a number of iterations, $\alpha$ is greater than a predetermined threshold) are met. The most promising candidate will be selected by: $c^* = \arg\max_{c \in C} \alpha_c$. This training procedure will select the best candidate network among candidates in $C$ without fully training all of them. In order to go through the entire search space, this process is repeated, as more candidates can be drawn from the search space. The search stops when certain criteria, such as the number of iterations, the performance of the current most promising candidate, is met.

\section{Experimental Study}\label{sec:experiment}
In this section, we conduct experiments to show the consistency of our Fisher task distance (FTD), as well as its applications in neural architecture search (NAS) and transfer learning (TL). 


\subsection{Detail of Experiments}
In our experiments, the first step is to represent each task and its corresponding dataset by an $\varepsilon$-approximation network. To this end, we train the $\varepsilon$-approximation network with the balanced data from each task. For classification tasks (e.g., tasks in  MNIST~\cite{lecun2010mnist}, CIFAR-10~\cite{krizhevsky2009learning}, CIFAR-100~\cite{krizhevsky2009learning}, ImageNet~\cite{ILSVRC15}), three different architectures (e.g., VGG-16~\cite{simonyan2014very}, Resnet-18~\cite{he2016deep}, DenseNet-121~\cite{huang2017densely}) are chosen as $\varepsilon$-approximation network architectures. The training procedure is conducted in $100$ epochs, with Adam optimizer~\cite{kingma2014adam}, a batch size is set to $128$, and cross-validation loss. For image processing tasks in Taskonomy dataset~\cite{zamir2018taskonomy}, we use an autoencoder as the $\varepsilon$-approximation network. The encoder of the autoencoder consists of one convolutional layer, and two linear layers. The convolutional layer has $3$ input channels and $16$ output channels with the kernel size equals to $5$. We also use the zero padding of size $2$, stride of size $4$, and dilation equals to $1$. The first linear layer has the size of $(262144, 512)$, and the second linear layer has the size of $(512, 128)$. The training procedure is conducted in $20$ epochs with Adam optimizer~\cite{kingma2014adam}, a batch size of $64$, and mean-square error loss.

In order to construct the dictionary for baseline tasks, we need to perform the architecture search for these tasks using general search space. This space is defined by cell structures, consisting of $3$ or $4$ nodes, and $10$ operations (i.e., zero, identity, maxpool3x3, avepool3x3, conv3x3, conv5x5, conv7x7, dil-conv3x3, dil-conv5x5, conv7x1-1x7). After the best cell for each baseline task is founded, we save the structures and operations to the dictionary. Next, we apply the task-aware neural architecture search (TA-NAS) framework~\cite{le2021task} to find the best architecture for a target task, given a knowledge of learned baseline tasks.

First, consider a set $A$ consisting of $K$ baseline tasks $T_i$ and its corresponding data set $X_i,$ denoted jointly by pairs $(T_i, X_i)$ for $i=1,2,\ldots,K$ where $X_i=X_i^{(1)}\cup X_i^{(2)}$ with $X_i^{(1)}$ and $X_i^{(2)}$ denote the training and the test data, respectively. Now, suppose that for a given $\varepsilon$, the $\varepsilon$-approximation networks for the tasks, $(T_i, X_i)$ for $i=1,2, \cdots, K$ denoted by $N_1, N_2, ..., N_k$, respectively, where $\varepsilon$ is selected such that $\min_{i\in\{1,2,\dots,K\}} \mathcal{P}_{N_i}(T_i,X_i^{(2)})\geq 1-\varepsilon$. Here, the Fisher task distance from each of the baseline tasks to the target is computed. The baseline task with the smallest distance (i.e., the most related task) will be selected and used to construct the restricted search space for the target task. Lastly, the FUSE algorithm is applied to search for the best architecture for the target task from the restricted search space.

The experiment is conducted using NVIDIA GeForce RTX 3090. The source code for the experiments is available at: {\color{blue} \text{https://github.com/lephuoccat/Fisher-Information-NAS}}. 
Next, we provide the detail of the TA-NAS framework.

\begin{figure*}
    \centering
    \begin{subfigure}{0.33\textwidth}
        \centering
        \includegraphics[height=4.5cm]{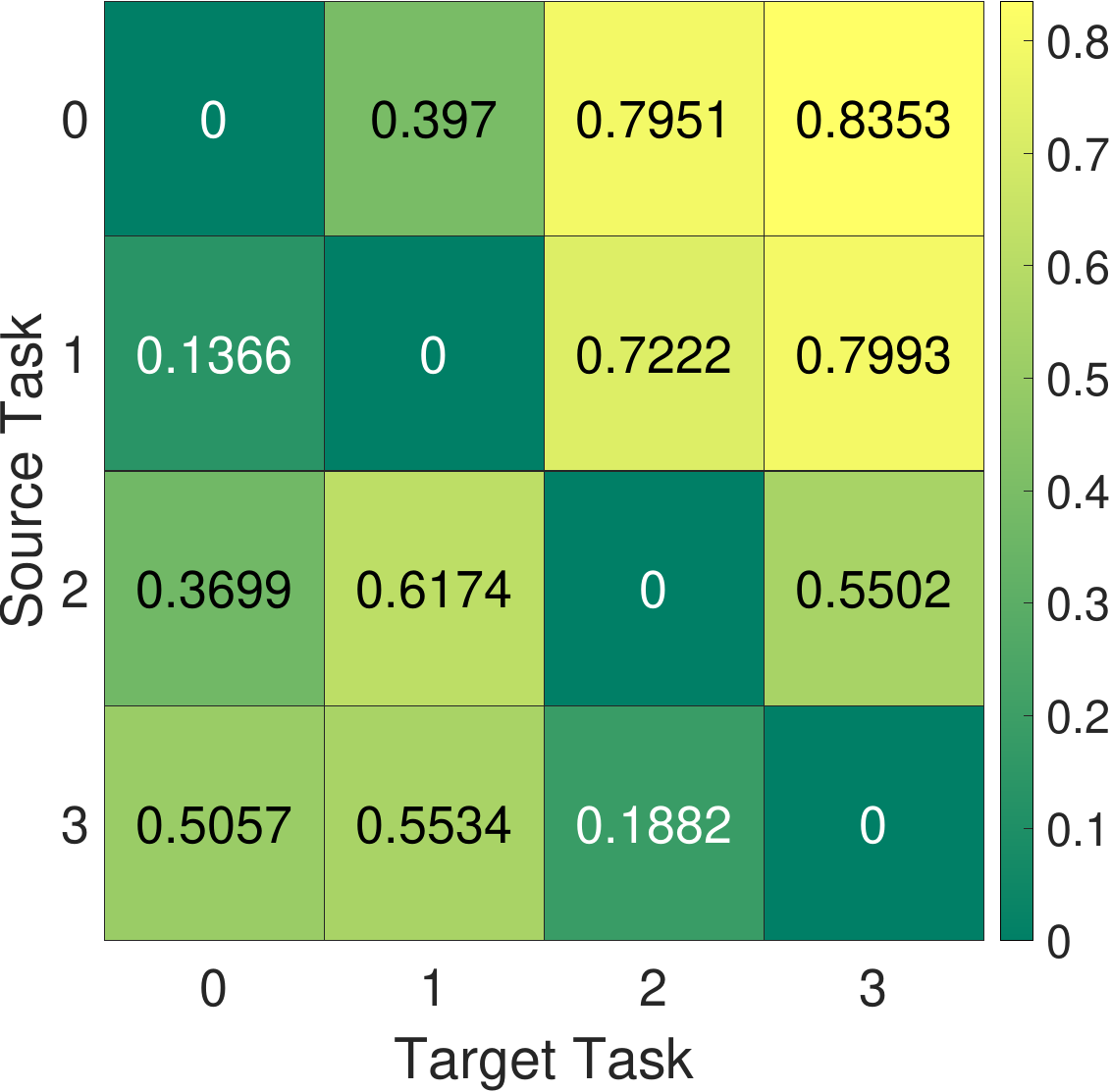}
    \end{subfigure}
    \begin{subfigure}{0.33\textwidth}
        \centering
        \includegraphics[height=4.5cm]{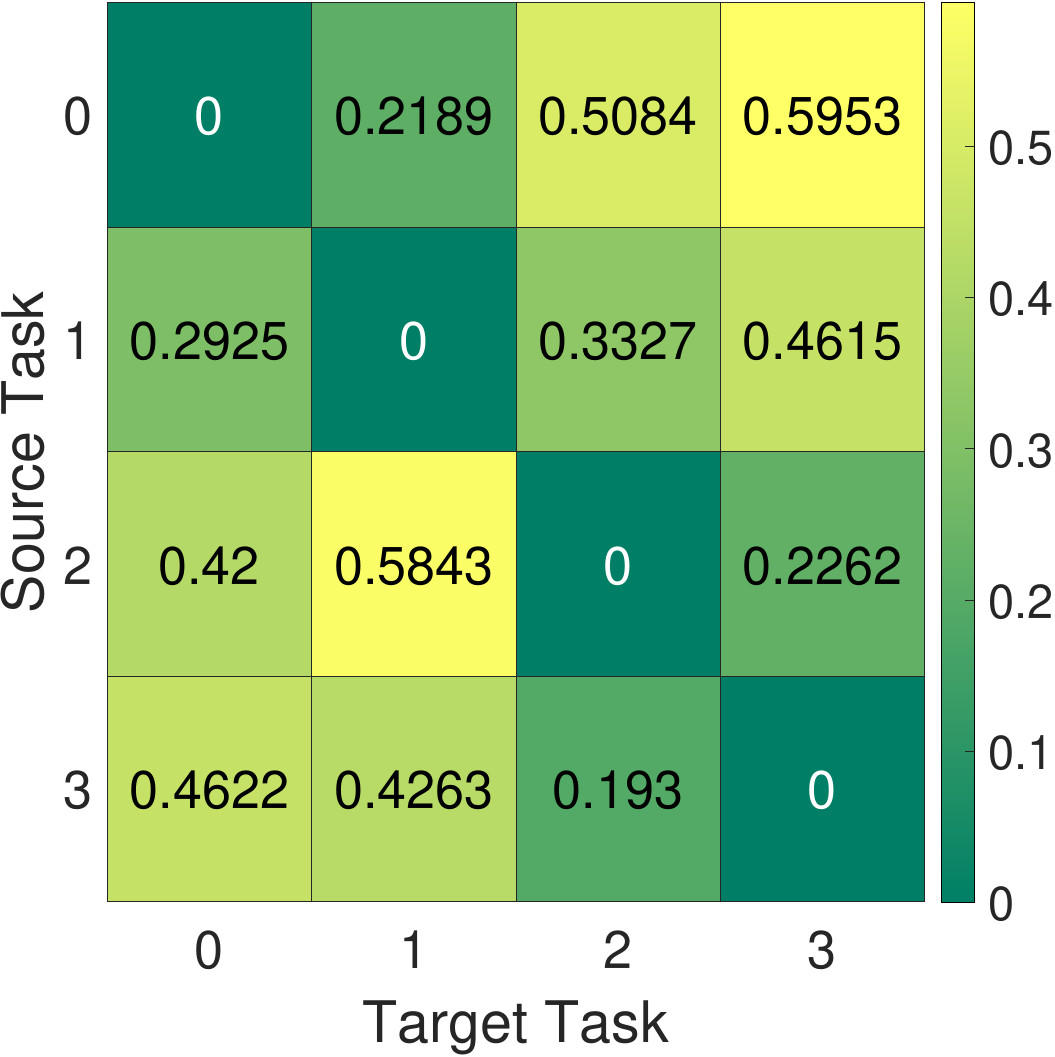}
    \end{subfigure}
    \begin{subfigure}{0.33\textwidth}
        \centering
        \includegraphics[height=4.5cm]{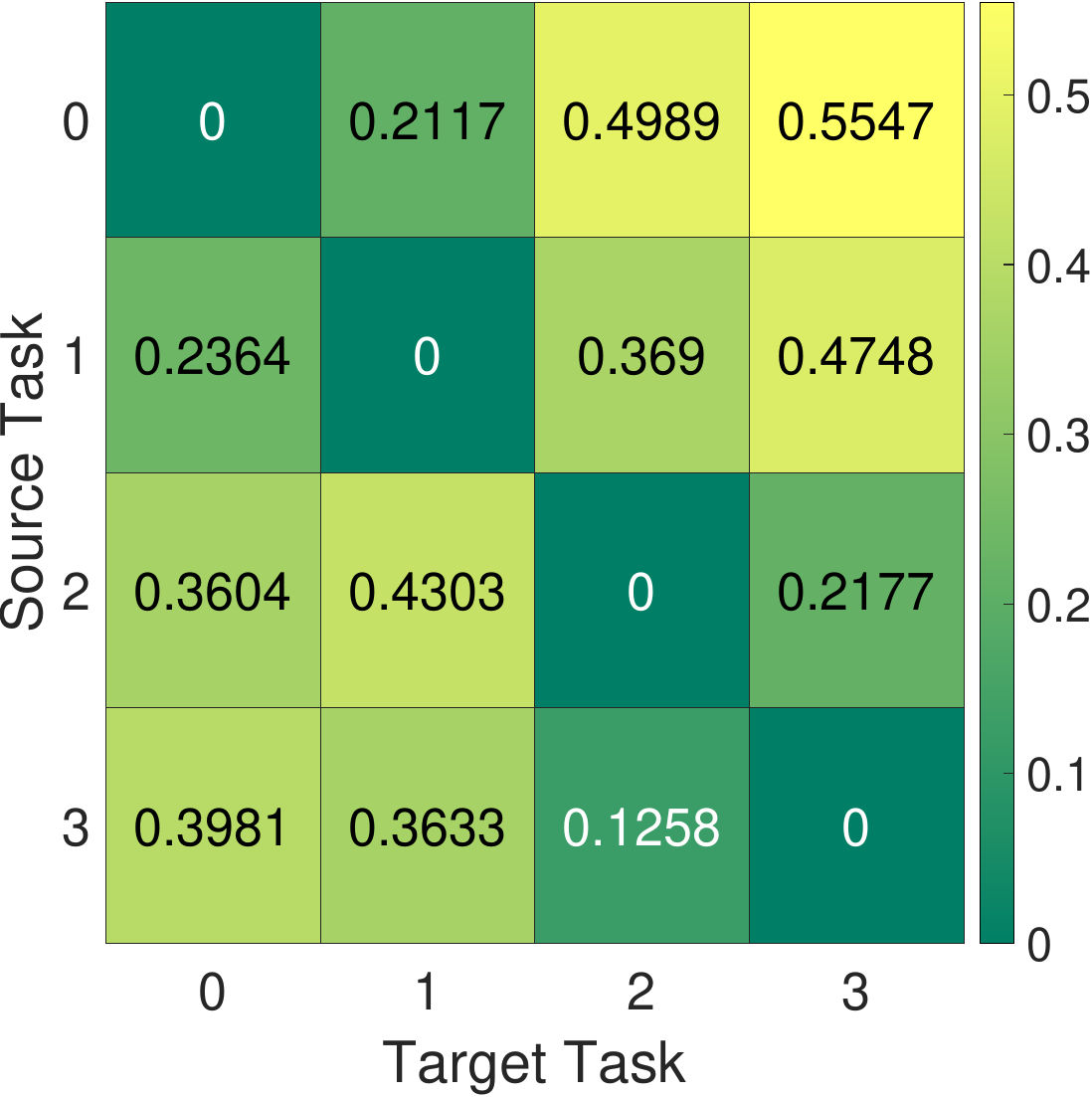}
    \end{subfigure}
    
    \begin{subfigure}{0.33\textwidth}
        \centering
        \includegraphics[height=4.5cm]{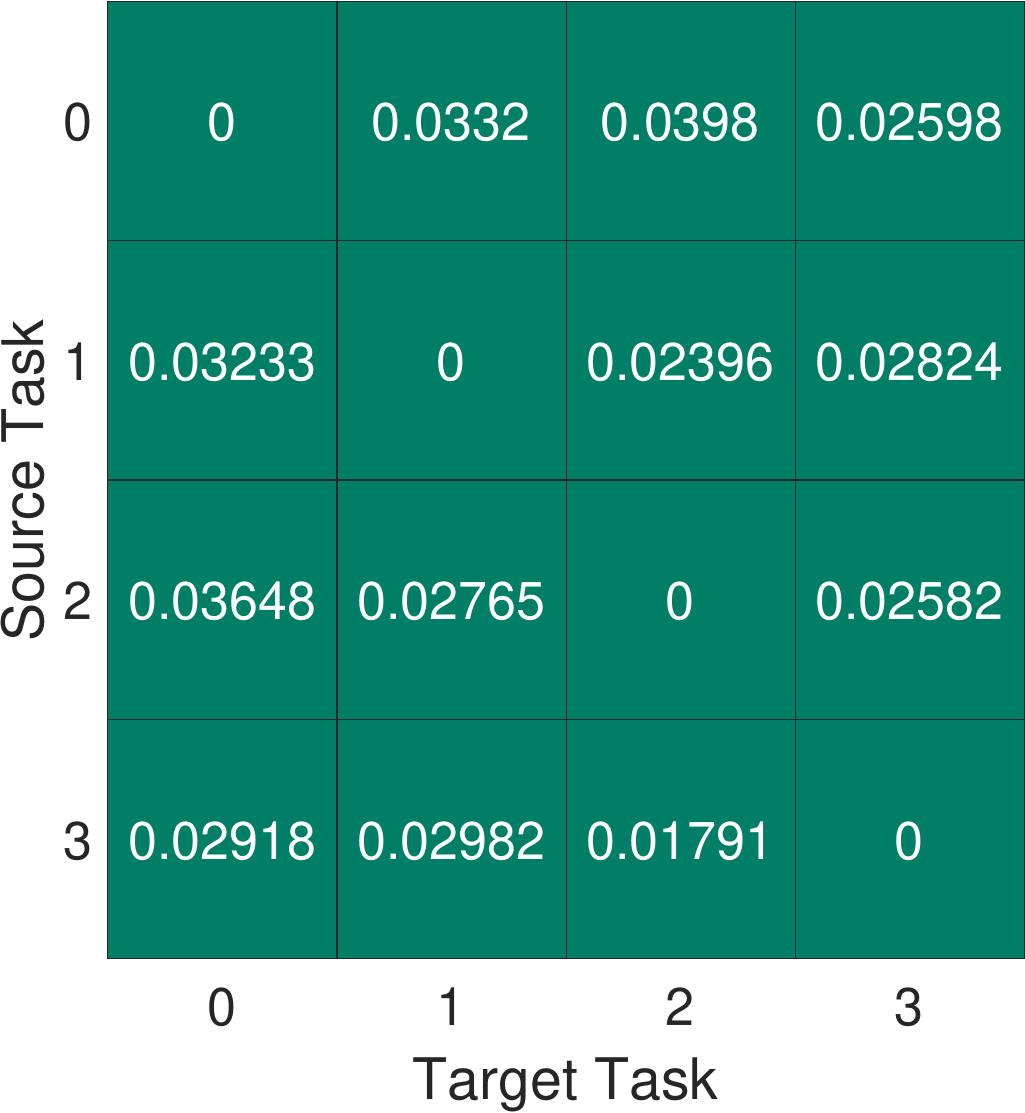}
        \caption{VGG-16}
    \end{subfigure}
    \begin{subfigure}{0.33\textwidth}
        \centering
        \includegraphics[height=4.5cm]{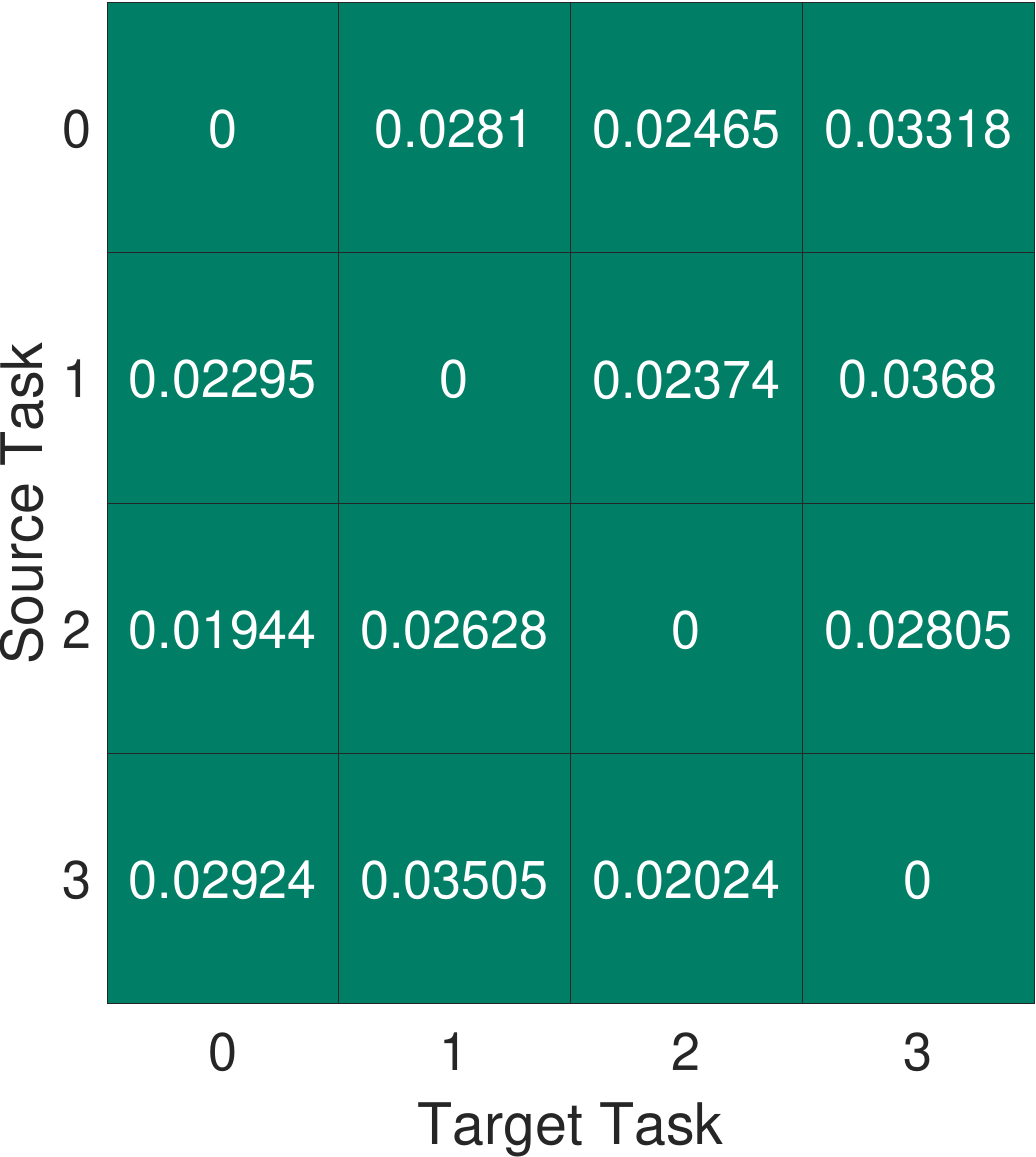}
        \caption{ResNet-18}
    \end{subfigure}
    \begin{subfigure}{0.33\textwidth}
        \centering
        \includegraphics[height=4.5cm]{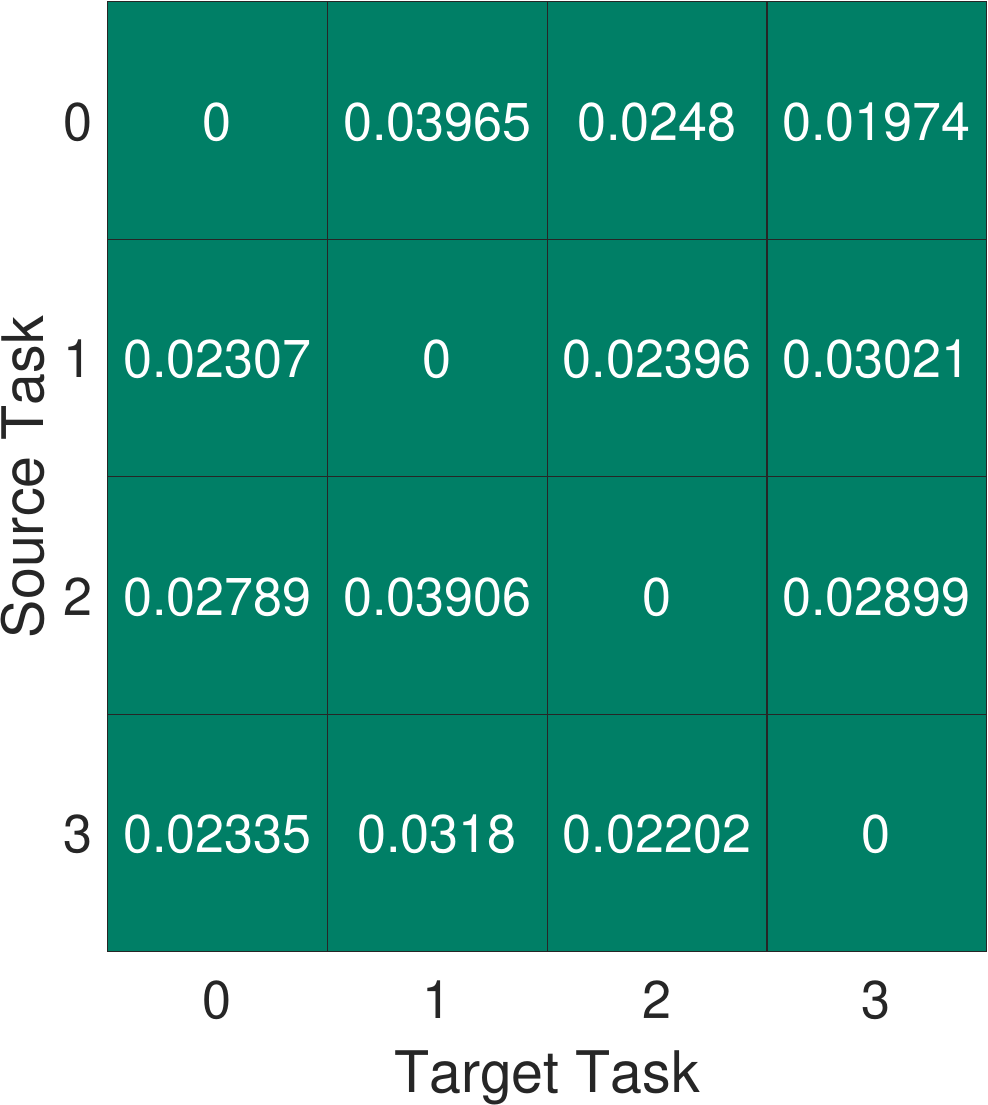}
        \caption{DenseNet-121}
    \end{subfigure}
    \caption{Distance from source tasks to the target tasks on MNIST. The top row shows the mean values and the bottom row denotes the standard deviation of distances between classification tasks over 10 different trials.}
    \label{fig-mnist}
\end{figure*}

\begin{figure*}
    \centering
    \begin{subfigure}{.33\textwidth}
        \centering
        \includegraphics[height=4.5cm]{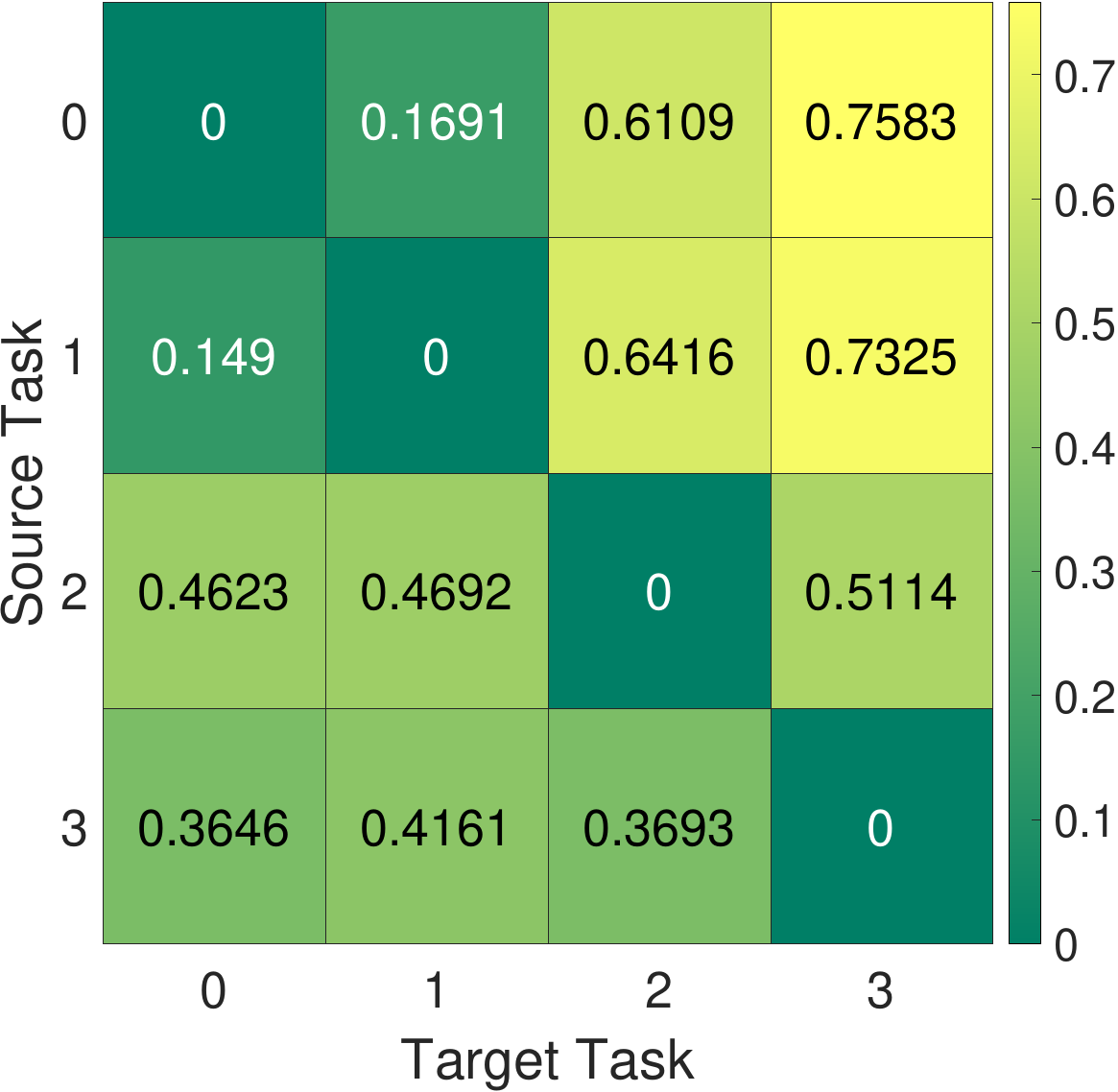}
    \end{subfigure}
    \begin{subfigure}{.33\textwidth}
        \centering
        \includegraphics[height=4.5cm]{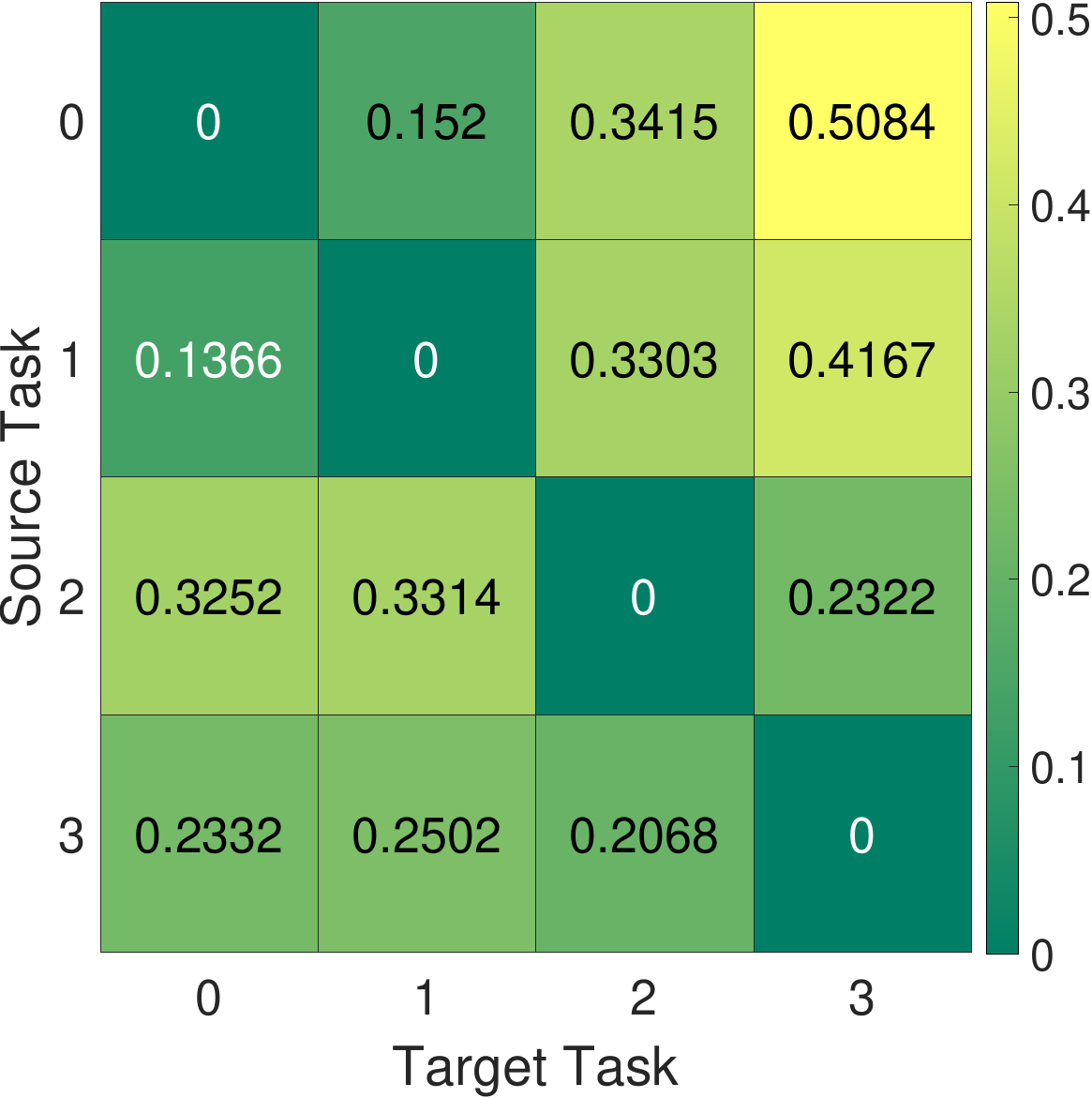}
    \end{subfigure}
    \begin{subfigure}{.33\textwidth}
        \centering
        \includegraphics[height=4.5cm]{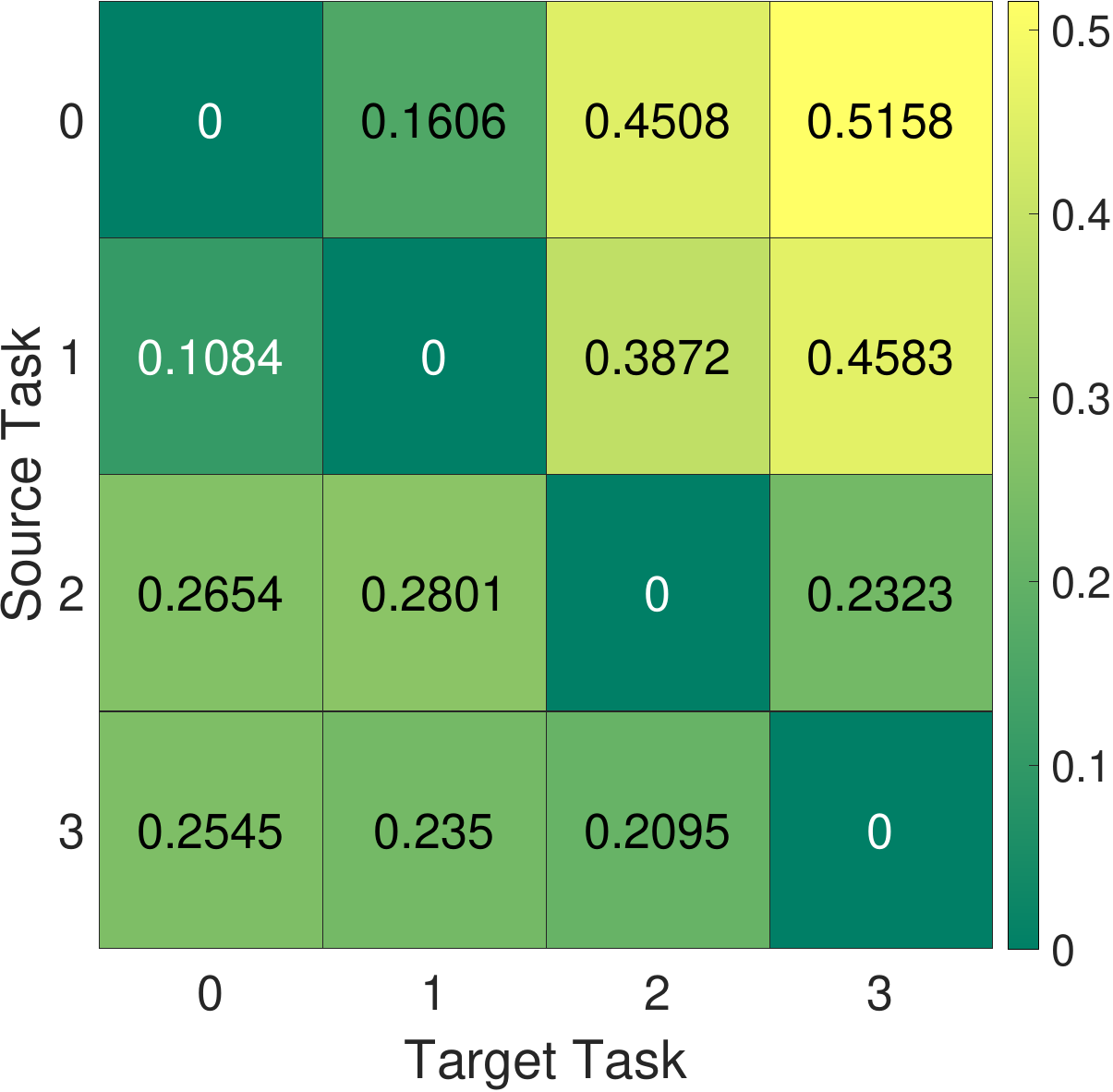}
    \end{subfigure}
    
    \begin{subfigure}{.33\textwidth}
        \centering
        \includegraphics[height=4.5cm]{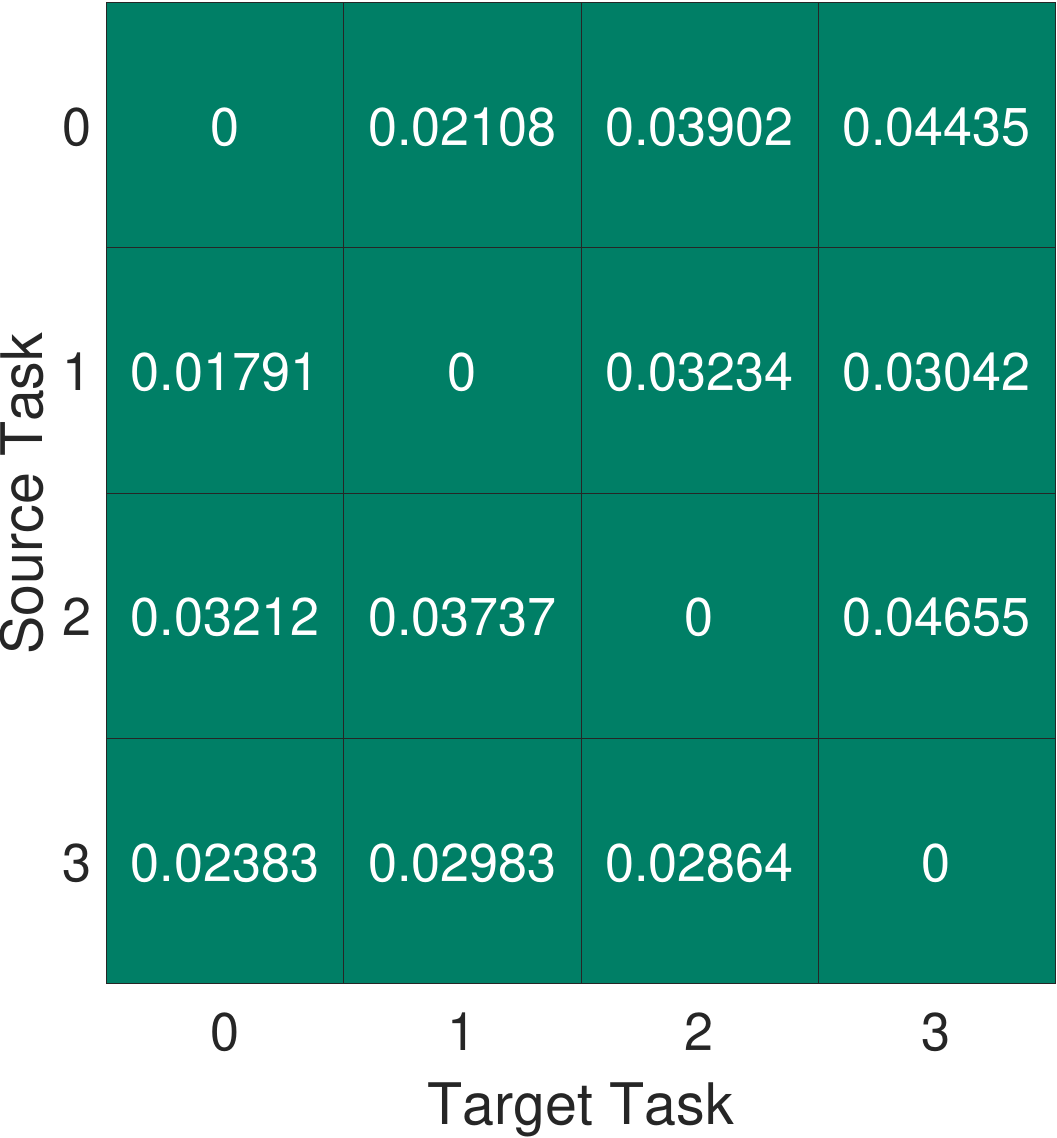}
        \caption{VGG-16}
    \end{subfigure}
    \begin{subfigure}{0.33\textwidth}
        \centering
        \includegraphics[height=4.5cm]{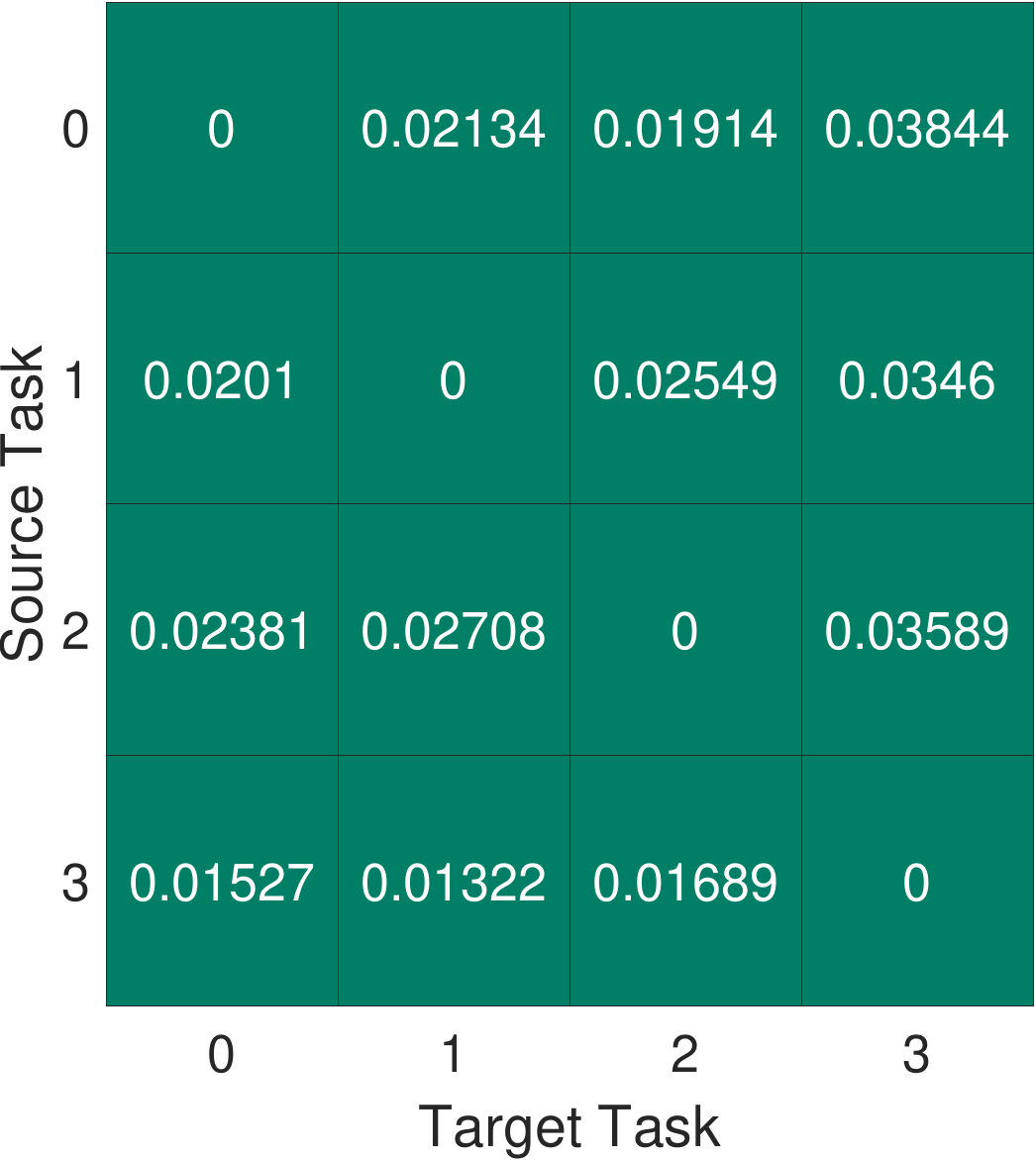}
        \caption{ResNet-18}
    \end{subfigure}
    \begin{subfigure}{.33\textwidth}
        \centering
        \includegraphics[height=4.5cm]{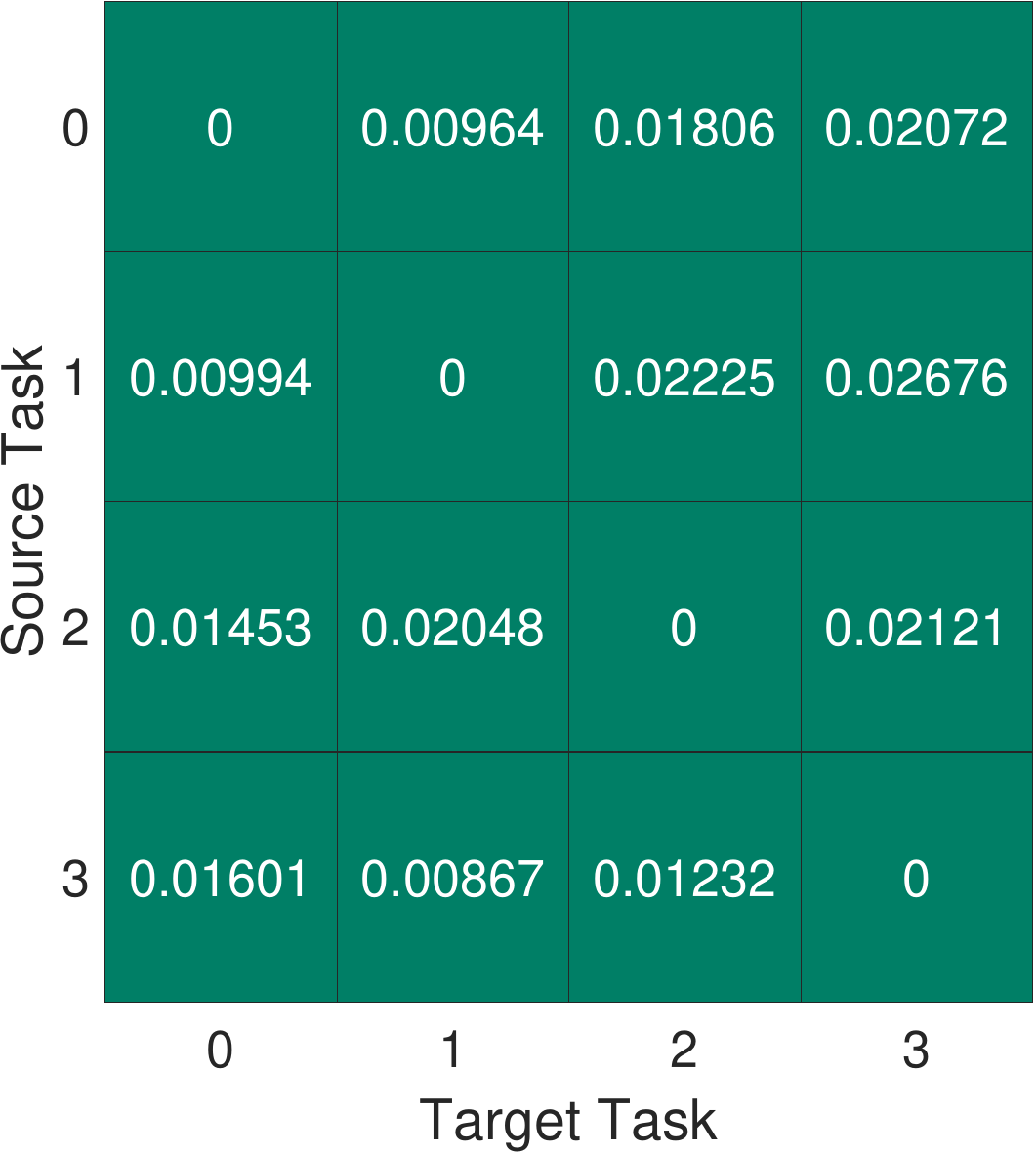}
        \caption{DenseNet-121}
    \end{subfigure}
    \caption{Distance from source tasks to the target tasks on CIFAR-10. The top row shows the mean values and the bottom row denotes the standard deviation of distances between classification tasks over 10 different trials.}
    \label{fig-cifar10}
\end{figure*}

\begin{figure*}
    \centering
    \begin{subfigure}{.33\textwidth}
        \centering
        \includegraphics[height=4.5cm]{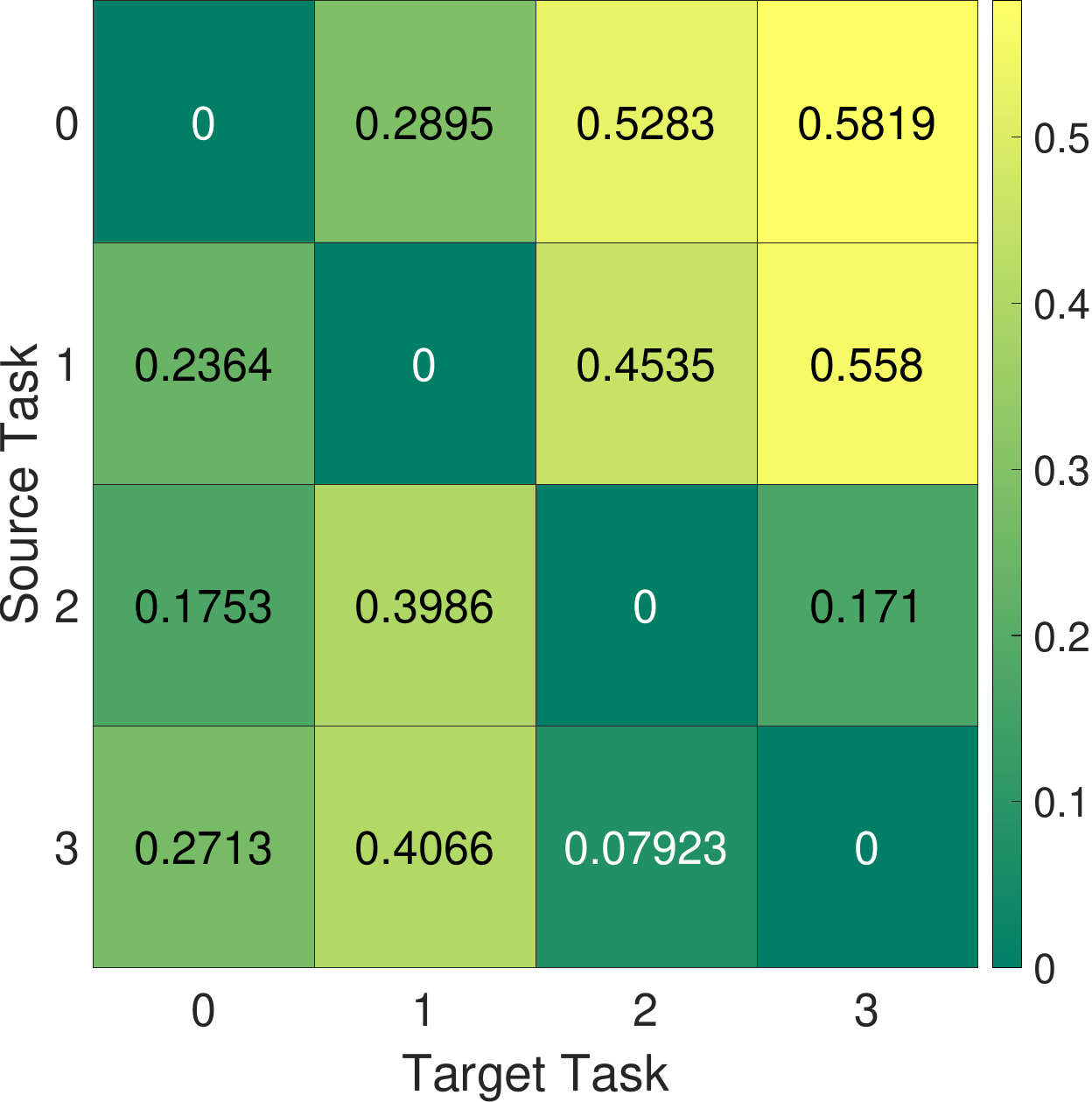}
    \end{subfigure}
    \begin{subfigure}{.33\textwidth}
        \centering
        \includegraphics[height=4.5cm]{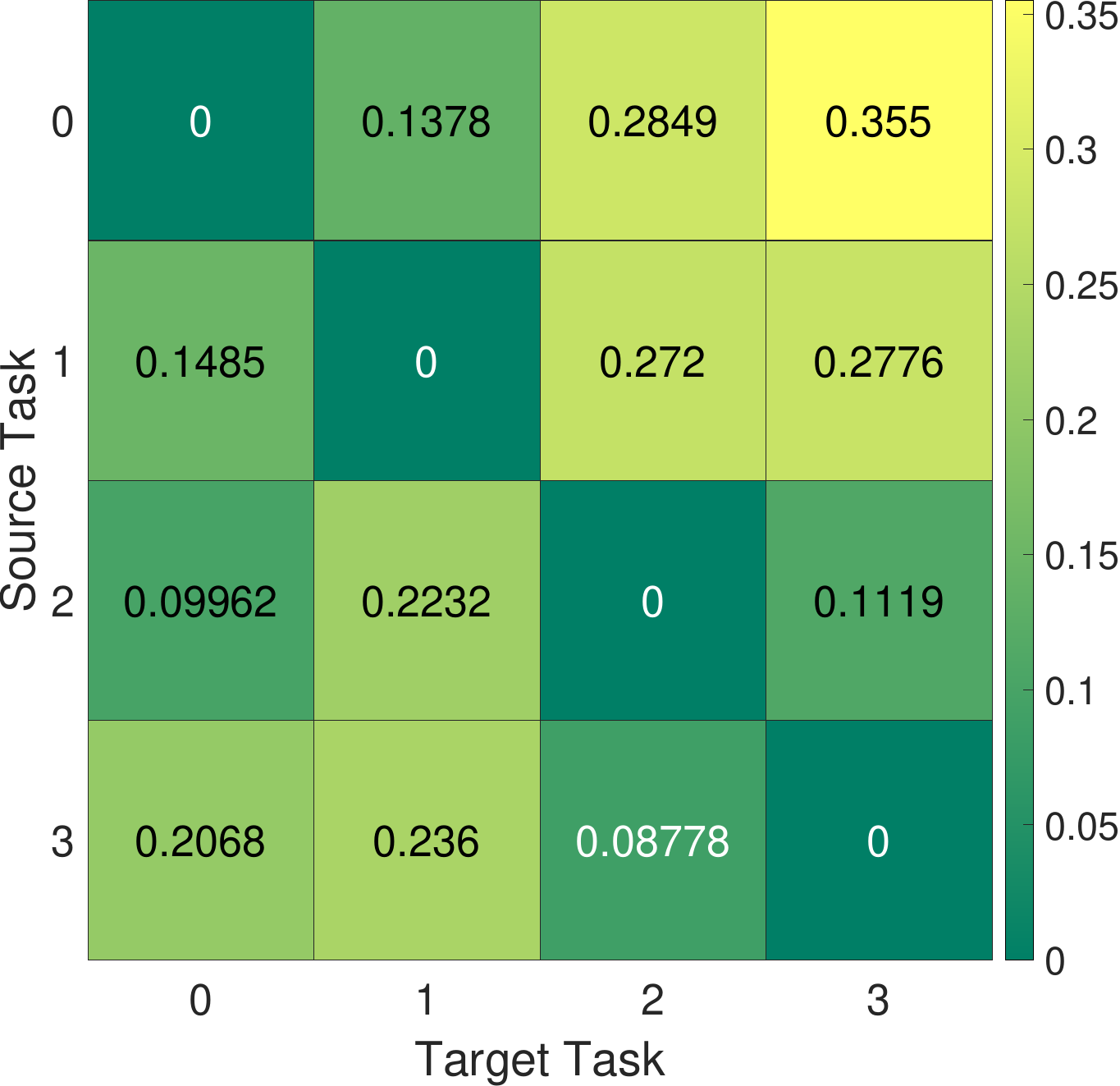}
    \end{subfigure}
    \begin{subfigure}{.33\textwidth}
        \centering
        \includegraphics[height=4.5cm]{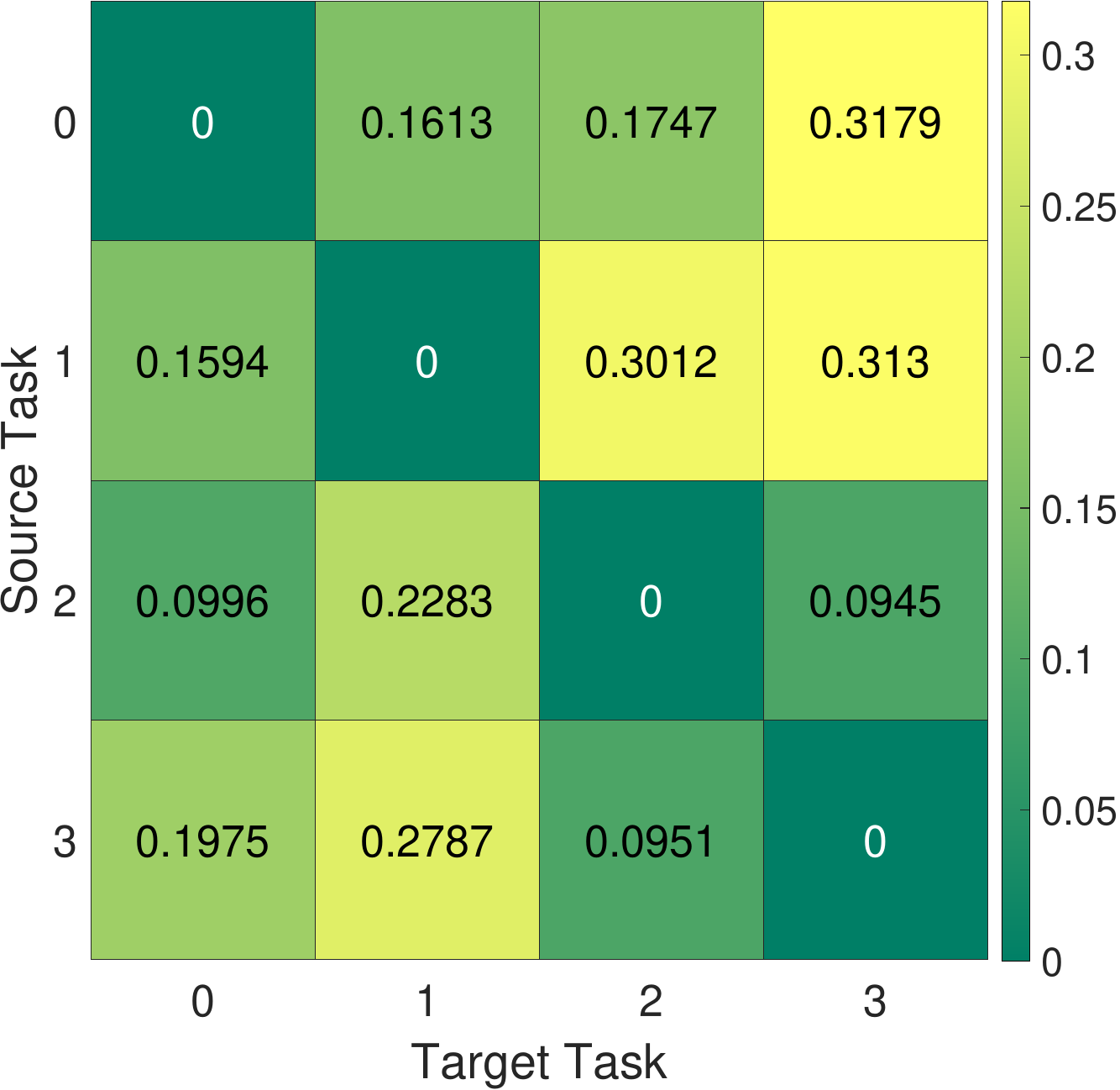}
    \end{subfigure}

    \begin{subfigure}{.33\textwidth}
        \centering
        \includegraphics[height=4.5cm]{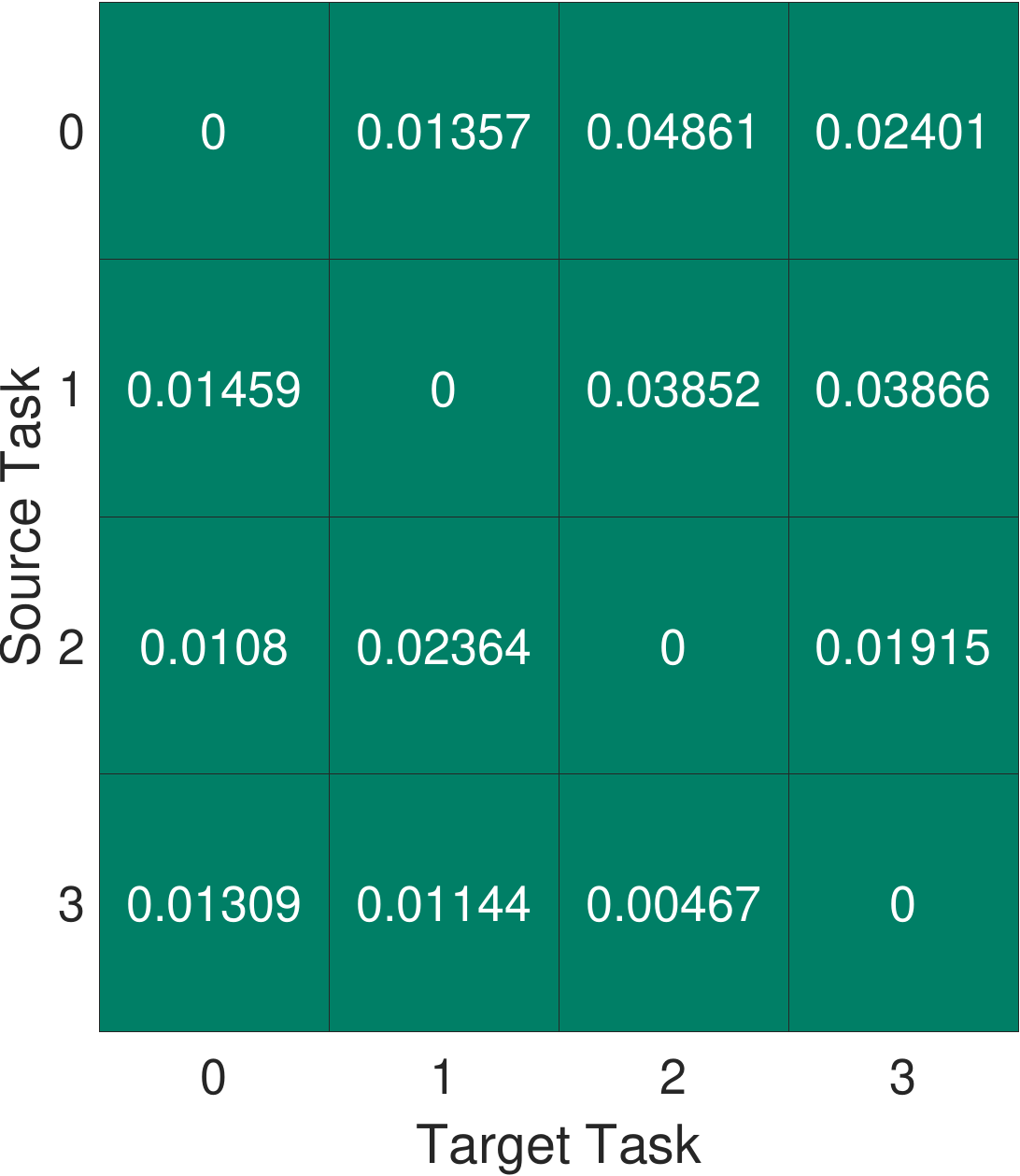}
        \caption{VGG-16}
    \end{subfigure}
    \begin{subfigure}{0.33\textwidth}
        \centering
        \includegraphics[height=4.5cm]{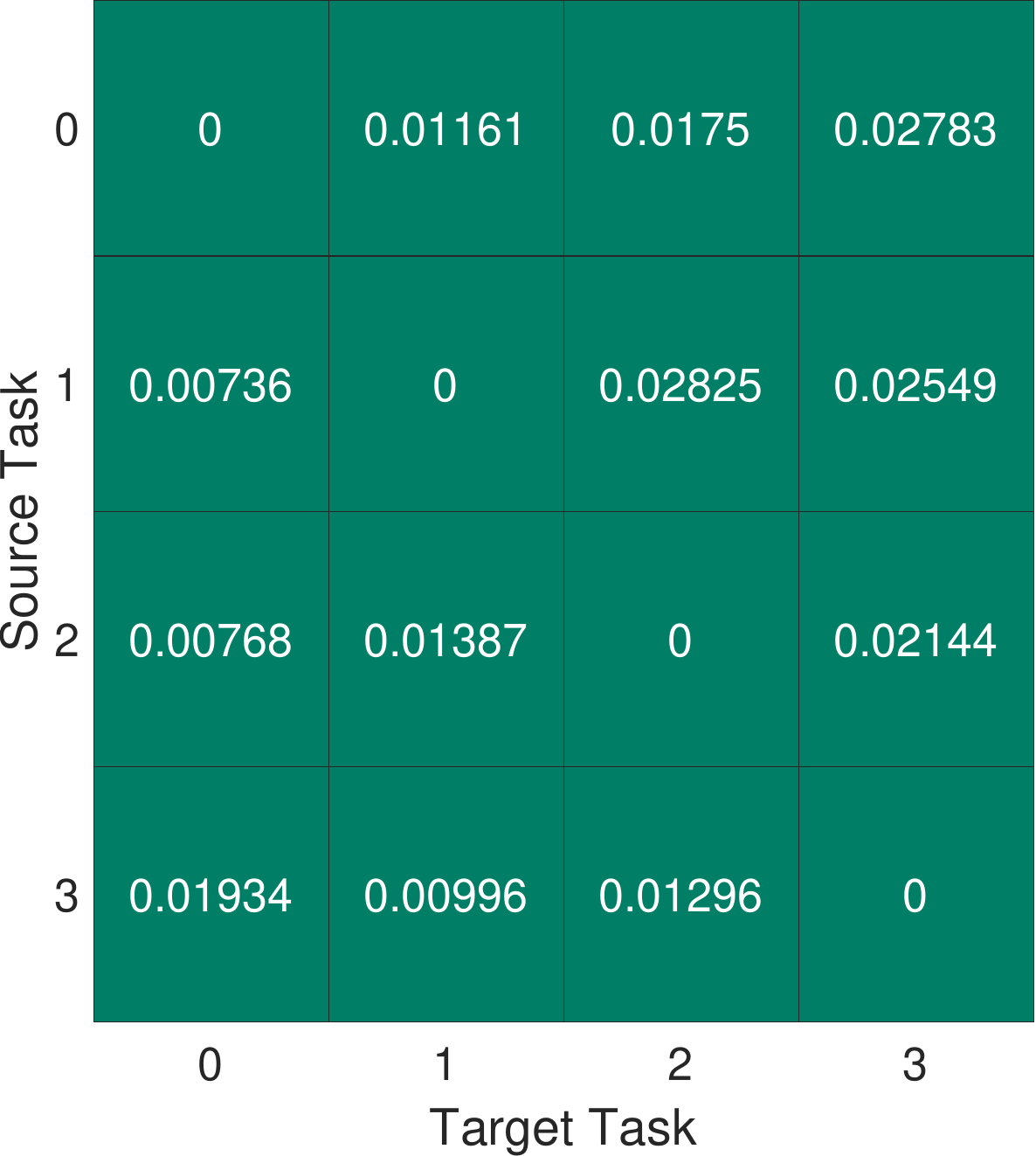}
        \caption{ResNet-18}
    \end{subfigure}
    \begin{subfigure}{.33\textwidth}
        \centering
        \includegraphics[height=4.5cm]{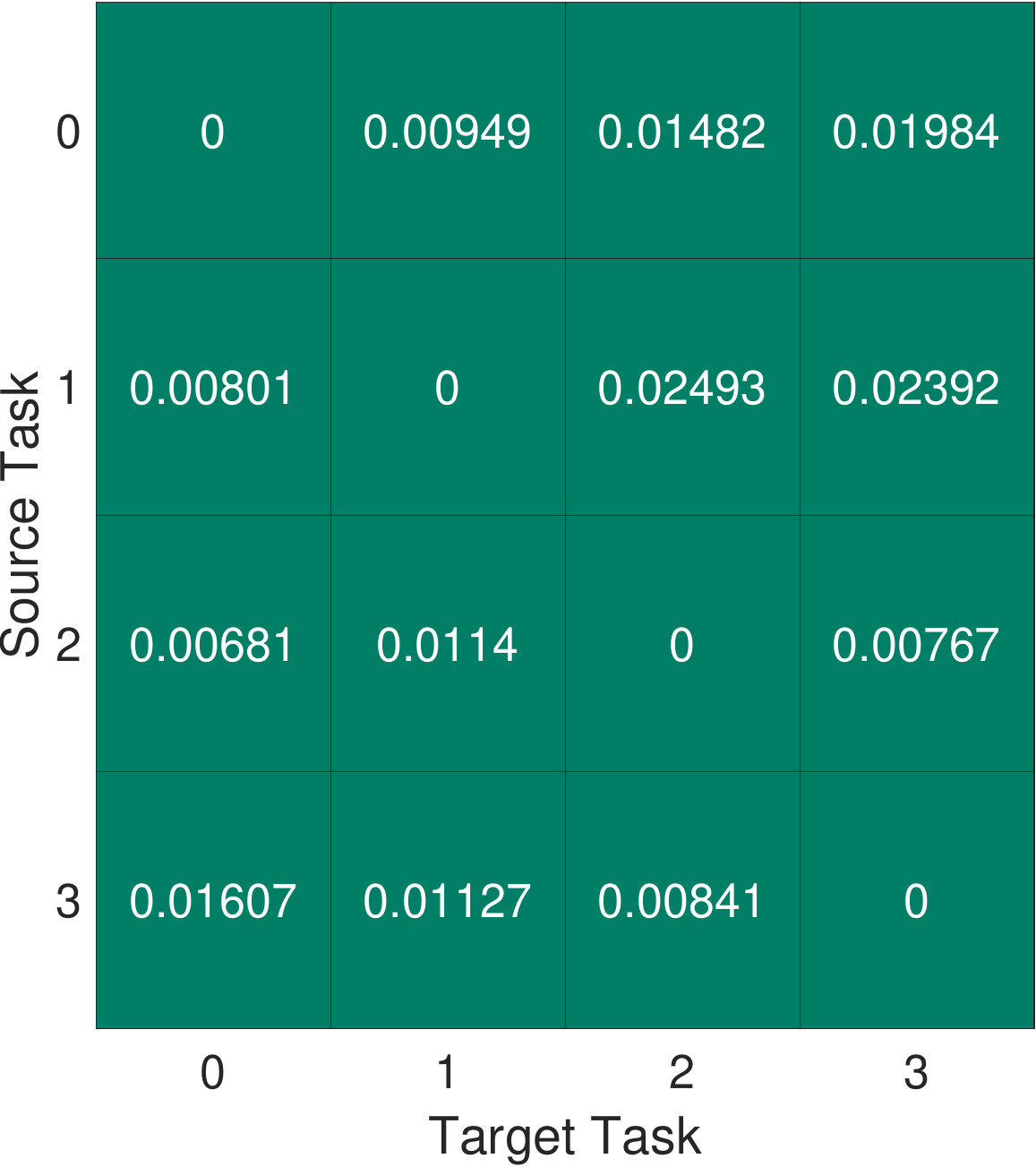}
        \caption{DenseNet-121}
    \end{subfigure}
    \caption{Distance from source tasks to the target tasks on CIFAR-100. The top row shows the mean values and the bottom row denotes the standard deviation of distances between classification tasks over 10 different trials.}
    \label{fig-cifar-100}
\end{figure*}

\begin{figure*}
    \centering
    \begin{subfigure}{.33\textwidth}
        \centering
        \includegraphics[height=4.5cm]{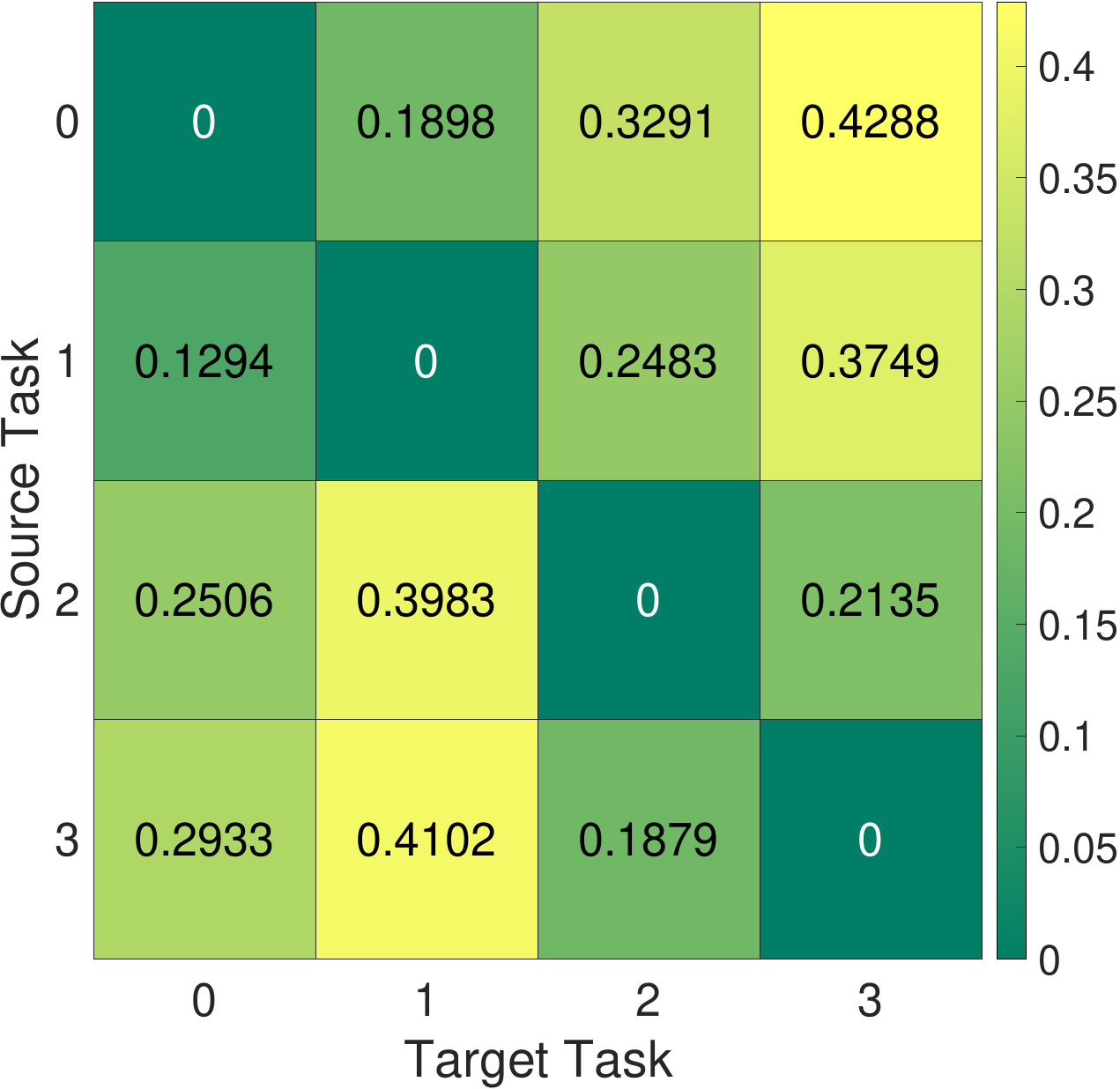}
    \end{subfigure}
    \begin{subfigure}{.33\textwidth}
        \centering
        \includegraphics[height=4.5cm]{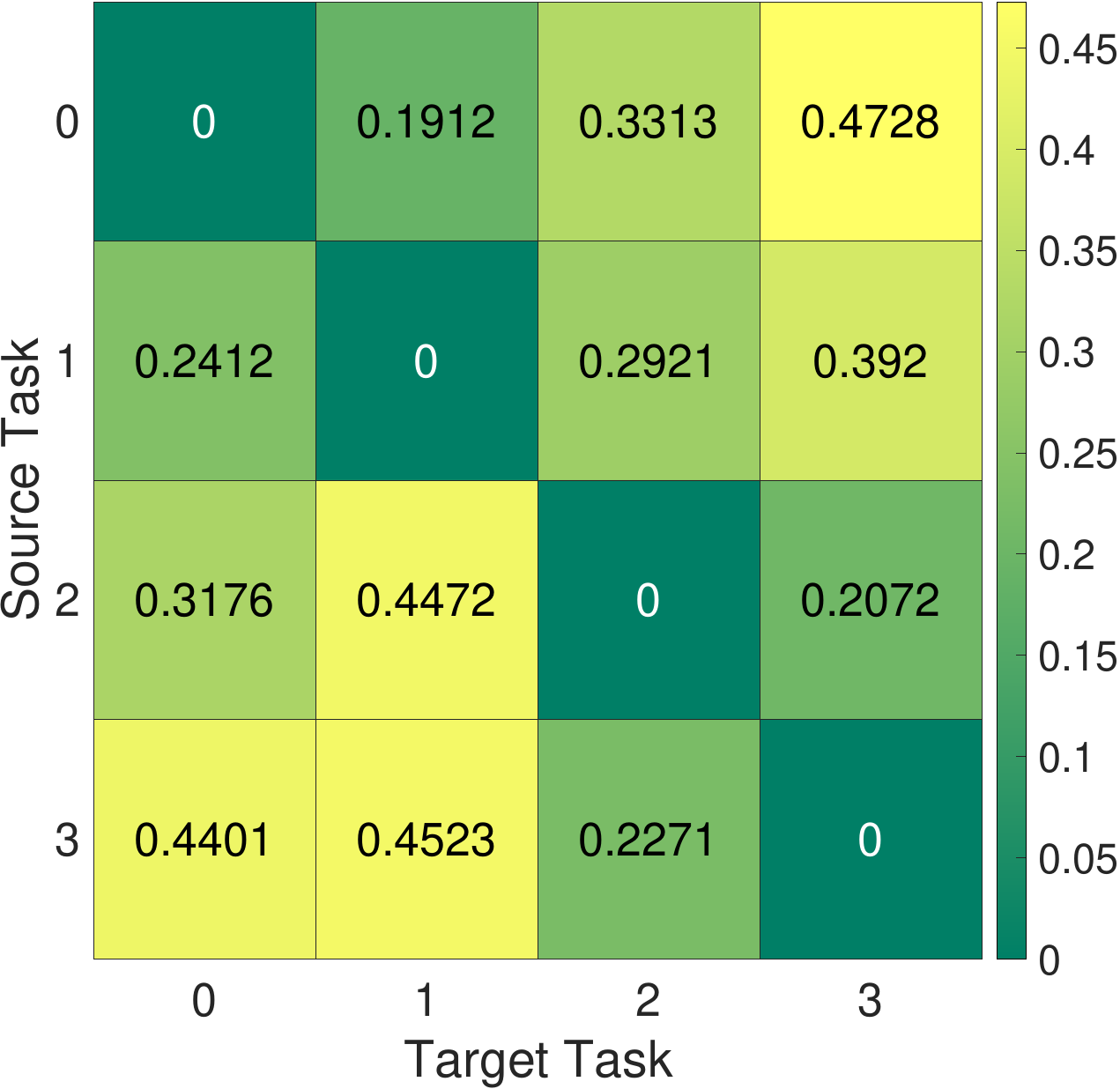}
    \end{subfigure}
    \begin{subfigure}{.33\textwidth}
        \centering
        \includegraphics[height=4.5cm]{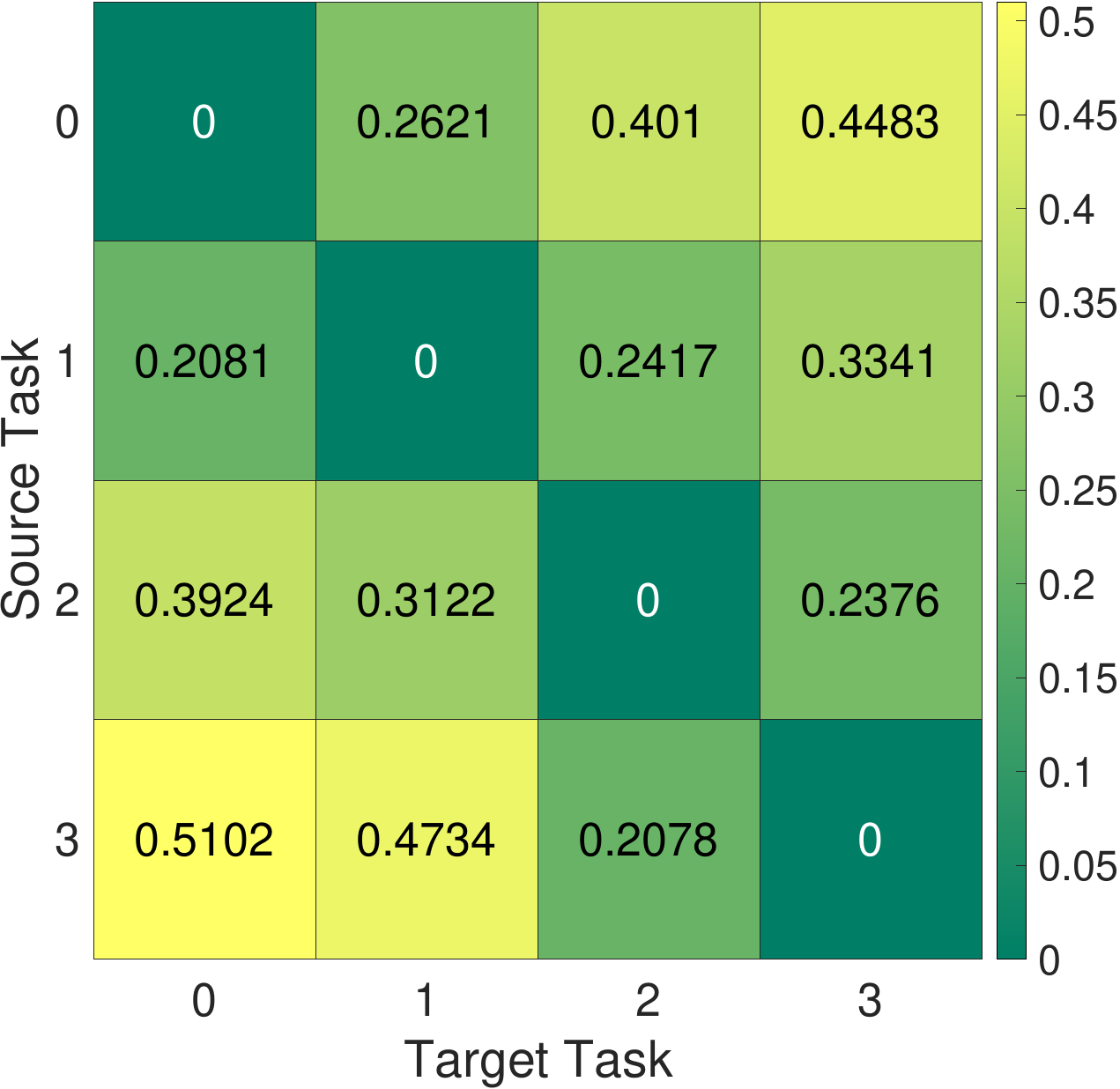}
    \end{subfigure}

    \begin{subfigure}{.33\textwidth}
        \centering
        \includegraphics[height=4.5cm]{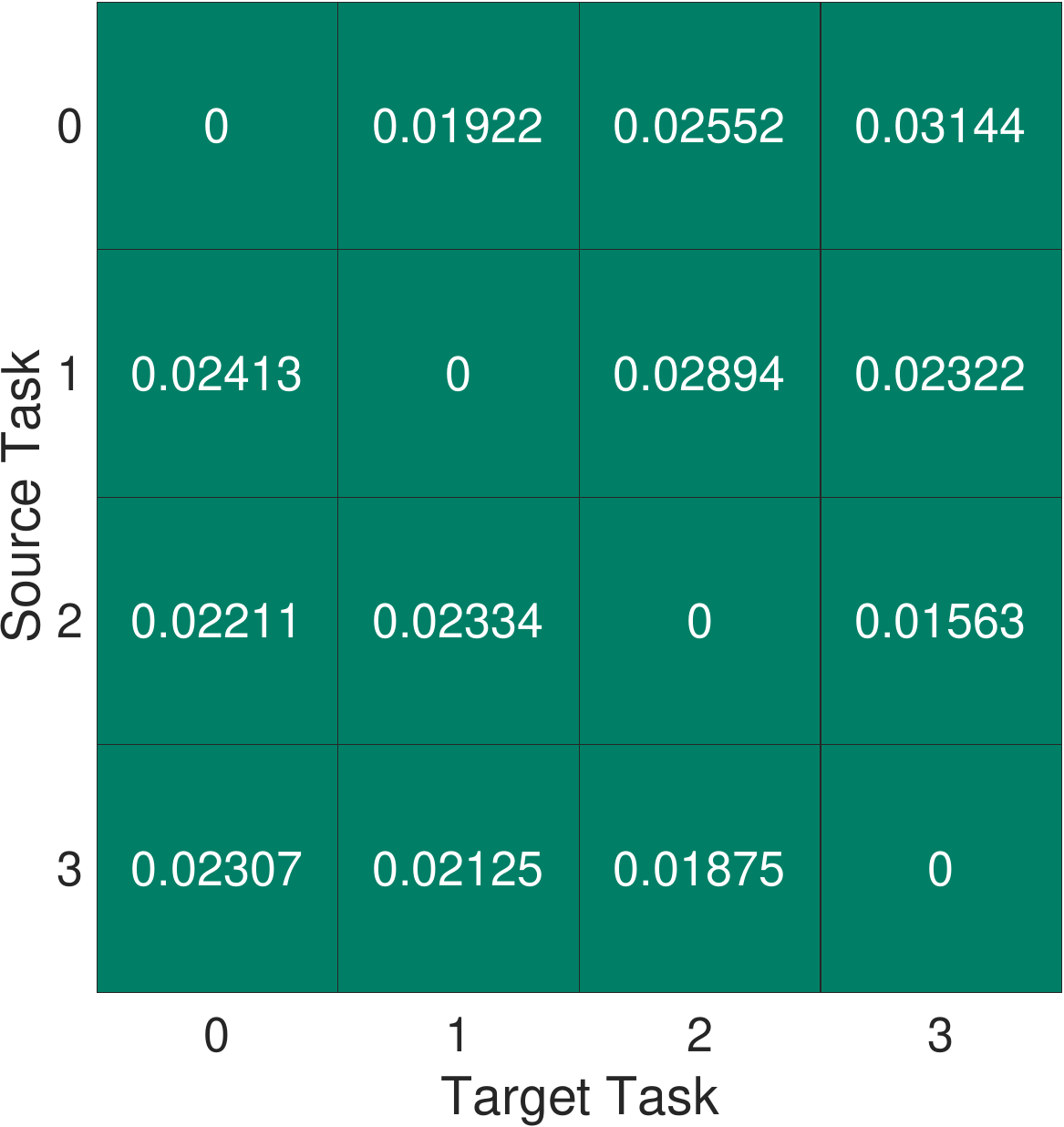}
        \caption{VGG-16}
    \end{subfigure}
    \begin{subfigure}{0.33\textwidth}
        \centering
        \includegraphics[height=4.5cm]{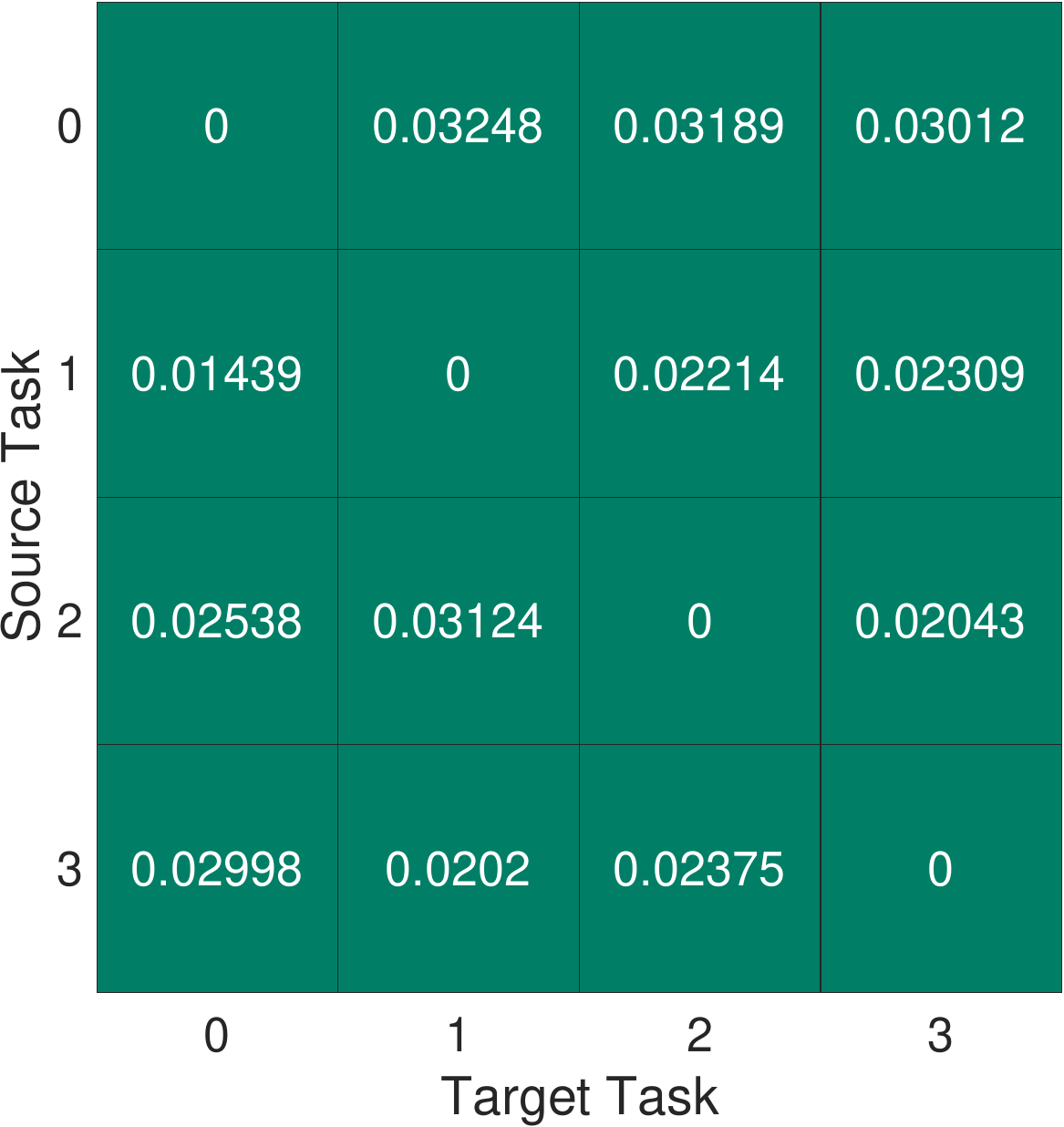}
        \caption{ResNet-18}
    \end{subfigure}
    \begin{subfigure}{.33\textwidth}
        \centering
        \includegraphics[height=4.5cm]{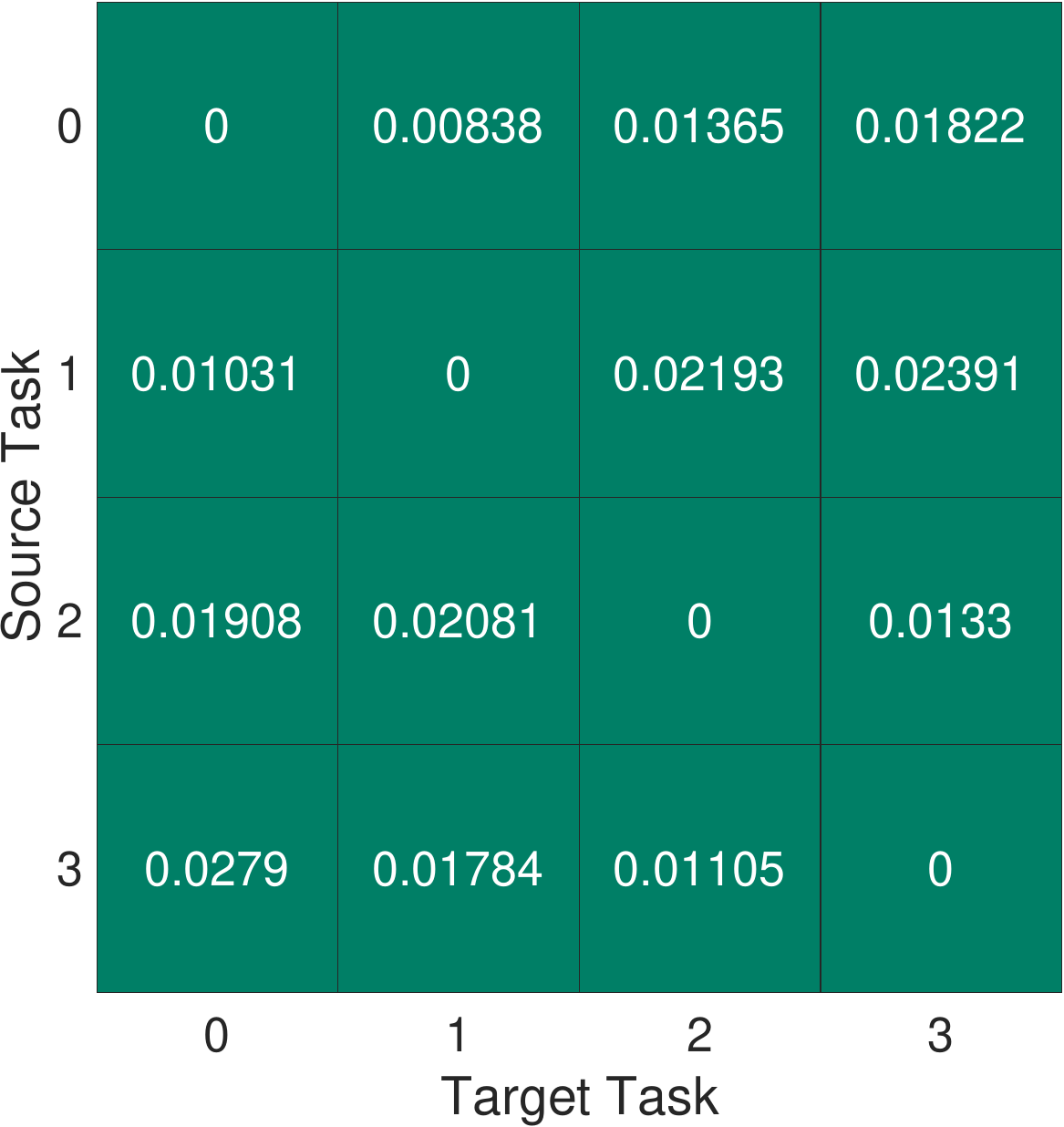}
        \caption{DenseNet-121}
    \end{subfigure}
    \caption{Distance from source tasks to the target tasks on ImageNet. The top row shows the mean values and the bottom row denotes the standard deviation of distances between classification tasks over 10 different trials.}
    \label{fig-imagenet}
\end{figure*}

\begin{figure*}
    \centering
    \begin{subfigure}{.33\textwidth}
        \centering
        \includegraphics[height=4.5cm]{mean-cifar10-vgg.pdf}
    \end{subfigure}
    \begin{subfigure}{.33\textwidth}
        \centering
        \includegraphics[height=4.5cm]{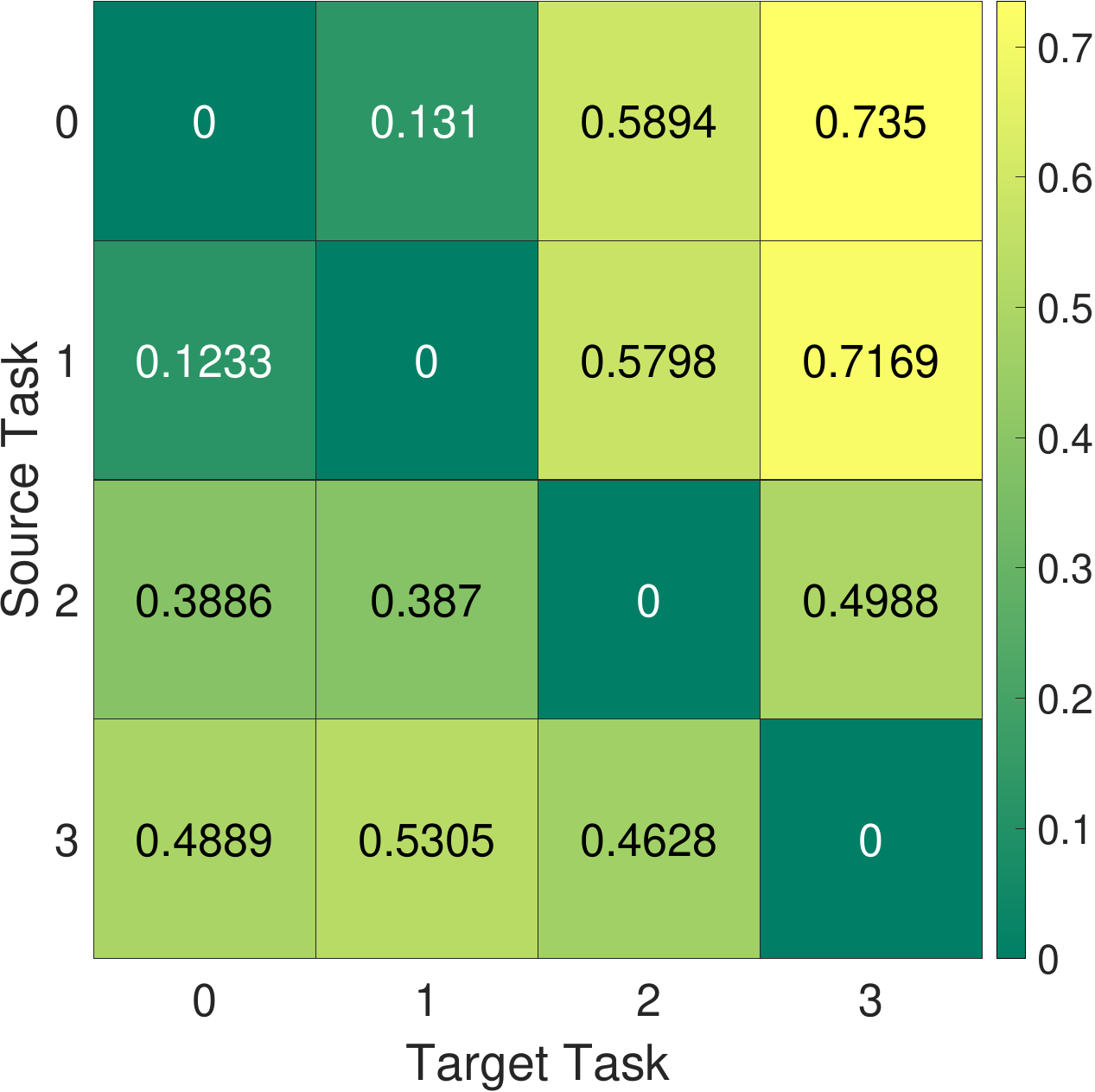}
    \end{subfigure}
    \begin{subfigure}{.33\textwidth}
        \centering
        \includegraphics[height=4.5cm]{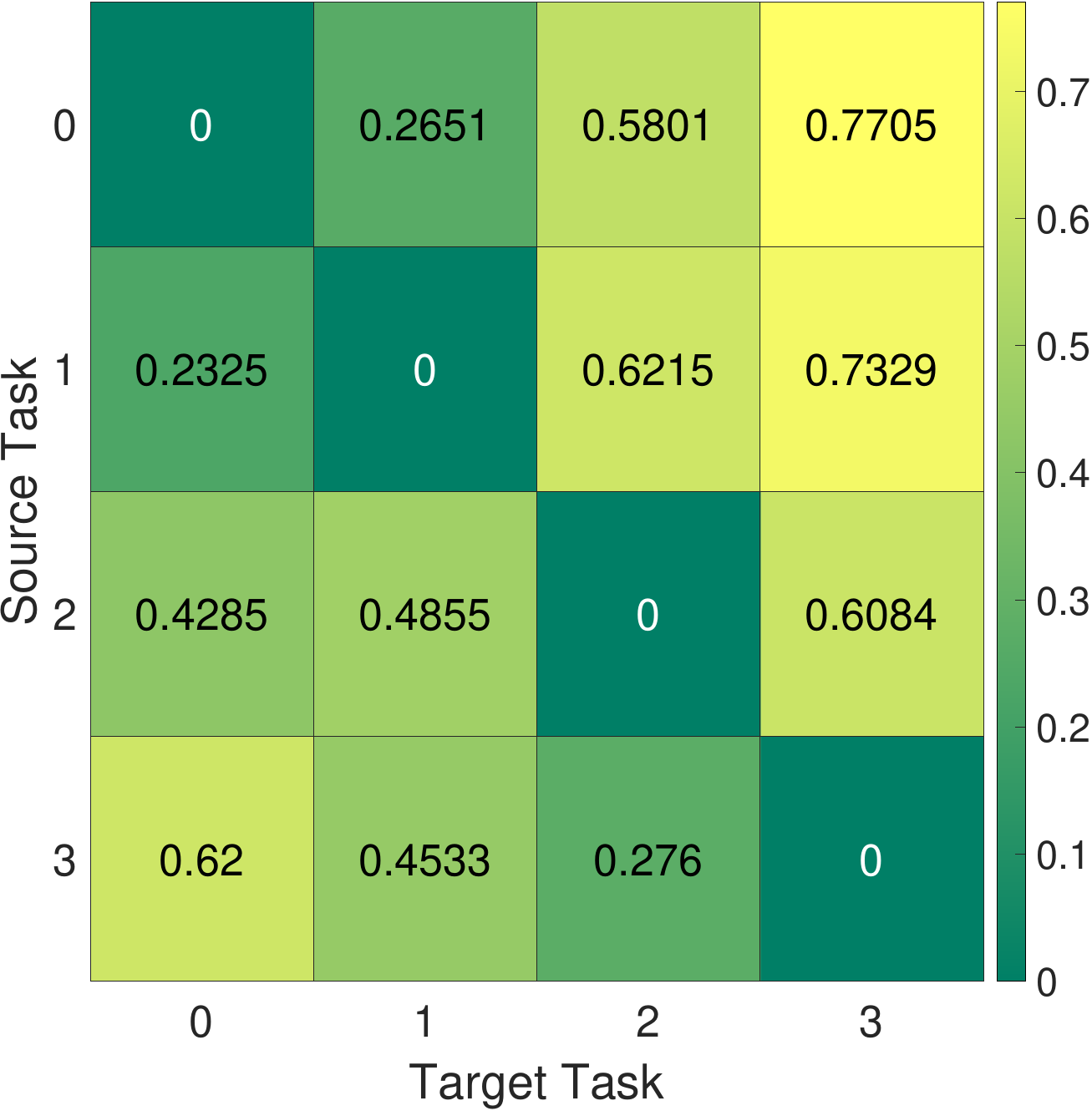}
    \end{subfigure}
    
    \begin{subfigure}{.33\textwidth}
        \centering
        \captionsetup{justification=centering}
        \includegraphics[height=4.5cm]{sig-cifar10-vgg.pdf}
        \caption{data augmentation}
    \end{subfigure}
    \begin{subfigure}{0.33\textwidth}
        \centering
        \captionsetup{justification=centering}
        \includegraphics[height=4.5cm]{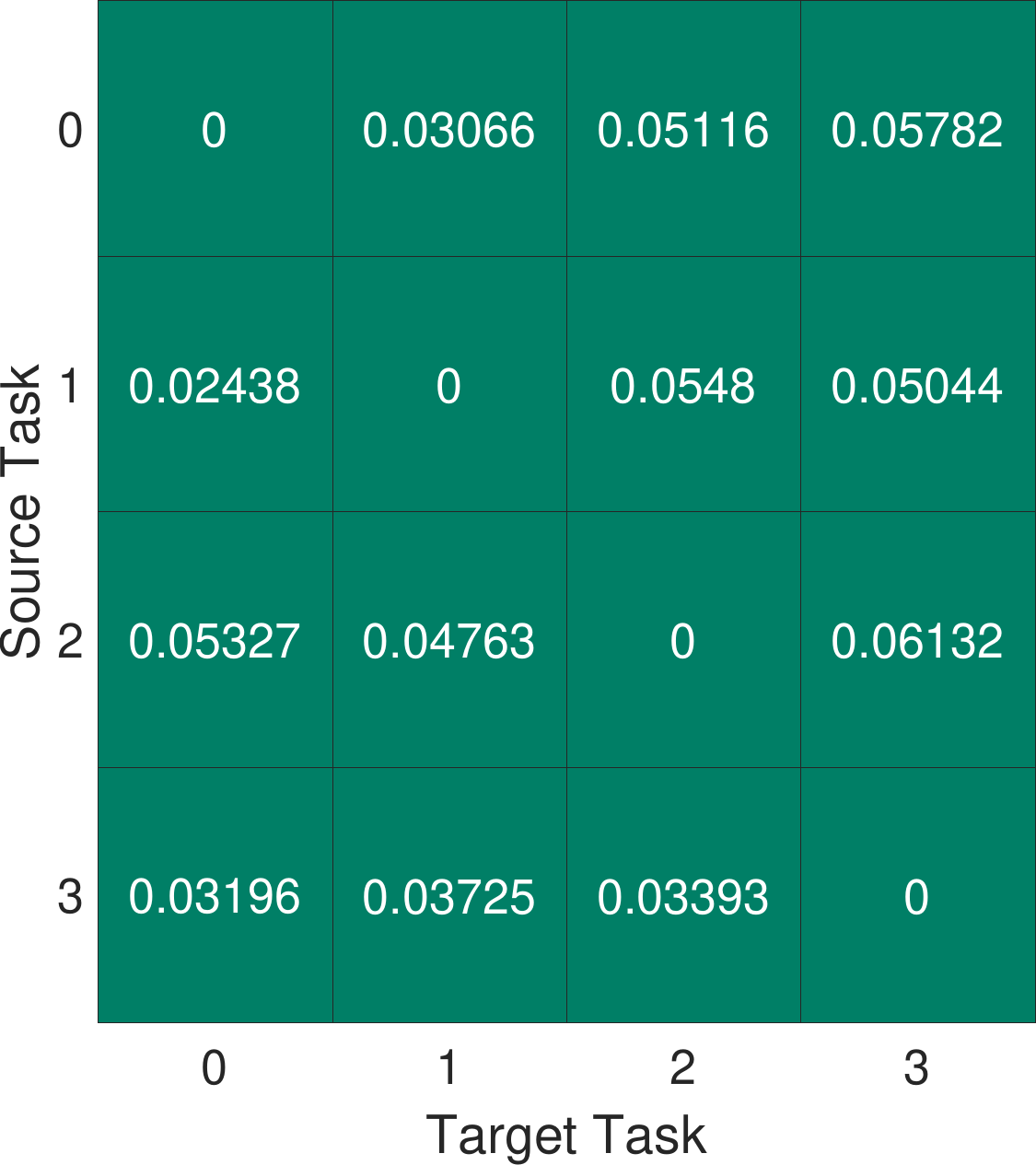}
        \caption{no data augmentation}
    \end{subfigure}
    \begin{subfigure}{.33\textwidth}
        \centering
        \captionsetup{justification=centering}
        \includegraphics[height=4.5cm]{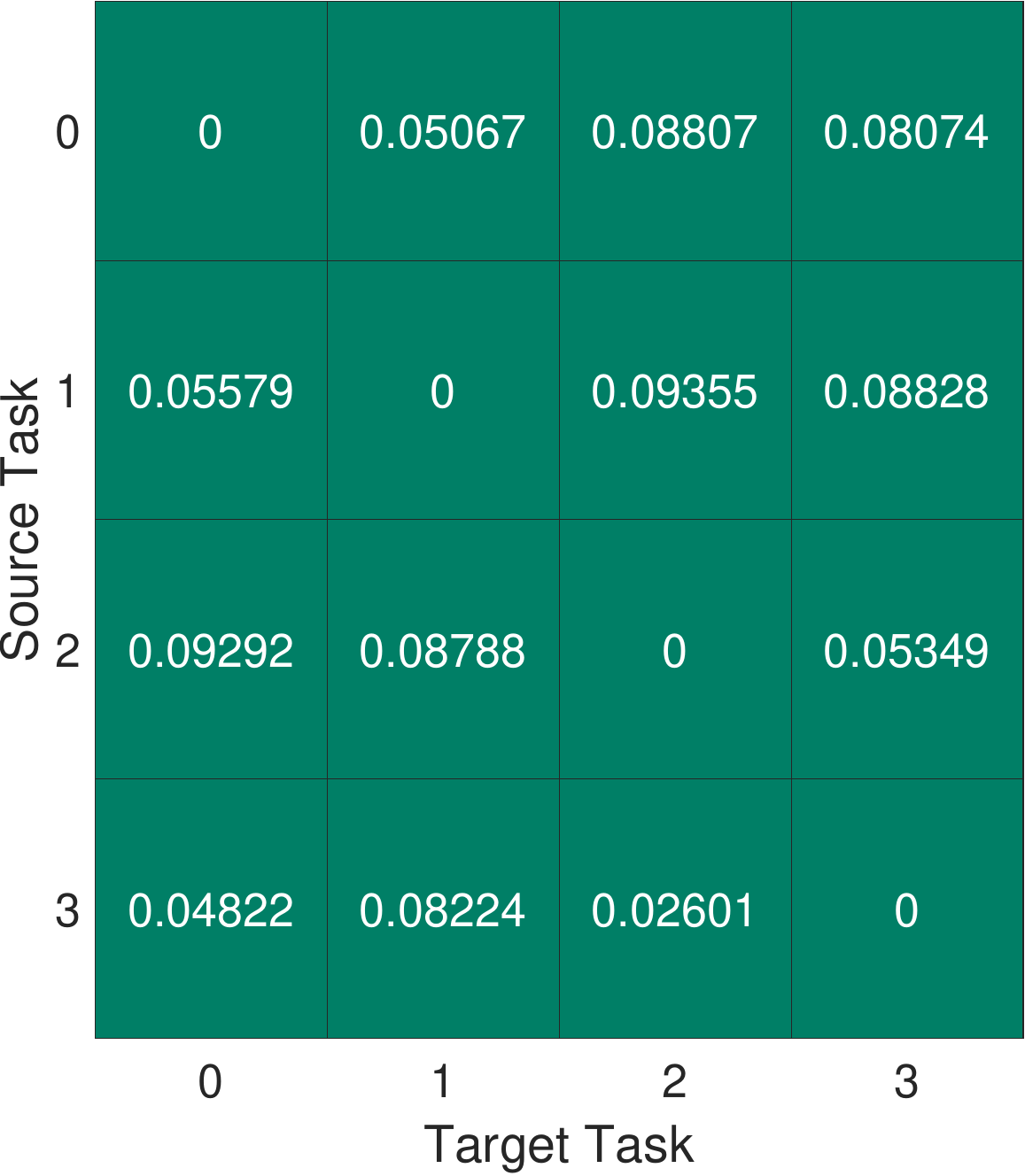}
        \caption{unbalanced data (full dataset)}
    \end{subfigure}
    \caption{The effect of different initial settings on computing distance between tasks defined on CIFAR-10 using VGG-16 as the $\varepsilon$-approximation network. The top row shows the mean values and the bottom row denotes the standard deviation of distances over 10 different trials. }
    \label{fig-cifar-10-b}
\end{figure*}

\subsection{Fisher Task Distance (FTD) Consistency}
\label{consistency}
In this experiment, we show the stability of the FTD by applying our distance on various classification tasks in MNIST, CIFAR-10, CIFAR-100, ImageNet, and Taskonomy datasets with different $\varepsilon$-approximation networks. For each dataset, we define $4$ classification tasks, all of which are variations of the full class classification task. For each task, we consider a balanced training dataset. That is, except for the classification tasks with all the labels, only a subset of the original training dataset is used such that the number of training samples across all the class labels to be equal. Additionally, we use $3$ widely-used and high-performance architectures as the $\varepsilon$-approximation networks, including VGG-16~\cite{simonyan2014very}, ResNet-18~\cite{he2016deep}, DenseNet-121~\cite{huang2017densely}. To make sure that our results are statistically significant, we run our experiments $10$ times with each of the $\varepsilon$-approximation networks being initialized with a different random seed each time and report the mean and the standard deviation of the computed distance.

\subsubsection{MNIST}
We define $4$ tasks on the MNIST dataset. Task $0$ and $1$ are the binary classification tasks of detecting digits $0$ and $6$, respectively. Task $2$ is a $5$-class classification of detecting digits $0, 1, 2, 3$, and anything else. Task $3$ is the full $10$ digits classification. Figure~\ref{fig-mnist} illustrates the mean and standard deviation of the distances between each pair of tasks after $10$ runs using $3$ different architectures. The columns of the tables denote the distance to the target task and the rows represent the distance from the source tasks. Our results suggest that Task $0$ and $1$ are highly related, and Task $3$ is the closest task to Task $2$. Moreover, the relation between tasks remain the same regardless of choosing $\varepsilon$-approximation networks.

\subsubsection{CIFAR-10}
We define $4$ tasks in the CIFAR-10 dataset. Task $0$ is a binary classification of indicating $3$ objects: automobile, cat, ship (i.e., the goal is to decide if the given input image consists of one of these three objects or not). Task $1$ is analogous to Task $0$ but with different objects: cat, ship, truck. Task $2$ is a $4$-class classification with labels bird, frog, horse, and anything else. Task $3$ is the standard $10$ objects classification. Figure~\ref{fig-cifar10} illustrates the mean and standard deviation of the distance between CIFAR-10 tasks over $10$ trial runs, using $3$ different architectures. As we can see in Figures~\ref{fig-cifar10}, the closest tasks to target tasks in all the tables always result in a unique task no matter what $\varepsilon$-approximation network we choose. Additionally, in Figure~\ref{fig-cifar-10-b} (in the appendix), we study the effect of different initial settings, such as training with/without data augmentation, or using unbalanced data set for the above $4$ tasks on the CIFAR-10 data set and using VGG-16 as the $\varepsilon$-approximation network. Again the same conclusion about the consistency of the FTD holds. 

\subsubsection{CIFAR-100}
We define $4$ tasks in the CIFAR-100 dataset, consisting of $100$ objects equally distributed in $20$ sub-classes, each sub-class has 5 objects. We define Task $0$ as a binary classification of detecting an object that belongs to vehicles $1$ and $2$ sub-classes or not (i.e., the goal is to decide if the given input image consists of one of these $10$ vehicles or not). Task $1$ is analogous to Task $0$ but with different sub-classes: household furniture and devices. Task $2$ is a multi-classification with $11$ labels defined on vehicles $1$, vehicles $2$, and anything else. Finally, Task $3$ is defined similarly to Task $2$; however, with the $21$-labels in vehicles $1$, vehicles $2$, household furniture, household device, and anything else. Figure~\ref{fig-cifar-100} illustrates the mean and the standard deviation of the distance between CIFAR-100 tasks after $10$ runs using $3$ different $\varepsilon$-approximation networks. Here, the closest tasks to any target tasks are distinctive regardless of the choice of the $\varepsilon$-approximation network.

\subsubsection{ImageNet}
Finally, we define four $10$-class classification tasks in ImageNet dataset. For each class, we consider $800$ for training and $200$ for the test samples. The list of $10$ classes in Task $0$ includes tench, English springer, cassette player, chain saw, church, French horn, garbage truck, gas pump, golf ball, parachute. Task $1$ is similar to Task $0$; however, instead of $3$ labels of tench, golf ball, and parachute, it has samples from the grey whale, volleyball, umbrella classes. In Task $2$, we also replace $5$ labels of grey whale, cassette player, chain saw, volleyball, umbrella in Task $0$ with another $5$ labels given by platypus, laptop, lawnmower, baseball, cowboy hat. Lastly, Task $3$ is defined as a $10$-class classification task with samples from the following classes: analog clock, candle, sweatshirt, birdhouse, ping-pong ball, hotdog, pizza, school bus, iPod, beaver. The mean and standard deviation tables of the distances between ImageNet tasks for $10$ trials with $3$ $\varepsilon$-approximation networks are illustrated in Figure~\ref{fig-imagenet}. Again, it is observed that the order of the distance between the source and target tasks remains the same independent of the $\varepsilon$-approximation networks.

Overall, although the value of task distance depends on the $\varepsilon$-approximation network, the trend remains the same across all $3$ architectures. In addition, the standard deviation values in the bottom row of the figures suggest that the computed task distance is stable as the fluctuations over the mean values do not show any overlap with each other.

\begin{figure*}[t]
    \centering
    \includegraphics[width=19cm]{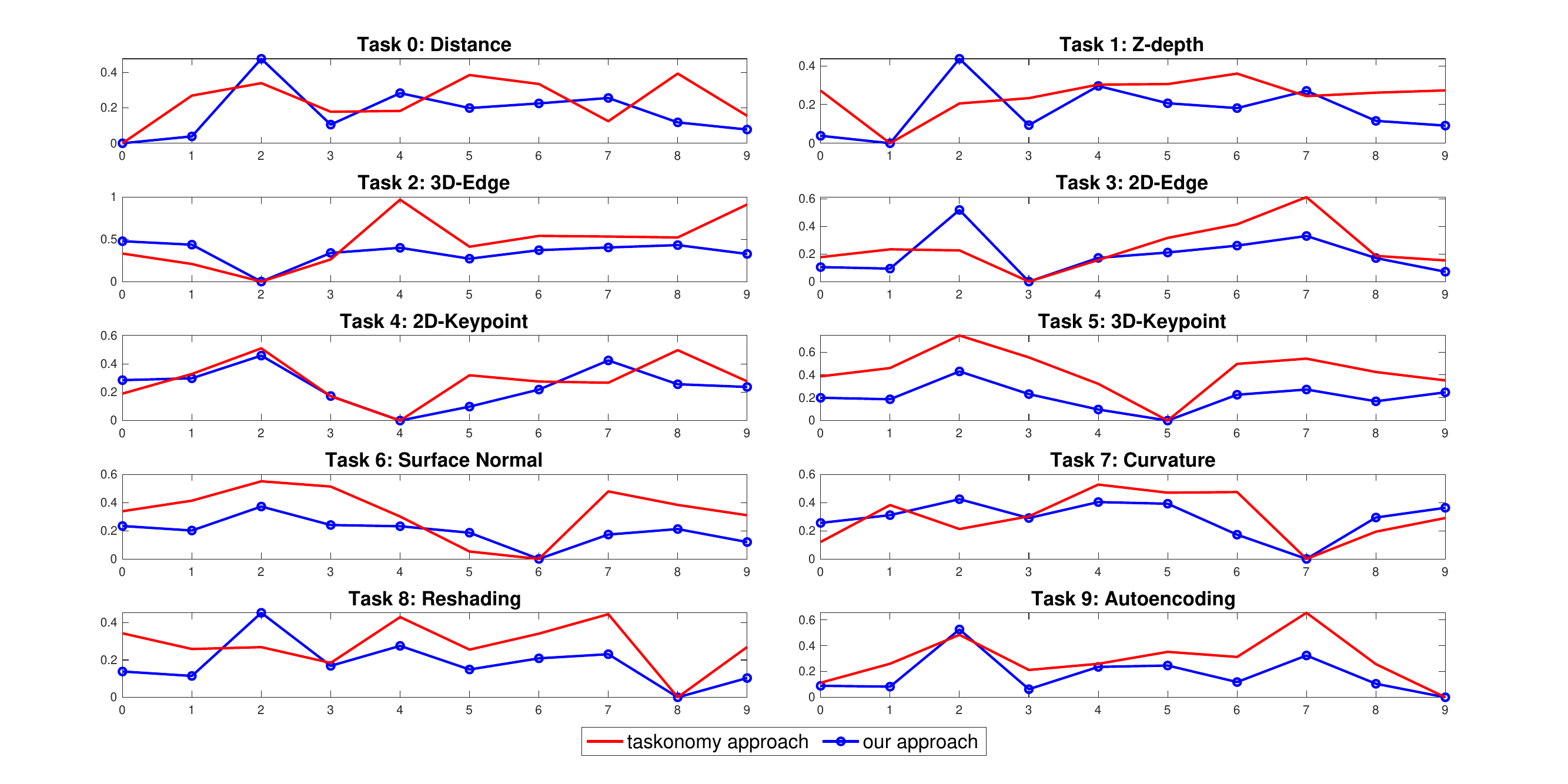}
    \caption{The comparison of task affinity between our approach and Taskonomy~\cite{zamir2018taskonomy} approach for each task.}
    \label{fig-taskonomy-compare}
\end{figure*}

\begin{figure}[t]
    \centering        
    \includegraphics[width=7.5cm]{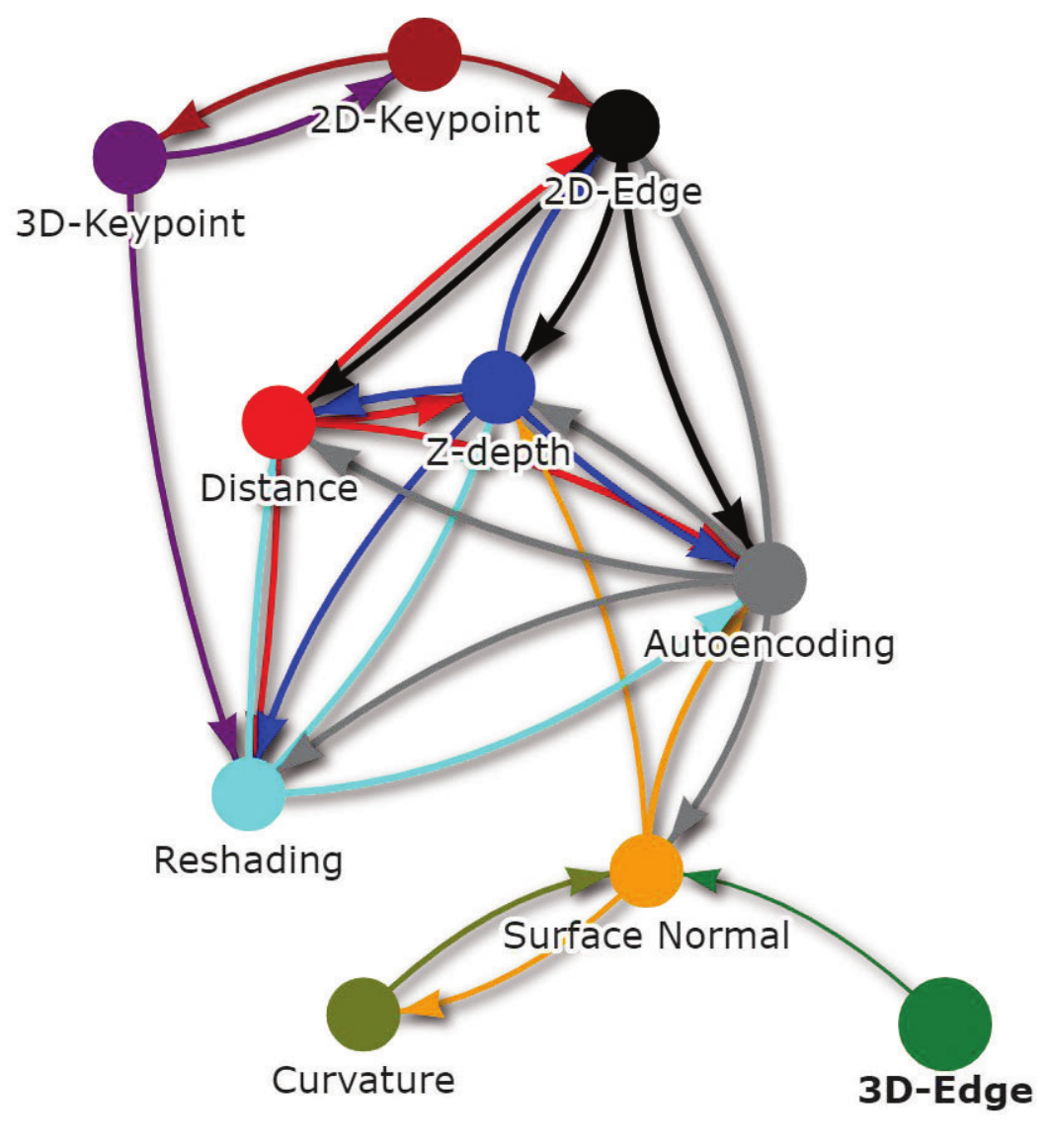}
    \caption{The atlas plot of tasks found from our approach indicates the computed relationship between tasks according to locations in space.}
    \label{fig-taskonomy-atlas}
\end{figure}

\begin{figure}
    \centering
    \begin{subfigure}{.45\textwidth}
        \centering
        \includegraphics[height=6.5cm]{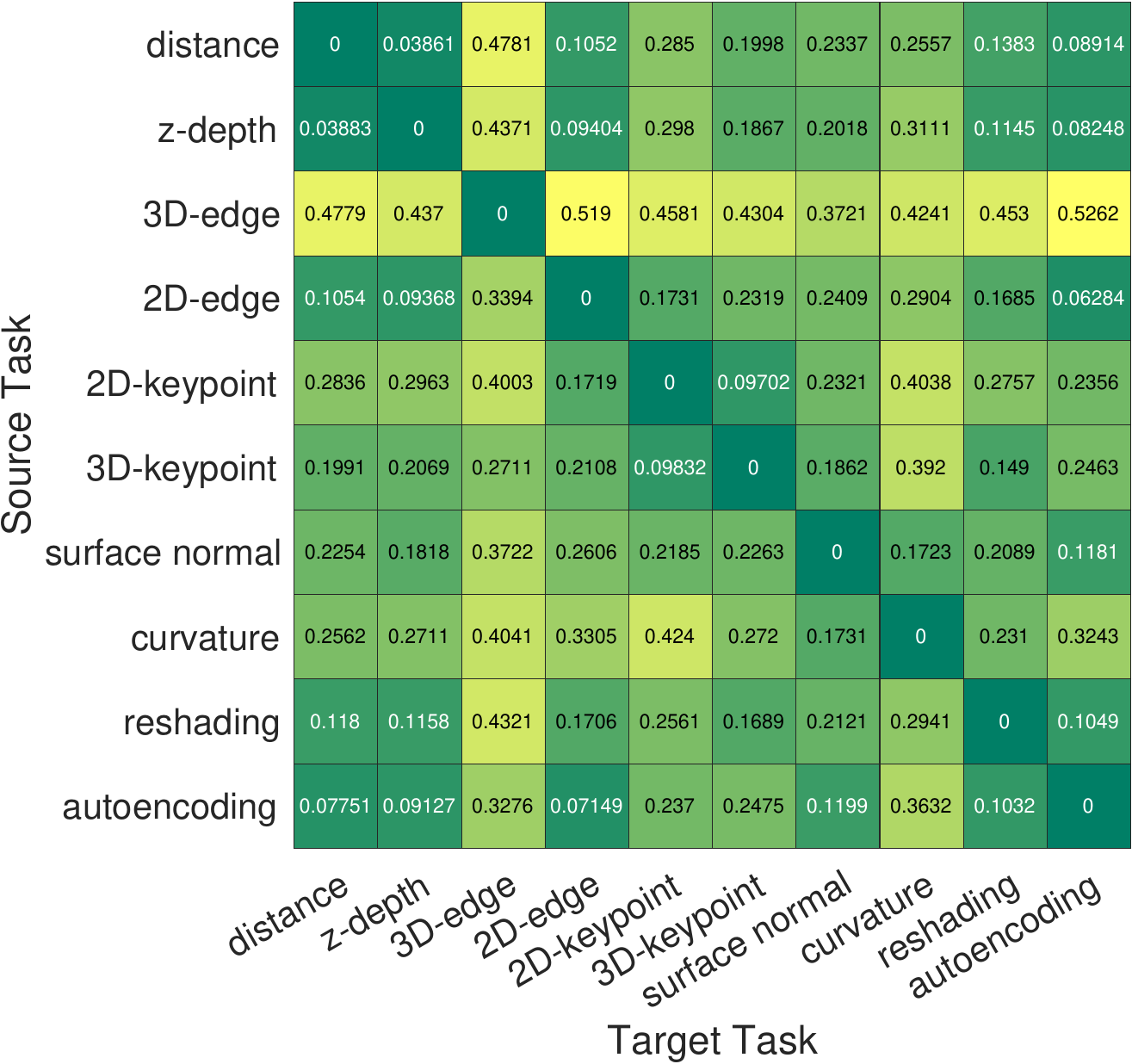}
    \end{subfigure}
    \begin{subfigure}{.45\textwidth}
        \centering
        \includegraphics[height=6.5cm]{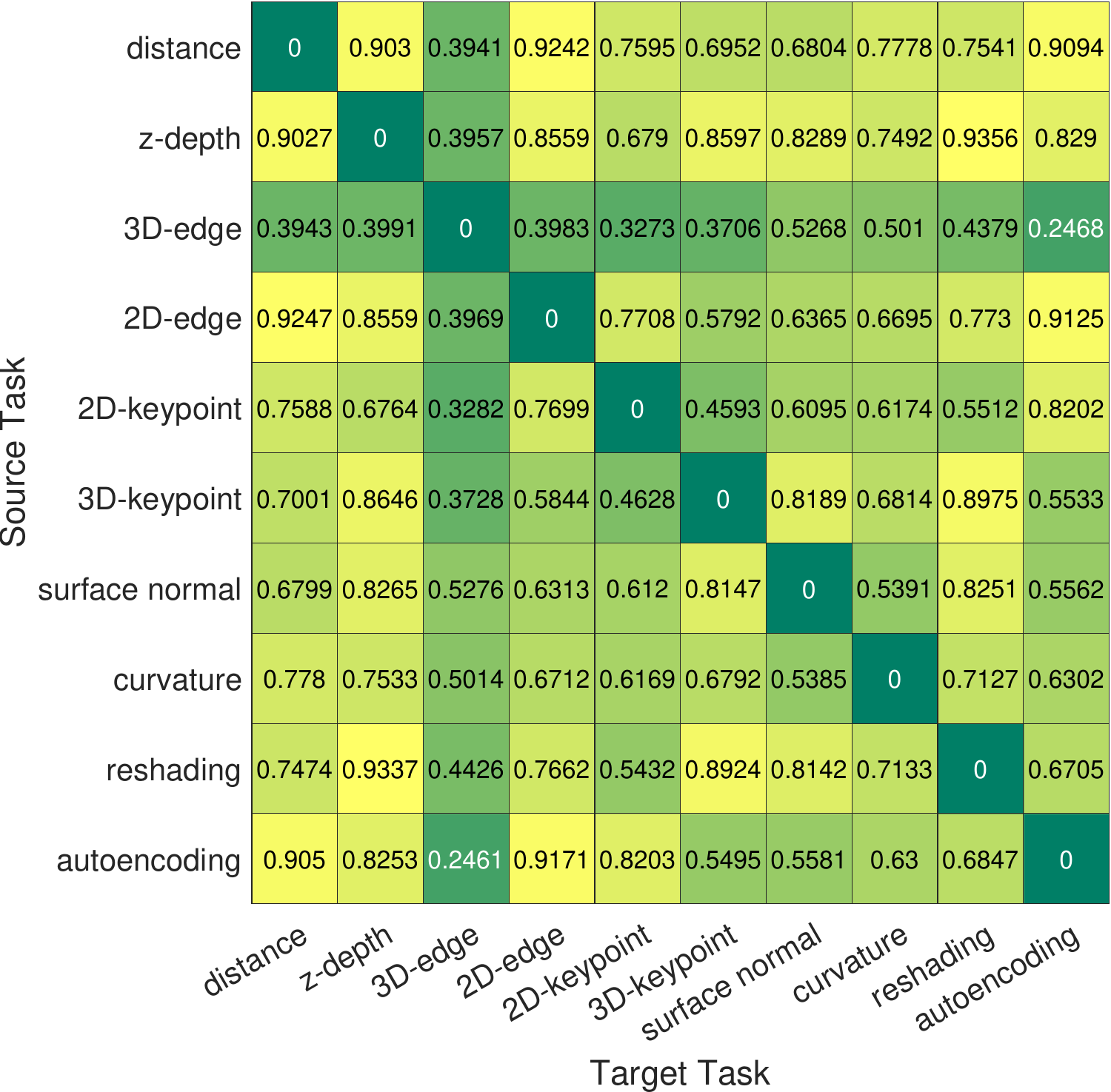}
    \end{subfigure}
    \begin{subfigure}{.45\textwidth}
        \centering
        \includegraphics[height=6.5cm]{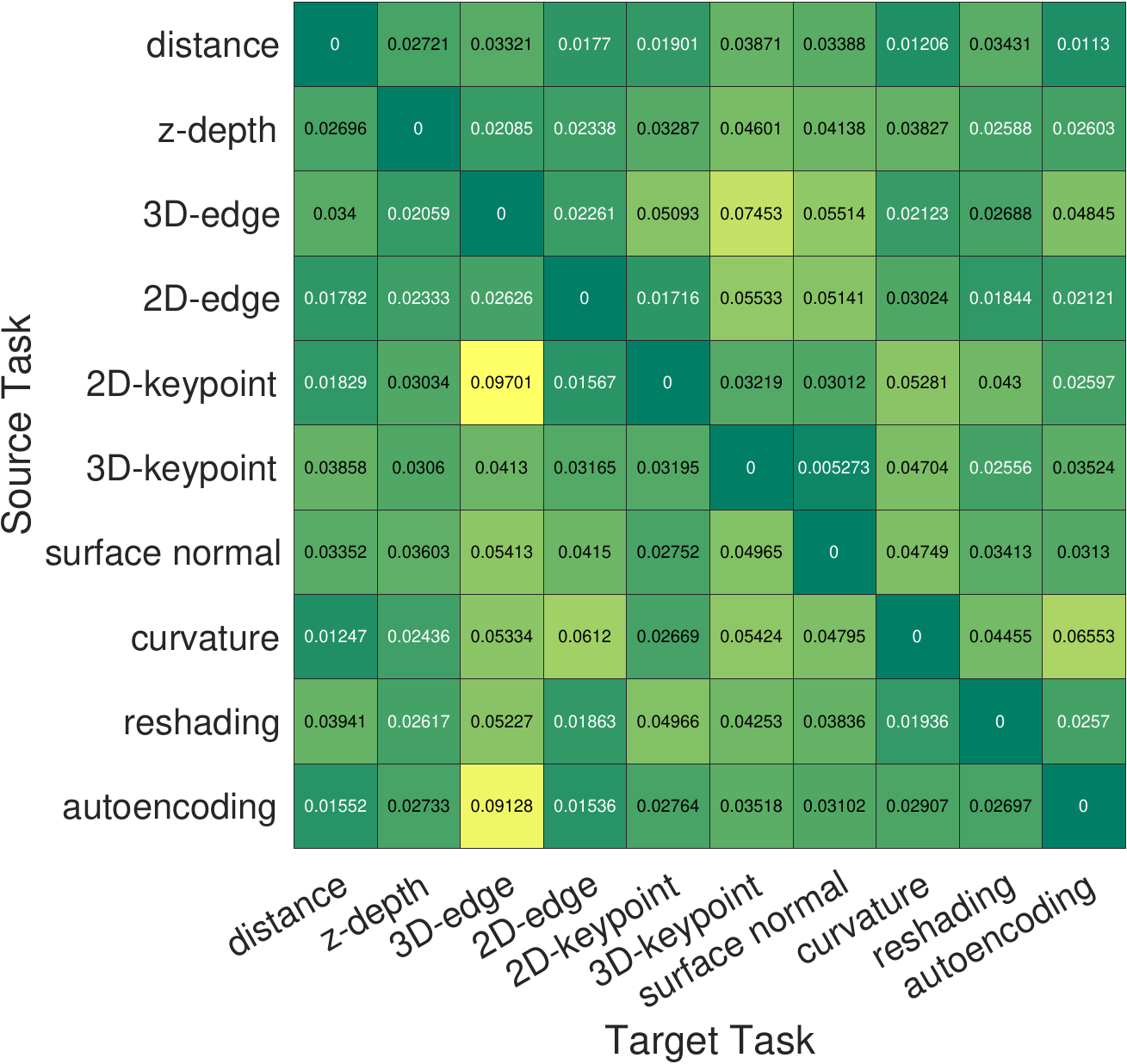}
    \end{subfigure}
    \caption{Comparison between FTD, cosine similarity, and the Taskonomy approach on tasks from Taskonomy dataset. The top panel shows the averaged distance found by our FTD approach over 10 different trials. The middle panel is the distances obtained by cosine similarity of the Fisher Information matrices. The bottom panel shows the task affinity found by brute-force approach~\cite{zamir2018taskonomy} after a single run.}
    \label{fig-taskonomy}
\end{figure}

\subsubsection{Taskonomy}
Here, we apply the FTD to the Taskonomy dataset and compare the task affinity found using our task distance with the brute-force method proposed by ~\cite{zamir2018taskonomy}, and the cosine similarity on the Taskonomy dataset. The Taskonomy dataset is a collection of $512\times 512$ colorful images of varied indoor scenes. It provides the pre-processed ground truth for $25$ vision tasks including semantic and low-level tasks. In this experiment, we consider a set of $10$ visual tasks, including: (0) Euclidean distance, (1) z-depth, (2) 3D-edge, (3) 2D-edge (4) 2D-keypoint, (5) 3D-keypoint, (6) surface normal, (7) curvature, (8) reshading, (9) autoencoding. Please see ~\cite{zamir2018taskonomy} for detailed task descriptions. Each task has $40,000$ training samples and $10,000$ test samples. A deep autoencoder architecture, including convolutional and linear layers, with a total of $50.51$ Mil parameters, is chosen to be the $\varepsilon$-approximation network for all visual tasks. 

In order to use the autoencoder for all the visual tasks without architecture modification, we convert all of the ground truth outputs to three channels. The top panel of Figure~\ref{fig-taskonomy} shows the mean of our task distance between each pair of tasks over $10$ different initial settings in the training of the $\varepsilon$-approximation network. The middle panel indicates the distance founded using the cosine similarity between a pair of Fisher Information matrices. The bottom panel shows the task affinity achieved by the brute-force method from the Taskonomy paper for only a single run. Note that, the task affinities are asymmetric (or non-commutative) in these approaches. As shown in Figure~\ref{fig-taskonomy}, our approach and the brute-force approach have a similar trend in identifying the closest tasks for a given incoming task. The task affinity found by the brute-force approach, however, requires a lot more computations, and does not give a clear boundary between tasks (e.g., difficult to identify the closest tasks to some target tasks). Our approach, on the other hand, is statistical and  determines a clear boundary for the identification of related tasks based on the distance. Lastly, the cosine similarity distance between Fisher Information matrices differs from both our approach and the brute-force approach in several tasks, while not showing any clear boundary between tasks. Consequently, the cosine similarity with Fisher Information matrices is not suitable for this dataset. Additionally, Figure~\ref{fig-taskonomy-compare} illustrates the comparison of task affinity by our approach and by the brute-force approach in the Taskonomy paper. We note that both approaches follow a similar trend for most of the tasks. Additionally, Figure~\ref{fig-taskonomy-atlas} shows the atlas plot of tasks found by our approach, which represents the relationship of tasks according to the location in space. Overall, our FTD is capable of identifying the related tasks with clear statistical boundaries between tasks (i.e., low standard deviation) while requiring significantly less computational resources compared to the brute-force approach (with less statistical significance) in the Taskonomy paper.

\subsection{Application in Neural Architecture Search}

In this experiment, we show the application of the FTD in Neural Architecture Search (NAS) by utilizing the computed distances between tasks from the various classification tasks in MNIST, CIFAR-10, CIFAR-100, and ImageNet datasets, and perform the architecture search.

\subsubsection{MNIST} We consider the problem of learning architecture for the target Task $2$ of MNIST dataset, using the other aforementioned tasks as our baseline tasks. It is observed in Figure~\ref{fig-mnist} that Task $3$ is the closest one to Task $2$. As the result, we apply cell structure and the operations of Task $3$ to generate a suitable search space for the target task. The results in Table~\ref{table-mnist} show the best test accuracy of the optimal architecture found by our method compared to well-known handcrafted networks (i.e., VGG-16, ResNet-18, DenseNet-121), the state-of-the-art NAS methods (i.e., random search algorithm~\cite{li2020random}, ENAS~\cite{pham2018efficient}, DARTS~\cite{liu2018darts}, PC-DARTS~\cite{xu2019pc}, TE-NAS~\cite{chen2021neural}). The architecture discovered by our method is competitive with these networks while it results in a significantly smaller amount of parameters and GPU days. 

\begin{table}[t]
\caption{\label{table-mnist} Comparison of our TA-NAS framework with the hand-designed image classifiers, and state-of-the-art NAS methods on Task $2$ (binary classification) of MNIST.}
\begin{center}
\begin{tabular}{l|cc|c}
\hline
\multicolumn{1}{l}{\bf Architecture} &\multicolumn{1}{c}{\bf Accuracy} &\multicolumn{1}{c}{\bf No. Params.} &\multicolumn{1}{c}{ \bf GPU}\\

\multicolumn{1}{l}{}  &\multicolumn{1}{c}{} &\multicolumn{1}{c}{\bf (Mil)} &\multicolumn{1}{c}{ \bf days}\\
\hline
VGG-16                  & 99.41     & 14.72     & -\\ 
ResNet-18               & 99.47     & 11.44     & -\\ 
DenseNet-121            & 99.61     & 6.95      & -\\
\hline
Random Search           & 99.52     & 2.12      & 5\\ 
ENAS (1st)              & 94.29     & 4.60      & 2\\
ENAS (2nd)              & 94.71     & 4.60      & 4\\
DARTS (1st)             & 98.82     & 2.17      & 2\\
DARTS (2nd)             & 99.44     & 2.23      & 4\\
PC-DARTS (1st)          & 98.76     & 1.78      & 2\\
PC-DARTS (2nd)          & 99.88     & 2.22      & 4\\
TE-NAS                  & 99.71     & 2.79      & 2\\
\hline
\textbf{TA-NAS (ours)}  & \textbf{99.86}     & \textbf{2.14}      & \textbf{2}\\  
\hline
\end{tabular}
\end{center}
\end{table}

\subsubsection{CIFAR-10 \& CIFAR-100} We consider the problem of searching for a high-performing and efficient architecture for Task $2$ in CIFAR-10, and Task $2$ in CIFAR-100 datasets. As observed in Figure~\ref{fig-cifar10} and Figure~\ref{fig-cifar-100}, we consider Task $3$ as the closest task for both cases. Results in Table~\ref{table-cifar10} and~\ref{table-cifar100} suggest that our constructed architectures for these tasks have higher test accuracy with a fewer number of parameters and GPU days compared to other approaches. The poor performance of ENAS in CIFAR-100 highlights the lack of robustness of this method since its search space is defined to only for full class classification in CIFAR-10.

\begin{table}[t]
\caption{\label{table-cifar100} Comparison of our TA-NAS framework with the hand-designed image classifiers, and state-of-the-art NAS methods on  Task $2$ ($11$-class classification) of CIFAR-100.}
\begin{center}
\begin{tabular}{l|cc|c}
\hline
\multicolumn{1}{l}{\bf Architecture} &\multicolumn{1}{c}{\bf Accuracy} &\multicolumn{1}{c}{\bf No. Params.} &\multicolumn{1}{c}{ \bf GPU}\\

\multicolumn{1}{l}{} &\multicolumn{1}{c}{} &\multicolumn{1}{c}{\bf (Mil)} &\multicolumn{1}{c}{ \bf days}\\
\hline
VGG-16                  & 83.93     & 14.72     & -\\ 
ResNet-18               & 84.56     & 11.44     & -\\ 
DenseNet-121            & 88.47     & 6.95      & -\\
\hline
Random Search           & 88.55         & 3.54         & 5\\ 
ENAS                    & 10.49         & 4.60         & 4\\
DARTS                   & 87.57         & 3.32         & 4\\
PC-DARTS                & 85.36         & 2.43         & 4\\
TE-NAS                  & 88.92         & 3.66         & 4\\
\hline
\textbf{TA-NAS (ours)}  & \textbf{90.96}     & \textbf{3.17}      & \textbf{4}\\ 
\hline
\end{tabular}
\end{center}
\end{table}

\begin{table}[t]
\caption{\label{table-cifar10} Comparison of our TA-NAS framework with the hand-designed image classifiers, and state-of-the-art NAS methods on Task $2$ ($4$-class classification) of CIFAR-10.}
\begin{center}
\begin{tabular}{l|cc|c}
\hline
\multicolumn{1}{l}{\bf Architecture} &\multicolumn{1}{c}{\bf Accuracy} &\multicolumn{1}{c}{\bf No. Params.} &\multicolumn{1}{c}{ \bf GPU}\\

\multicolumn{1}{l}{} &\multicolumn{1}{c}{} &\multicolumn{1}{c}{\bf (Mil)} &\multicolumn{1}{c}{ \bf days}\\
\hline
VGG-16                  & 86.75     & 14.72     & -\\ 
ResNet-18               & 86.93     & 11.44     & -\\ 
DenseNet-121            & 88.12     & 6.95      & -\\
\hline
Random Search           & 88.55     & 3.65      & 5\\ 
ENAS (1st)              & 73.23     & 4.60      & 2\\
ENAS (2nd)              & 75.22     & 4.60      & 4\\
DARTS (1st)             & 90.11     & 3.12      & 2\\
DARTS (2nd)             & 91.19     & 3.28      & 4\\
PC-DARTS (1st)          & 92.07     & 3.67      & 2\\
PC-DARTS (2nd)          & 92.49     & 3.66      & 4\\
TE-NAS                  & 91.02     & 3.78      & 2\\
\hline
\textbf{TA-NAS (ours)}  & \textbf{92.58}     & \textbf{3.13}      & \textbf{2}\\ 
\hline
\end{tabular}
\end{center}
\end{table}

\subsubsection{ImageNet} We consider Task $1$ in ImageNet dataset as the target task. Based on the computed distances in Figure~\ref{fig-imagenet}, we use Task $0$ as the closest source task to our target task. Table~\ref{table-imagenet} presents results indicating that our model has higher test accuracy with a fewer number of parameters compared to other approaches.  In complex datasets (e.g., CIFAR-100, ImageNet), the method with fixed search space (i.e., ENAS) is only capable of finding architectures for tasks in standard datasets, performs poorly compared with other methods in our benchmark. Our experiments suggest that the proposed framework can utilize the knowledge of the most similar task in order to find a high-performing architecture for the target task with a fewer number of parameters.

\begin{table}[t]
\caption{\label{table-imagenet} Comparison of our TA-NAS framework with the hand-designed image classifiers, and state-of-the-art NAS methods on Task $1$ ($10$-class classification) of ImageNet.}
\begin{center}
\begin{tabular}{l|cc|c}
\hline
\multicolumn{1}{l}{\bf Architecture} &\multicolumn{1}{c}{\bf Accuracy} &\multicolumn{1}{c}{\bf No. Params.} &\multicolumn{1}{c}{ \bf GPU}\\

\multicolumn{1}{l}{} &\multicolumn{1}{c}{} &\multicolumn{1}{c}{\bf (Mil)} &\multicolumn{1}{c}{ \bf days}\\
\hline
VGG-16                  & 89.88     & 14.72     & -\\ 
ResNet-18               & 91.14     & 11.44     & -\\ 
DenseNet-121            & 94.76     & 6.95      & -\\
\hline
Random Search           & 95.02     & 3.78      & 5\\ 
ENAS                    & 33.65     & 4.60      & 4\\
DARTS                   & 95.22     & 3.41      & 4\\
PC-DARTS                & 88.00     & 1.91      & 4\\
TE-NAS                  & 95.37     & 4.23      & 4\\
\hline
\textbf{TA-NAS (ours)}  & \textbf{95.92}     & \textbf{3.43}      & \textbf{4}\\
\hline
\end{tabular}
\end{center}
\end{table}

\section{Conclusions}
A task similarity measure based on the Fisher Information matrix has been introduced in this paper. This non-commutative measure called Fisher Task distance (FTD), represents the complexity of applying the knowledge of one task to another. The theory and experimental experiments demonstrate that the distance is consistent and well-defined. In addition, two  applications of FTD, including transfer learning and NAS has been investigated. In particular, the task affinity results found using this measure is well-aligned with the result found using the traditional transfer learning approach. Moreover, the FTD is applied in NAS to define a reduced search space of architectures for a target task. This reduces the complexity of the architecture search, increases its efficiency, and leads to superior performance with a smaller number of parameters. 

\section{Appendix}
Here, we provide the the proof of the proposition~\ref{proposition1}, the proof of the theorem~\ref{theorem1}, and the proof of the theorem~\ref{theorem2}.

\begin{proposition1}
Let $X$ be the dataset for the target task $T$. For any pair of structurally-similar $\varepsilon$-approximation network w.r.t $(T,X)$ using the full or stochastic gradient descent algorithm with the same initialization settings, learning rate, and the same order of data batches in each epoch for the SGD algorithm, the Fisher task distance between the above pair of $\varepsilon$-approximation networks is always zero.
\end{proposition1}
\begin{proof}[\textbf{Proof of Proposition \ref{proposition1}}]
Let $N_1$ and $N_{2}$ be two structurally-similar $\varepsilon$-approximation network w.r.t $(T,X)$ trained using the full or stochastic gradient descent algorithm. According to the Definition~\ref{SSappNetTX} and assumptions in the proposition, the Fisher Information Matrices of $N_1$ and $N_{2}$ are the same; hence, the Fisher task distance is zero.
\end{proof}

\begin{theorem1}
Let $X$ be the dataset for the target task $T$. Consider $N_1$ and $N_2$ as two structurally-similar $\varepsilon$-approximation networks w.r.t. $(T,X)$ respectively with the set of weights $\theta_1$ and $\theta_2$ trained using the SGD algorithm where a diminishing learning rate is used for updating weights. Assume that the loss function $L$ for the task $T$ is strongly convex, and its 3rd-order continuous derivative exists and bounded. Let the noisy gradient function in training $N_1$ and $N_2$ networks using SGD algorithm be given by:
\begin{equation}
    g({\theta_i}_t, {\epsilon_i}_t) = \nabla L({\theta_i}_t) + {\epsilon_i}_t, \ \ for\ \ i=1,2,
\end{equation}
where ${\theta_i}_t$ is the estimation of the weights for network $N_i$ at time $t$, and $\nabla L({\theta_i}_{t})$ is the true gradient at ${\theta_i}_t$. Assume that ${\epsilon_i}_t$  satisfies $\mathbb{E}[{\epsilon_i}_t|{\epsilon_i}_0,...,{\epsilon_i}_{t-1}] = 0$, and satisfies $\displaystyle s = \lim_{t\xrightarrow[]{}\infty} \big|\big|[{\epsilon_i}_t {{\epsilon_i}_t}^T | {\epsilon_i}_0,\dots,{\epsilon_i}_{t-1}]\big|\big|_{\infty}<\infty$ almost surely (a.s.). Then the Fisher task distance between $N_1$ and $N_2$ computed on the average of estimated weights up to the current time $t$ converges to zero as $t \rightarrow \infty$. That is,
\begin{align}
    d_t = \frac{1}{\sqrt{2}}\Big|\Big|\Bar{F_1}_t^{1/2} - \Bar{F_2}_t^{1/2}\Big|\Big|_F \xrightarrow[]{\mathcal{D}}0,
\end{align}
where $\Bar{F}_{i_{t}} = F(\bar{\theta}_{i_{t}})$ with $\bar{\theta}_{i_{t}} = \frac{1}{t}\sum_t \theta_{i_{t}}$, for $i=1,2$.
\end{theorem1}

\begin{proof}[\textbf{Proof of Theorem \ref{theorem1}}] Here, we show the proof for the full Fisher Information Matrix; however, the same results holds for the diagonal approximation of the Fisher Information Matrix. Let $N_1$ with weights $\theta_1$ and $N_2$ with weights $\theta_2$ be the two structurally-similar $\varepsilon$-approximation networks w.r.t. $(T,X)$. Let $n$ be the number of trainable parameters in $N_1$ and $N_2$. Since the objective function is strongly convex and the fact that $N_1$ and $N_2$ are structurally-similar $\varepsilon$-approximation networks w.r.t. $(T,X)$, both of these network will obtain the optimum solution $\theta^*$ after training a certain number of epochs with stochastic gradient descend. By the assumption on the conditional mean of the noisy gradient function and the assumption on  $S$, the conditional covariance matrix is finite as well, i.e., $C=\lim_{t\xrightarrow[]{}\infty} \mathbb{E}[{\epsilon_i}_t {{\epsilon_i}_t}^T | {\epsilon_i}_0,\dots,{\epsilon_i}_{t-1}]<\infty$; hence, we can invoke the following result due to Polyak et al. \cite{Polyak1992AccelerationOS}:
\begin{equation}\label{eq5}
    \sqrt{t}(\Bar{\theta}_t - \theta^*) \xrightarrow[]{\mathcal{D}} \mathcal{N} \Big( 0, \mathbf{H}\big(L(\theta^*)\big)^{-1} C \mathbf{H}^T\big(L(\theta^*)\big)^{-1} \Big),
\end{equation}
as $t \xrightarrow[]{} \infty$. Here, $\mathbf{H}$ is Hessian matrix, $\theta^*$ is the global minimum of the loss function, and $\Bar{\theta_t} = \frac{1}{t} \sum_t \theta_t$. Hence, for networks $N_1$ and $N_2$ and from Equation (\ref{eq5}), $\sqrt{t}(\Bar{\theta_1}_t - \theta^*)$ and $\sqrt{t}(\Bar{\theta_2}_t - \theta^*)$ are asymptotically normal random vectors:
\begin{align}
    \sqrt{t}(\Bar{\theta_1}_t - \theta^*) \xrightarrow[]{\mathcal{D}} \mathcal{N}(0, \Sigma_1), \label{eq7}\\
    \sqrt{t}(\Bar{\theta_2}_t - \theta^*) \xrightarrow[]{\mathcal{D}} \mathcal{N}(0, \Sigma_2), \label{eq8}
\end{align}
where $\Sigma_1 = \mathbf{H}\big(L(\theta^*)\big)^{-1} C_1 \mathbf{H}^T\big(L(\theta^*)\big)^{-1}$, and $\Sigma_2 = \mathbf{H}\big(L(\theta^*)\big)^{-1} C_2 \mathbf{H}^T\big(L(\theta^*)\big)^{-1}$. The Fisher Information $F(\theta)$ is a continuous and differentiable function of $\theta$. Since it is also a positive definite matrix, $F(\theta)^{1/2}$ is well-defined. Hence, by applying the Delta method to Equation (\ref{eq7}), we have:
\begin{equation}\label{eq9}
    \sqrt{t}(\Bar{F_1}_t^{1/2} - {F^*}^{1/2}) \xrightarrow[]{\mathcal{D}} \mathcal{N}(0, \Sigma_1^*),
\end{equation}
where $\Bar{F}_{1_t} = F(\bar{\theta}_{1_t})$, and the covariance matrix $\Sigma_1^*$ is given by $\Sigma_1^* = \mathbf{J}_{\theta} \Big(\mathbf{vec} \big( F(\theta^*)^{1/2} \big) \Big) \Sigma_1 \mathbf{J}_{\theta} \Big( \mathbf{vec} \big( F(\theta^*)^{1/2} \big) \Big)^T$. Here, 
$\mathbf{vec}()$ is the vectorization operator, $\theta^*$ is a $n \times 1$ vector of the optimum parameters, $F(\theta^*)$ is a $n \times n$ Matrix evaluated at the minimum, and  $\mathbf{J}_{\theta}(F(\theta^*))$ is a $n^2 \times n$ Jacobian matrix of the Fisher Information Matrix. Similarly, from Equation (\ref{eq8}), we have:
\begin{equation}\label{eq10}
    \sqrt{t}(\Bar{F_2}_t^{1/2} - {F^*}^{1/2}) \xrightarrow[]{\mathcal{D}} \mathcal{N}(0, \Sigma_2^*),
\end{equation}
where $\Sigma_2^* = \mathbf{J}_{\theta} \Big( \mathbf{vec} \big( F(\theta^*)^{1/2} \big) \Big) \Sigma_2 \mathbf{J}_{\theta} \Big( \mathbf{vec} \big (F(\theta^*)^{1/2} \big) \Big)^T$. As a result, $(\Bar{F_1}_t^{1/2} - \Bar{F_2}_t^{1/2})$ is asymptotically a normal random vector:
\begin{equation}
    (\Bar{F_1}_t^{1/2} - \Bar{F_2}_t^{1/2}) \xrightarrow[]{\mathcal{D}} \mathcal{N} \Big (0,V_1 \Big).
\end{equation}
where $V_1 = \frac{1}{t}(\Sigma_1^* + \Sigma_2^*)$. As $t$ approaches infinity, $\displaystyle \frac{1}{t}(\Sigma_1^* + \Sigma_2^*) \xrightarrow[]{} 0$. As a result,    $d_t=\frac{1}{\sqrt{2}}\norm{\Bar{F_1}_t^{1/2} - \Bar{F_2}_t^{1/2}}_F \xrightarrow[]{\mathcal{D}} 0$.
\end{proof}

\begin{theorem2}
Let $X_A$ be the dataset for the task $T_A$ with the objective function $L_A$, and $X_B$ be the dataset for the task $T_B$ with the objective function $L_B$. Assume $X_A$ and $X_B$ have the same distribution. Consider an $\varepsilon$-approximation network $N$ trained using both datasets $X_A^{(1)}$ and $X_B^{(1)}$ respectively with the objective functions $L_A$ and $L_B$ to result weights $\theta_{A_{t}}$ and $\theta_{B_{t}}$ at time $t$. Under the same assumptions on the moment of gradient noise in SGD algorithm and the loss function stated in Theorem~\ref{theorem1}, the FTD from the task $A$ to the task $B$ computed from the Fisher Information matrices of the average of estimated weights up to the current time $t$ converges to a constant as $t \rightarrow \infty$. That is,
\begin{align}
    d_t = \frac{1}{\sqrt{2}}\norm{\Bar{F_A}_t^{1/2} - \Bar{F_B}_t^{1/2}}_F \xrightarrow[]{\mathcal{D}} \frac{1}{\sqrt{2}}\norm{{F_A^*}^{1/2} - {F_B^*}^{1/2}}_F,
\end{align}
where $\Bar{F}_{A_{t}}$ is given by $\Bar{F}_{A_{t}} = F(\bar{\theta}_{A_{t}})$ with $\bar{\theta}_{A_{t}} = \frac{1}{t}\sum_t \theta_{A_{t}}$, and $\Bar{F}_{B_{t}}$ is defined in a similar way.
\end{theorem2}

\begin{proof}[\textbf{Proof of Theorem \ref{theorem2}}]
Let $\theta_{A_{t}}$ and $\theta_{B_{t}}$ be the sets of weights at time $t$ from the $\varepsilon$-approximation network $N$ trained using both data sets $X_A^{(1)}$ and $X_B^{(1)}$, respectively with the objective functions $L_A$ and $L_B$. Since the objective functions are strongly convex, both of these sets of weights will obtain the optimum solutions $\theta_A^*$ and $\theta_B^*$ after training a certain number of epochs with stochastic gradient descend. Similar to the proof of Theorem \ref{theorem1}, by invoking the Polyak et al. \cite{Polyak1992AccelerationOS}, random vectors of $\sqrt{t}(\Bar{\theta_A}_t - \theta^*_A)$ and $\sqrt{t}(\Bar{\theta_B}_t - \theta^*_B)$ are asymptotically normal:
\begin{align}
    \sqrt{t}(\Bar{\theta_A}_t - \theta^*_A) \xrightarrow[]{\mathcal{D}} \mathcal{N}(0, \Sigma_A), \label{eq17}\\
    \sqrt{t}(\Bar{\theta_B}_t - \theta^*_B) \xrightarrow[]{\mathcal{D}} \mathcal{N}(0, \Sigma_B), \label{eq18}
\end{align}
where $\Sigma_A = \mathbf{H}\big(L(\theta^*_A)\big)^{-1} C_A \mathbf{H}^T\big(L(\theta^*_A)\big)^{-1}$, and $\Sigma_B = \mathbf{H}\big(L(\theta^*_B)\big)^{-1} C_B \mathbf{H}^T\big(L(\theta^*_B)\big)^{-1}$ (here, $C_A$ and $C_B$ denote the conditional covariance matrices, corresponding to the gradient noise in $\theta^*_A$ and $\theta^*_A$, respectively).  The Fisher Information $F(\theta)$ is a continuous and differentiable function of $\theta$, and it is also a positive definite matrix; thus, $F(\theta)^{1/2}$ is well-defined. Now, by applying the Delta method to Equation (\ref{eq17}), we have: 
\begin{equation}\label{eq20}
    (\Bar{F_A}_t^{1/2} - {F_A^*}^{1/2}) \xrightarrow[]{\mathcal{D}} \mathcal{N} \big( 0, \frac{1}{t}\Sigma_A^* \big),
\end{equation}
where $\bar{F}_{At} = F(\bar{\theta}_{At})$, and the covariance matrix is given by $\Sigma_A^* = \mathbf{J}_{\theta} \Big( \mathbf{vec} \big( F(\theta_A^*)^{1/2} \big) \Big) \Sigma_A \mathbf{J}_{\theta} \Big( \mathbf{vec} \big( F(\theta_A^*)^{1/2} \big) \Big)^T$. Likewise, from Equation (\ref{eq18}), we have:
\begin{equation}\label{eq21}
    (\Bar{F_B}_t^{1/2} - {F_B^*}^{1/2}) \xrightarrow[]{\mathcal{D}} \mathcal{N} \big( 0, \frac{1}{t}\Sigma_B^* \big),
\end{equation}
where $\bar{F}_{Bt} = F(\bar{\theta}_{Bt})$, and the covariance matrix is given by $\Sigma_B^* = \mathbf{J}_{\theta}\Big(\mathbf{vec}\big(F(\theta_B^*)^{1/2}\big)\Big) \mathbf{J}_{\theta}\Big(\mathbf{vec}\big(F(\theta_B^*)^{1/2}\big)\Big)^T$. From Equation (\ref{eq20}) and (\ref{eq21}), we obtain:
\begin{equation}
    (\Bar{F_A}_t^{1/2} - \Bar{F_B}_t^{1/2}) \xrightarrow[]{\mathcal{D}} \mathcal{N}\Big(\mu_2, V_2\Big),
\end{equation}
where $\mu_2=({F_A^*}^{1/2} - {F_B^*}^{1/2})$ and $V_2 = \frac{1}{t}(\Sigma_A^*+\Sigma_B^*)$. Since $(\Bar{F_A}_t^{1/2}-\Bar{F_B}_t^{1/2}) - ({F_A^*}^{1/2}-{F_B^*}^{1/2})$ is asymptotically normal with the covariance goes to zero as $t$ approaches infinity, all of the entries go to zero, we conclude that 
\begin{equation}
    d_t = \frac{1}{\sqrt{2}}\norm{\Bar{F_A}_t^{1/2} - \Bar{F_B}_t^{1/2}}_F \xrightarrow[]{\mathcal{D}} \frac{1}{\sqrt{2}}\norm{{F_A^*}^{1/2} - {F_B^*}^{1/2}}_F.
\end{equation}
\end{proof}

\bibliography{ref}
\bibliographystyle{ieeetr}

\vfill
\pagebreak
\begin{IEEEbiography}[{\includegraphics[width=1in,height=1.25in,clip,keepaspectratio]{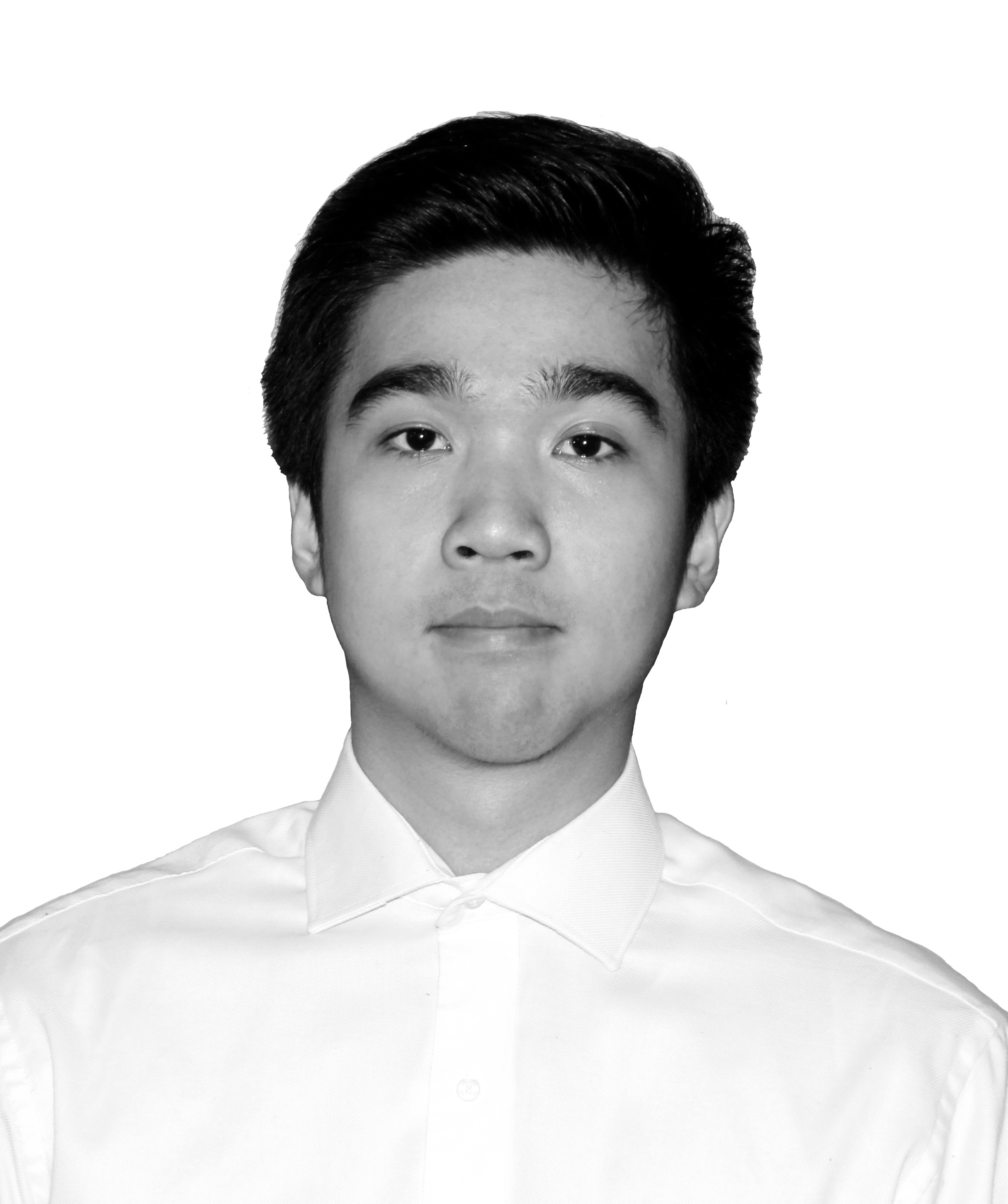}}]{Cat P. Le} received the B.S. degree with Summa Cum Laude in electrical and computer engineering from Rutgers University, in 2016 and the M.S. degree in electrical engineering from California Institute of Technology (Caltech), in 2017. He is currently pursuing the Ph.D. degree in electrical and computer engineering at Duke University, under the supervision of Dr. Vahid Tarokh. His research interest includes image processing, computer vision, machine learning, with a focus on transfer learning, continual learning, and neural architecture search. His awards and honors include the Matthew Leydt Award, John B. Smith Award.
\end{IEEEbiography}

\begin{IEEEbiography}[{\includegraphics[width=1in,height=1.25in,clip,keepaspectratio]{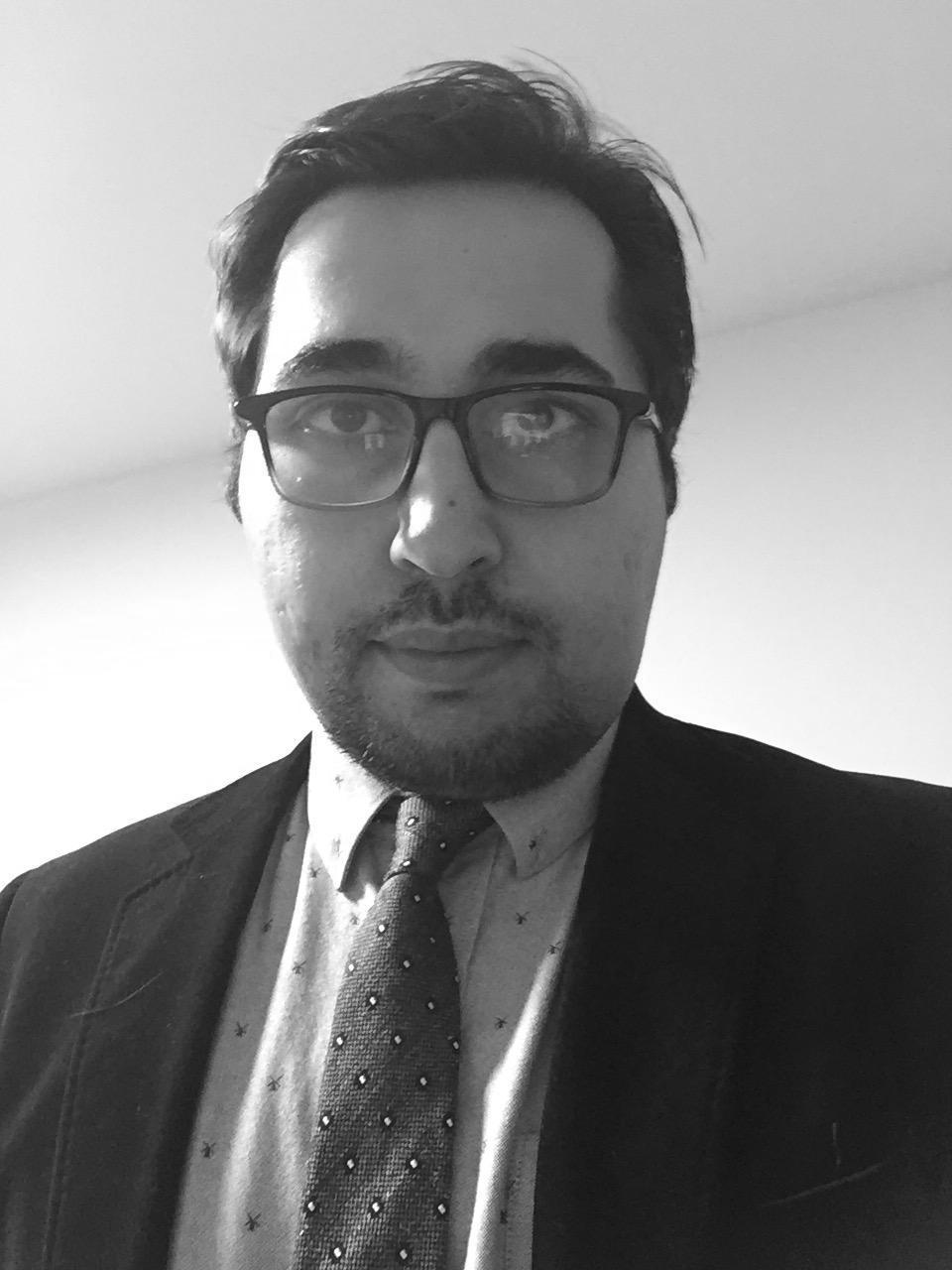}}]{Mohammadreza Soltani} is currently a postdoctoral associate in the Department of Electrical and Computer Engineering at Duke University. He received his Ph.D. degree from Iowa State University in 2019 in Electrical Engineering. He has two master's degrees in Electrical Engineering and Telecommunication Engineering with a minor in Mathematics. Mohammadreza's research interest lies in the intersection of signal processing, machine learning, and numerical optimization. His recent projects include neural architecture search, radar signal processing using machine learning techniques, meta-material design using deep learning. 
\end{IEEEbiography}

\begin{IEEEbiography}[{\includegraphics[width=1in,height=1.25in,clip,keepaspectratio]{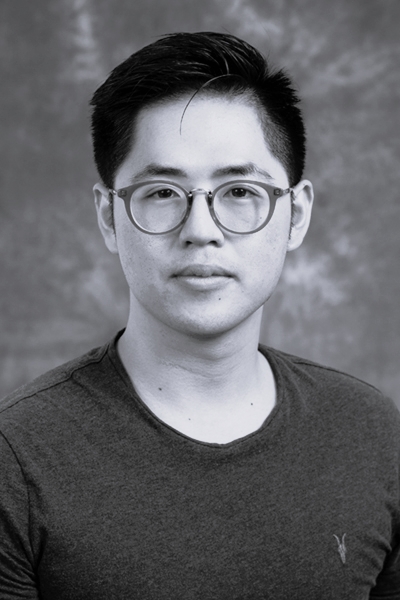}}]{Juncheng Dong} is a master student of Computer Science at Duke University at where he received the Dean's Research Award. Under supervision of Prof.Vahid Tarokh, his research interest includes machine learning, representation learning, reinforcement learning, etc. Before joining Duke University, he studied Computer Science and Mathematics at University of California - San Diego(UCSD). Before UCSD, he graduated from NanYang Model High School at Shanghai, China.
\end{IEEEbiography}

\begin{IEEEbiography}[{\includegraphics[width=1in,height=1.25in,clip,keepaspectratio]{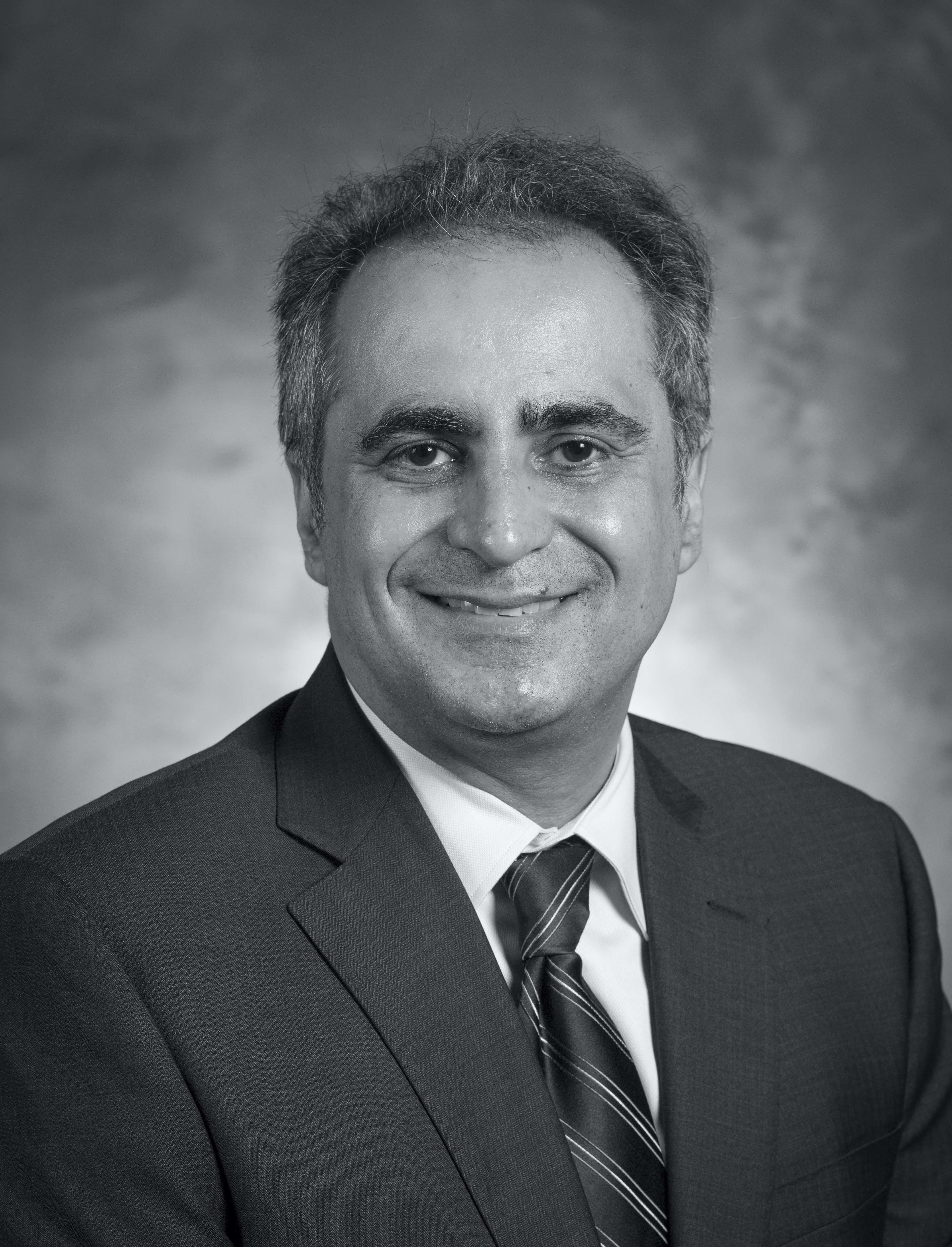}}]{Vahid Tarokh} worked at AT\&T Labs-Research until 2000. From 2000-2002, he was an Associate Professor at Massachusetts Institute of Technology (MIT). In 2002, he joined Harvard University as a Hammond Vinton Hayes Senior Fellow of Electrical Engineering and Perkins Professor of Applied Mathematics. He joined joined Duke University in Jan 2018, as the Rhodes Family Professor of Electrical and Computer Engineering, Computer Science, and Mathematics and Bass Connections Endowed Professor. He was also a Gordon Moore Distinguished Research Fellow at Caltech in 2018. Since Jan 2019, he has also been named as a Microsoft Data Science Investigator at Duke University.
\end{IEEEbiography}

\EOD

\end{document}